\def\eqref#1{equation~\ref{#1}}
\def\ceil#1{\lceil #1 \rceil}
\def\1{\bm{1}}
\def\mA{{\bm{A}}}
\def\mB{{\bm{B}}}
\def\mD{{\bm{D}}}
\def\mE{{\bm{E}}}
\def\mH{{\bm{H}}}
\def\mI{{\bm{I}}}
\def\mL{{\bm{L}}}
\DeclareMathAlphabet{\mathsfit}{\encodingdefault}{\sfdefault}{m}{sl}
\SetMathAlphabet{\mathsfit}{bold}{\encodingdefault}{\sfdefault}{bx}{n}
\let\ab\allowbreak
\newcommand{\algname}[1]{{\color{black}\small \sf #1}}
\definecolor{bgcolor}{rgb}{0.76,0.88,0.50}
\newcommand{\rafal}[1]{\todo[inline]{\textbf{Rafa\l: }#1}}
\newcommand{\ts}{\textsuperscript}
\newcommand{\Lpm}{L_{\pm}}
\newcommand{\mLpm}{\mL_{\pm}}
\newcommand{\ProbArg}[1]{\Prob\left(#1\right)}
\newcommand{\norm}[1]{\left\| #1 \right\|}
\newcommand{\sqnorm}[1]{\left\| #1 \right\|^2}
\newcommand{\inp}[2]{\left\langle#1,#2\right\rangle} 
\newcommand{\cC}{\mathcal{C}}
\newcommand{\cL}{\mathcal{L}}
\newcommand{\cN}{\mathcal{N}}
\newcommand{\cO}{\mathcal{O}}
\newcommand{\cQ}{\mathcal{Q}}
\newcommand{\del}[1]{}
\newcommand{\R}{\mathbb{R}} 
\newcommand{\eqdef}{:=}
\newcommand{\Prob}{\mathbf{Prob}} 
\newcommand{\Exp}[1]{{\rm E}\left[#1\right]}
\newcommand{\ExpCond}[2]{{\rm E}\left[\left.#1\right\vert#2\right]}
\DeclareMathOperator{\Diag}{Diag}       
\newtheorem{assumption}{Assumption}
\newtheorem{lemma}{Lemma}
\newtheorem{theorem}{Theorem}
\newtheorem{example}{Example}
\newtheorem{definition}{Definition}
\theoremstyle{plain}
\theoremstyle{definition}
\newcommand{\nfixt}{\nabla f_i(x^t)}
\newcommand{\nfixtpo}{\nabla f_i(x^{t+1})}
\newcommand{\sumin}{\sum\limits_{i=1}^n}
\newcommand{\suminn}{\frac{1}{n}\sum\limits_{i=1}^n}
\newcommand{\finf}{f^{\text{inf}}}
\title{Permutation Compressors for Provably Faster Distributed Nonconvex Optimization}
\author{Rafa\l{} Szlendak\thanks{The work of Rafa\l{} Szlendak was performed during a Summer research internship in the {\em Optimization and Machine Learning Lab} at KAUST led by Peter Richt\'{a}rik. Rafa\l{} Szlendak is an undergraduate student at the University of Warwick, United Kingdom.} \\
KAUST   \\
Saudi Arabia \\
\And
Alexander Tyurin\\
KAUST \\
Saudi Arabia \\
\And
Peter Richt\'{a}rik  \\
KAUST \\
Saudi Arabia \\
}
\begin{document}


\maketitle

\begin{abstract}
We study the \algname{MARINA} method of \cite{MARINA} -- the current state-of-the-art distributed non-convex optimization method in terms of theoretical communication complexity. Theoretical superiority of this method can be largely attributed to two sources: the use of a carefully engineered biased stochastic gradient estimator, which leads to a reduction in the number of communication rounds, and  the reliance on
 {\em independent} stochastic communication compression operators, which leads to a reduction in the number of  transmitted bits within each communication round. In this paper we  i) extend the theory of \algname{MARINA} to support a much wider class of potentially {\em correlated} compressors, extending the reach of the method beyond the classical independent compressors setting,  ii) show that a new quantity, for which we coin the name {\em Hessian variance}, allows us to significantly refine the original analysis of \algname{MARINA} without any additional assumptions, and iii) identify a special class of correlated compressors based on the idea of {\em random  permutations}, for which we coin the term Perm$K$, the use of which leads to  $O(\sqrt{n})$ (resp.\ $O(1 + d/\sqrt{n})$) improvement in the theoretical communication complexity of \algname{MARINA} in the low Hessian variance regime when $d\geq n$ (resp.\ $d \leq n$), where $n$ is the number of workers and $d$ is the number of parameters describing the model we are learning. We corroborate our theoretical results with carefully engineered synthetic experiments with minimizing the average of nonconvex quadratics, and on autoencoder training with the MNIST dataset.
\end{abstract}

\section{Introduction}



The practice of modern supervised learning relies on highly sophisticated, high dimensional and data hungry deep neural network models \citep{Transformer2017, GPT3} which need to be trained on specialized hardware providing fast distributed and parallel processing. Training of such models is typically performed using elaborate systems relying on specialized distributed stochastic gradient methods \citep{MARINA}. In distributed learning, communication among the compute nodes is typically a key bottleneck of the training system, and for this reason it is necessary to employ strategies alleviating the communication burden.

\subsection{The problem and assumptions} 

Motivated by the need to design provably communication efficient {\em distributed stochastic gradient methods} in the {\em nonconvex} regime, in this paper we consider the optimization problem
\begin{equation} \label{eq:main} \textstyle \min \limits_{x\in \R^d} \left[ f(x)\eqdef \frac{1}{n} \sum \limits_{i=1}^n f_i(x)\right],\end{equation}
where $n$ is the number of workers/machines/nodes/devices working in parallel, and $f_i:\R^d\to \R$ is a (potentially {\em nonconvex}) function representing the loss of the model parameterized by weights $x\in \R^d$ on training data stored on machine $i$. 

While we do {\em  not} assume the functions $\{f_i\}$ to be convex, we rely on their differentiability, and on the well-posedness of problem (\ref{eq:main}):
\begin{assumption} \label{ass:diff} The functions $f_1,\dots,f_n: \R^d\to \R$ are differentiable. Moreover, $f$ is lower bounded, i.e., there exists $f^{\inf} \in \R$ such that $ f(x) \geq f^{\inf}$ for all $x \in \R^d$. 
\end{assumption}


We are interested in  finding an approximately stationary point of the nonconvex problem (\ref{eq:main}). That is, we wish to identify a (random) vector $\hat{x}\in \R^d$ such that
\begin{equation}\label{eq:stationary}\Exp{\norm{ \nabla f(\hat{x})}^2} \leq \varepsilon\end{equation} while ensuring that the  volume of communication between the $n$ workers and the server is as small as possible. Without the lower boundedness assumption there might not be a point with a small gradient (e.g., think of $f$ being linear), which would render problem (\ref{eq:stationary}) unsolvable. However, lower boundedness ensures that the problem is well posed. Besides Assumption~\ref{ass:diff}, we rely on the following smoothness assumption: 
\begin{assumption} \label{as:L_+}There exists a constant $L_+>0$ such that $\frac{1}{n} \sum_{i=1}^n \norm{\nabla f_i(x) - \nabla f_i(y)}^2 \leq L^2_{+} \norm{x-y}^2$ for all $x,y\in \R^d.$ To avoid ambiguity, let $L_+$ be the smallest such number.
\end{assumption}

While this is a somewhat stronger assumption than mere $L_{-}$-Lipschitz continuity of the gradient  of $f$ (the latter follows from the former by Jensen's inequality and we have $L_{-}\leq L_+$), it is weaker than $L_i$-Lipschitz continuity of the gradient of the functions $f_i$ (the former follows from the latter with $L_+^2 \leq \frac{1}{n}\sum_i L_i^2$). So, this is still a reasonably weak assumption.

%


\subsection{A brief overview of the state of the art} \label{sec:overview}
To the best of our knowledge, the state-of-the-art distributed method for finding a point $\hat{x}$ satisfying (\ref{eq:stationary}) for the nonconvex problem (\ref{eq:main}) in terms of the {\em theoretical communication complexity}\footnote{For the purposes of this paper, by {\em communication complexity} we mean the product of the number of communication rounds sufficient to find $\hat{x}$ satisfying (\ref{eq:stationary}), and a suitably defined measure of the volume of communication performed in each round. As standard in the literature, we assume that the workers-to-server communication is the key bottleneck, and hence we do not count server-to-worker communication. For more details about this highly adopted and studied setup, see Appendix~\ref{section:comm_model}.} is the \algname{MARINA} method of \citet{MARINA}. \algname{MARINA} relies on {\em worker-to-server  communication compression}, and its power resides in the construction of a carefully designed sequence of  {\em biased} gradient estimators  which help the method obtain its superior communication complexity. 
The method uses  {\em randomized compression operators} $\cC_i:\R^d\to \R^d$ to compress messages (gradient differences) at the workers $i\in \{1,2,\dots,n\}$ before they are communicated to the server. It is assumed that these operators are unbiased, i.e., $\Exp{\cC_i(a)}=a$ for all $a\in \R^d$, and that their variance is bounded as $$\Exp{\norm{\cC_i(a)-a}^2} \leq \omega \norm{a}^2$$ for all $a\in \R^d$ and some $\omega\geq 0$. For convenience, let 
$\mathbb{U}(\omega)$ be the class of such compressors. A key assumption in the analysis of \algname{MARINA} is the {\em independence} of the compressors $\{\cC_i\}_{i=1}^n$. 

 In particular, \algname{MARINA} solves the problem (\ref{eq:main})--(\ref{eq:stationary}) in 
  $$\textstyle T = \frac{2\Delta^0}{\varepsilon} \left( L_{-} + L_{+} \sqrt{\frac{1-p}{p} \frac{\omega}{n}} \right)$$ 
 communication rounds\footnote{\citet{MARINA} {\em present} their result with $L_{-}$ replaced by the larger quantity $L_{+}$. However, after inspecting their proof, it is clear that they proved the improved rate we attribute to them here, and merely used the bound $L_{-} \leq L_{+}$ at the end for convenience of presentation only.}, where $\Delta^0 \eqdef f(x^0)-f^{\inf}$, $x^0\in \R^d$ is the initial iterate, $p \in (0,1]$ is a parameter defining the probability with which full gradients of the local functions $\{f_i\}$ are communicated to the server, $L_{-} > 0$ is the Lipschitz constant of the gradient of $f$, and $L_{+} \geq L_{-}$ is a certain smoothness constant associated with the functions $\{f_i\}$.  
 
 In each iteration of \algname{MARINA},  all workers send (at most) $p d + (1-p)\zeta$ floats to the server in expectation, where $\zeta \eqdef \max_i \sup_{v\in \R^d}{\rm size}(\cC_i(v))$, where ${\rm size}(\cC_i(v))$ is the size of the message $v$ compressed by compressor $\cC_i$. For an uncompressed vector $v$ we have ${\rm size}(v)=d$ in the worst case, and if $\cC_i$ is the Rand$K$ sparsifier, then ${\rm size}(\cC_i(v))=K$. 
 Putting the above together, the communication complexity of \algname{MARINA} is $T (pd + (1-p)\zeta)$, i.e., the product of the number of communication rounds and the communication cost of each round. See Section~\ref{section:marina_pseudocode} for more details on the method and its theoretical properties.
 
An alternative to the application of unbiased compressors is the practice of applying contractive compressors, such as Top$K$~\citep{Alistarh-EF-NIPS2018}, together with an error feedback mechanism~\citep{Seide2014, Stich-EF-NIPS2018, beznosikov2020biased}. However, this approach is not competitive in theoretical communication complexity with \algname{MARINA}; see Appendix~\ref{section:EF} for details.

\subsection{Summary of contributions}

{\bf (a) Correlated and permutation compressors.} We {\em generalize} the analysis of \algname{MARINA} {\em beyond independence} by supporting arbitrary unbiased compressors, including compressors that are {\em correlated}.  
In particular, we construct new compressors based on the idea of a {\em random permutation} (we called them Perm$K$) which provably reduce the variance caused by compression beyond what independent compressors can achieve. The properties of our compressors are captured by two quantities, $A\geq B \geq 0$, through a new inequality (which we call ``AB inequality")  bounding the variance of the {\em aggregated} (as opposed to individual) compressed message.

{\bf (b) Refined analysis through the new notion of Hessian variance.} We {\em refine} the analysis of \algname{MARINA} by identifying a new quantity, for which we coin the name {\em Hessian variance}, which plays an important role in our sharper analysis. To the best of our knowledge, Hessian variance is a new quantity proposed in this work and not used in optimization before. This quantity is well defined under the same assumptions as those used in the analysis of \algname{MARINA} by \citet{MARINA}. 

{\bf (c) Improved communication complexity results.} We prove  iteration complexity and communication complexity results for \algname{MARINA}, for smooth nonconvex (Theorem~\ref{theorem:AB}) and   smooth Polyak-\L ojasiewicz\footnote{The P\L{} analysis is included in Appendix~\ref{section:PL}.} (Theorem~\ref{theorem:AB_PL}) functions. Our results hold for all unbiased compression operators, including the standard independent but also all {\em correlated} compressors. Most importantly, we show that in the low Hessian variance regime, and by using our Perm$K$ compressors, we can improve upon the current state-of-the-art communication complexity of \algname{MARINA} due to \citet{MARINA} by up to the factor  $\sqrt{n}$ in the $d\geq n$ case, and up to the factor $1 + d/\sqrt{n}$ in the $d\leq n$ case. The improvement factors degrade gracefully as Hessian variance grows, and in the worst case we recover the same complexity as those established by \citet{MARINA}.

{\bf (d) Experiments agree with our theory.} Our theoretical results lead to predictions which are corroborated through computational experiments. In particular, we perform proof-of-concept testing with carefully engineered synthetic experiments with minimizing the average of nonconvex quadratics, and also test on autoencoder training with the MNIST dataset.


\section{Beyond Independence: The Power of Correlated Compressors}  \label{sec:compressors}

As mentioned in the introduction, \algname{MARINA} was designed and analyzed to be used with compressors $\cC_i\in \mathbb{U}(\omega)$ that are sampled {\em independently} by the workers. For example, if the Rand$K$ sparsification operator is used by all workers, then each worker chooses the $K$ random coordinates to be communicated {\em independently} from the other workers.  This independence assumption is crucial for  \algname{MARINA} to achieve its superior theoretical properties. Indeed, without independence,  the rate would depend on $\omega$ instead\footnote{This is a consequence of the more general analysis from our paper; \citet{MARINA} do not consider the case of unbiased compressors without the independence assumption. } of $\nicefrac{\omega}{n}$, which would mean no improvement as the number $n$  of workers grows, which is problematic because  $\omega$ is typically very large\footnote{For example, in the case of  the Rand$K$ sparsification operator, $\omega=\nicefrac{d}{K}-1$. Since $K$ is typically chosen to be a constant, or a small percentage of $d$, we have $\omega= \cO(d)$, which is very large, and particularly so for overparameterized models.}. For this reason, independence is assumed in the analysis of virtually all distributed methods that use unbiased communication compression, including methods  designed for convex or strongly convex problems \citep{DCGD, DIANA, ADIANA, Artemis2020}.

In our work we first {\em generalize} the analysis of \algname{MARINA} {\em beyond independence}, which provably extends its  use to a much wider array of (still unbiased) compressors, some of  which have interesting theoretical properties and are useful in practice. 

\subsection{AB inequality: a tool for a more precise control of compression variance} 

We assume that all compressors $\{\cC_i\}_{i=1}^n$ are unbiased, and that there exist constants $A,B\geq 0$ for which the compressors satisfy a certain inequality, which we call ``AB inequality'', bounding the variance of  $\frac{1}{n}\sum_i \cC_i(a_i)$ as a stochastic estimator of  $\frac{1}{n}\sum_i a_i$.

\begin{assumption}[Unbiasedness]\label{eq:unbiased_compressors} The random operators $\cC_1,\dots,\cC_n:\R^d\to \R^d$ are unbiased, i.e.,  $\Exp{\cC_i(a)} =a$ for all $i\in \{1,2,\dots,n\}$ and all $a\in \R^d$. If these conditions are satisfied, we will write $\{\cC_i\}_{i=1}^n \in \mathbb{U}$.
\end{assumption}

\begin{assumption}[AB inequality]\label{ass:AB}  There exist constants $A,B\geq 0$ such that the random operators $\cC_1,\dots,\cC_n:\R^d\to \R^d$  satisfy the inequality
\begin{equation}\textstyle \label{eq:AB}\Exp{ \norm{ \frac{1}{n} \sum \limits_{i=1}^n \cC_i(a_i) - \frac{1}{n}\sum\limits_{i=1}^n a_i }^2 } \leq A \left(\frac{1}{n}\sum\limits_{i=1}^n \norm{a_i}^2\right) - B \norm{ \frac{1}{n}\sum\limits_{i=1}^n a_i }^2\end{equation}
for all $a_1,\dots,a_n\in \R^d$. If these conditions are satisfied, we will write $\{\cC_i\}_{i=1}^n \in \mathbb{U}(A,B)$.
\end{assumption}

It is easy to observe that whenever the AB inequality holds, it must necessarily be the case that $A\geq B$.   Indeed, if we fix nonzero $a\in \R^d$ and choose $a_i = a$ for all $i$, then the right hand side of the AB inequality is equal to $A-B$ while the left hand side  is nonnegative.

Our next observation is that whenever $\cC_i\in \mathbb{U}(\omega_i)$ for all $i\in \{1,2,\dots,n\}$, the AB inequality holds  without any assumption on the independence of the compressors. Furthermore, if independence is assumed, the $A$ constant is substantially improved. 

\begin{restatable}{lemma}{unbiased}
  \label{lem:bg97fd890d} 
  If $\cC_i\in \mathbb{U}(\omega_i)$ for $i\in \{1,2,\dots, n\}$, then $\{\cC_i\}_{i=1}^n \in \mathbb{U}(\max_i \omega_i,0)$. If we further assume that the compressors  are independent, then $\{\cC_i\}_{i=1}^n \in \mathbb{U}(\frac{1}{n}\max_i \omega_i,0)$.
\end{restatable}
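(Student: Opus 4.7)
The plan is to work with the centered random vectors $X_i \eqdef \cC_i(a_i) - a_i$ and bound the expected squared norm of $\frac{1}{n}\sum_i X_i$ under the two regimes separately. Since $\cC_i \in \mathbb{U}(\omega_i)$, we automatically have $\Exp{X_i} = 0$ and $\Exp{\|X_i\|^2} \leq \omega_i \|a_i\|^2 \leq (\max_j \omega_j) \|a_i\|^2$, which is the only distributional information about each $\cC_i$ we will use.

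For the first claim (no independence), I would apply convexity of the squared norm, i.e., $\norm{\frac{1}{n}\sum_i X_i}^2 \leq \frac{1}{n}\sum_i \|X_i\|^2$ (equivalent to Jensen's inequality for $\|\cdot\|^2$, or the power-mean inequality). Taking expectations and inserting the per-coordinate variance bound yields
$$\Exp{\norm{\tfrac{1}{n}\sum_i X_i}^2} \;\leq\; \tfrac{1}{n}\sum_i \Exp{\|X_i\|^2} \;\leq\; \tfrac{1}{n}\sum_i \omega_i \|a_i\|^2 \;\leq\; (\max_i \omega_i) \cdot \tfrac{1}{n}\sum_i \|a_i\|^2,$$
which establishes the AB inequality with $A = \max_i \omega_i$ and $B = 0$.

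For the second claim (independence), the cross terms vanish: expanding the square and using $\Exp{\lin{X_i, X_j}} = \lin{\Exp{X_i}, \Exp{X_j}} = 0$ for $i \neq j$ gives
$$\Exp{\norm{\tfrac{1}{n}\sum_i X_i}^2} \;=\; \tfrac{1}{n^2}\sum_i \Exp{\|X_i\|^2} \;\leq\; \tfrac{1}{n^2}\sum_i \omega_i \|a_i\|^2 \;\leq\; \tfrac{\max_i \omega_i}{n} \cdot \tfrac{1}{n}\sum_i \|a_i\|^2,$$
which yields the AB inequality with $A = \tfrac{1}{n}\max_i \omega_i$ and $B = 0$, as claimed.

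There is no real obstacle here — the proof is essentially two lines once the reduction to centered variables is in place. The only subtlety worth a short remark is that unbiasedness of each $\cC_i$ (Assumption~\ref{eq:unbiased_compressors}) is what both makes $\Exp{X_i} = 0$ and allows us to use $\Exp{\|\cC_i(a_i) - a_i\|^2} \leq \omega_i \|a_i\|^2$ directly as a variance bound; without it, cross terms in the independent case would not vanish and the factor $1/n$ improvement would be lost.
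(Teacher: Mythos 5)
Your proof is correct and follows essentially the same route as the paper's: Jensen's inequality on the centered variables for the general case, and cross-term cancellation via unbiasedness plus independence for the improved constant. No gaps.
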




In Table~\ref{tbl:compressors} we provide a list of several compressors that belong to the class $\mathbb{U}(A,B)$, and give values of the associated constants $A$ and $B$.

\subsection{Why correlation may help}
While in the two examples captured by Lemma~\ref{lem:bg97fd890d} we  had $B=0$, with a carefully crafted {\em dependence} between the compressors it is possible for $B$ to be positive, and even as large as $A$. Intuitively, other things equal (e.g., fixing $A$), we should want $B$ to be positive, and as large as possible, as the AB inequality says that in such a case the variance of $\frac{1}{n}\sum_i \cC_i(a_i)$ as a stochastic estimator of  $\frac{1}{n}\sum_i a_i$ is reduced more dramatically. This is a key intuition behind the usefulness of (appropriately) correlated compressors. We now provide an alternative point of view. Note that \begin{equation}\label{eq:AB-variant} \textstyle A \left(\frac{1}{n}\sum\limits_{i=1}^n \norm{a_i}^2\right) - B \norm{ \frac{1}{n}\sum\limits_{i=1}^n a_i }^2 = A \left [\left(1 -\frac{B}{A}\right) \left(\frac{1}{n}\sum \limits_{i=1}^n \norm{a_i}^2\right) + \frac{B}{A} {\rm Var}(a_1,\dots,a_n) \right], \end{equation}
where ${\rm Var}(a_1,\dots,a_n) \eqdef \frac{1}{n}\sum_{i=1}^n \norm{a_i - \bar{a}}^2$ is the variance of the vectors $\{a_i\}_{i=1}^n$ and $\bar{a}\eqdef \frac{1}{n}\sum_{i=1}^n a_i$ is their average. So, the AB inequality upper bounds the variance of $\frac{1}{n}\sum_i \cC_i(a_i)$ as $A$ times a particular convex combination of two quantities. Since the latter quantity is always smaller or equal to the former, and can be much smaller, we should prefer compressors which put as much weight on ${\rm Var}(a_1,\dots,a_n)$ as possible. 

\subsection{Input variance compressors}

Due to the above considerations, compressors for which $A=B$ are special, and their construction and theoretical properties are a key contribution of our work. Moreover, as we shall see in Section~\ref{sec:theory}, such compressors have favorable communication complexity properties. This leads to the following definition:

\begin{definition}[Input variance compressors] We say that a collection $\{\cC_i\}_{i=1}^n$ of unbiased operators form an {\em input variance} compressor system if the variance of $\frac{1}{n}\sum_i \cC_i(a_i)$ is controlled by a multiple of the variance of the input vectors $\{a_i\}_{i=1}^n$. That is, if there exists a constant $C\geq 0$ such that
\begin{equation}\textstyle \label{eq:IV}\Exp{ \norm{ \frac{1}{n} \sum \limits_{i=1}^n \cC_i(a_i) - \frac{1}{n}\sum\limits_{i=1}^n a_i }^2 } \leq C {\rm Var}(a_1,\dots,a_n)\end{equation}
for all $a_1,\dots,a_n\in \R^d$.  If these conditions are satisfied, we will write $\{\cC_i\}_{i=1}^n \in \mathbb{IV}(C)$.

\end{definition}

In view of (\ref{eq:AB-variant}), if $\{\cC_i\}_{i=1}^n \in \mathbb{U}(A,B)$ and $A=B$, then
$\{\cC_i\}_{i=1}^n \in \mathbb{IV}(A)$.

\begin{table}[t!]
\caption{
 Examples of compressors $\{\cC_i\}_{i=1}^n \in \mathbb{U}(A,B)$. See the appendix for many more.}
\label{tbl:compressors}
\begin{center}
\begin{tabular}{lcccc}
\multicolumn{1}{c}{Compressors}  & $A$ & $B$ & Calculation of $A,B$ & Reference \\ 
\hline \\
$\cC_i \in \mathbb{U}(\omega_i)$                                            & $\max_i \omega_i$ & $0$  & Lemma~\ref{lem:bg97fd890d} & standard \\
$\cC_i \in \mathbb{U}(\omega_i)$, independent                      & $\frac{1}{n}\max_i \omega_i$ & $0$ & Lemma~\ref{lem:bg97fd890d} & standard \\ 
 \rowcolor{bgcolor}  Perm$K$ ($d \geq n$);  Def \ref{def:PermK-1} & $1$ & $1$ & Theorem~\ref{thm:PermK-1} & {\bf new}  \\
 \rowcolor{bgcolor}  Perm$K$ ($d \leq n$);  Def \ref{def:PermK-2} & $1-\frac{n-d}{n-1}$ & $1-\frac{n-d}{n-1}$ & Theorem~\ref{thm:PermK-2} & {\bf new}  \\
\end{tabular}
\end{center}
\end{table}


\subsection{Perm$K$: permutation based sparsifiers} 

We now define two input variance compressors based on a {\em random permutation} construction.\footnote{More examples of input variance compressors are given in the appendix.}  The first compressor handles the $d\geq n$ case, and the second handles the $d \leq n$ case. For simplicity  of exposition, we assume that $d$ is divisible by $n$ in the first case, and that $n$ is divisible by $d$ in the second case.\footnote{The general situation is handled in Appendix~\ref{sec:general_examples}.} Since both these new compressors are sparsification operators, in an analogy with the established notation Rand$K$ and Top$K$ for sparsification, we will write Perm$K$ for our permutation-based sparsifiers. To keep the notation simple, we chose to include  simple variants which do not offer freedom in choosing $K$. Having said that, these simple compressors lead to state-of-the-art communication complexity results for \algname{MARINA}, and hence not much is lost by focusing on these examples. Let $e_i$ be the $i^{\rm th}$ standard unit basis vector in $\R^d$. That is, for any $x = (x_1,\dots,x_d)\in \R^d$ we have $x = \sum_i x_i e_i$.

\begin{definition}[Perm$K$ for $d \geq n$]
  \label{def:PermK-1}
Assume that $d \geq n$ and $d  = q n$, where $q\geq 1$ is an integer. Let $\pi = (\pi_1,\dots,\pi_d)$ be a random permutation of $\{1, \dots, d\}$. Then for all $x \in \R^d$ and each $i\in \{1,2,\dots,n\}$ we define
 \begin{equation}\label{eq:PermK-1} \textstyle \cC_i(x) \eqdef n \cdot \sum \limits_{j = q (i - 1) + 1}^{q i} x_{\pi_j} e_{\pi_j}.\end{equation}
\end{definition}

Note that $\cC_i$ is a sparsifier: we have $(\cC_i(x))_l = n x_l$ if $l \in \{\pi_j \;:\;  q (i-1)+1 \leq j \leq q i\}$ and $(\cC_i(x))_l = 0$ otherwise. So, $\|\cC_i(x)\|_0 \leq q \eqdef K$, which means that $\cC_i$ offers compression by the factor $n$. Note that we do not have flexibility to choose $K$; we have $K=q=d/n$. See Appendix~\ref{sec:implementation_details} for implementation details.

\begin{restatable}{theorem}{PermK}\label{thm:PermK-1}
The Perm$K$ compressors 
from Definition~\ref{def:PermK-1} are unbiased and belong to $\mathbb{IV}(1)$.
\end{restatable}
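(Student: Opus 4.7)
The plan is to prove both claims by exploiting the following key observation: for each coordinate index $l\in\{1,\ldots,d\}$, the block structure of the permutation guarantees that exactly one worker receives coordinate $l$. Concretely, define the (random) map $\sigma: \{1,\ldots,d\}\to\{1,\ldots,n\}$ by letting $\sigma(l)$ be the unique index $i$ such that $l \in \{\pi_{q(i-1)+1},\ldots,\pi_{qi}\}$. Since $\pi$ is a uniformly random permutation, by symmetry the marginal distribution of $\sigma(l)$ is uniform on $\{1,\ldots,n\}$ for every $l$.

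For unbiasedness, I would fix $i$, $l$, and $x\in\R^d$, and write $(\cC_i(x))_l = n\, x_l\, \mathbf{1}[\sigma(l)=i]$. Since $\Prob(\sigma(l)=i)=1/n$, this gives $\E[(\cC_i(x))_l] = x_l$, and hence $\E[\cC_i(x)]=x$.

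For the $\mathbb{IV}(1)$ inequality, the first step is to compute the aggregated estimator coordinate-by-coordinate. For each $l$, using the disjointness of the blocks,
\begin{equation*}
\left(\tfrac{1}{n}\sum_{i=1}^n \cC_i(a_i)\right)_l = \tfrac{1}{n}\sum_{i=1}^n n\,(a_i)_l\,\mathbf{1}[\sigma(l)=i] = (a_{\sigma(l)})_l.
\end{equation*}
Letting $\bar a \eqdef \tfrac{1}{n}\sum_i a_i$, the left-hand side of the $\mathbb{IV}(1)$ bound becomes
\begin{equation*}
\E\!\left[\sum_{l=1}^d \bigl((a_{\sigma(l)})_l - \bar a_l\bigr)^2\right] = \sum_{l=1}^d \E\bigl[\bigl((a_{\sigma(l)})_l - \bar a_l\bigr)^2\bigr].
\end{equation*}
The next step is to use the uniform marginal of $\sigma(l)$ to evaluate each inner expectation as $\tfrac{1}{n}\sum_{i=1}^n ((a_i)_l - \bar a_l)^2$. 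Summing over $l$ then gives exactly $\tfrac{1}{n}\sum_{i=1}^n \|a_i-\bar a\|^2 = \mathrm{Var}(a_1,\ldots,a_n)$, so the bound (\ref{eq:IV}) holds with $C=1$, in fact as an equality.

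There is no real obstacle: the proof is essentially a symmetry argument once one notices the identity $(\tfrac{1}{n}\sum_i \cC_i(a_i))_l = (a_{\sigma(l)})_l$. The only minor subtlety is justifying that the marginal distribution of $\sigma(l)$ is uniform on $\{1,\ldots,n\}$; this is immediate from the fact that $\pi$ is uniformly random and the blocks are of equal size $q=d/n$, so each coordinate is equally likely to land in any given block.
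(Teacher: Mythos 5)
Your proof is correct and rests on the same two facts as the paper's: each coordinate is assigned to exactly one worker (which the paper expresses as $\langle \cC_i(a_i),\cC_j(a_j)\rangle=0$ for $i\neq j$ and you express via the map $\sigma$), and the assignment is uniform over workers. Your coordinate-wise bookkeeping versus the paper's norm-expansion with vanishing cross terms is only a cosmetic difference; both yield the identity $\Exp{\|\frac{1}{n}\sum_i \cC_i(a_i)-\bar a\|^2}={\rm Var}(a_1,\dots,a_n)$ exactly.
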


In contrast with the collection of independent Rand$K$ sparsifiers, which satisfy the AB inequality with $A = \frac{d/K-1}{n}$ and $B=0$ (this follows from Lemma~\ref{lem:bg97fd890d} since $\omega_i = d/K-1$ for all $i$), Perm$K$ satisfies the AB inequality with $A=B=1$. While both are sparsifiers, the permutation construction behind Perm$K$ introduces a favorable correlation among the compressors: we have $\langle \cC_i(a_i), \cC_j(a_j)\rangle = 0$ for all $i\neq j$.

\begin{definition}[Perm$K$ for $n \geq d$]
  \label{def:PermK-2}
  Assume that $n \geq d,$ $n > 1$ and $n  = q d,$ where $q\geq 1$ is an integer. 
Define the multiset $S \eqdef \{1, \dots, 1, 2, \dots, 2, \dots, d, \dots, d\}$, where each number occurs precisely $q$ times. Let $\pi=(\pi_1,\dots,\pi_{n})$ be a random permutation of $S$. Then for all $x \in \R^d$ and each $i\in \{1,2,\dots,n\}$ we define  
 \begin{equation}\label{eq:PermK-2} \textstyle 
    \cC_i(x) \eqdef d x_{\pi_i}e_{\pi_i}.
\end{equation}
\end{definition}

Note that for each $i$, $\cC_i$ from Definition~\ref{def:PermK-2} {\em is} the Rand$1$ sparsifier, offering compression factor $d$.  However, the sparsifiers $\{\cC_i\}_{i=1}^n$ are {\em not} mutually independent. Note that, again, we do not have a choice\footnote{It is possible to provide a more general definition of Perm$K$ in the $n\geq d$ case, allowing for more freedom in choosing $K$. However, such compressors would lead to a worse communication complexity for \algname{MARINA} than the simple $K=1$ variant considered here.} of $K$ in Definition~\ref{def:PermK-2}: we have $K=1$.

\begin{restatable}{theorem}{PermKK}\label{thm:PermK-2}
The Perm$K$ compressors 
from Definition~\ref{def:PermK-2} are unbiased and belong to $\mathbb{IV}(A)$ with $A=1-\frac{n-d}{n-1}$.
\end{restatable}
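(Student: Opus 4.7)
}

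The plan is to first establish unbiasedness by a symmetry argument, and then directly compute the second moment of $Y \eqdef \tfrac{1}{n}\sum_i \cC_i(a_i)$ using the joint law of a uniformly random permutation of the multiset $S$. Everything reduces to knowing two probabilities: the marginal $\Prob(\pi_i=k)$ and the pairwise $\Prob(\pi_i=k,\pi_j=k)$ for $i\neq j$.

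\emph{Step 1 (Unbiasedness).} By symmetry of the uniform random permutation of the multiset $S$, each $\pi_i$ is marginally uniform on the labels: $\Prob(\pi_i=k)=q/n=1/d$ for every $k\in\{1,\dots,d\}$. Therefore $\Exp{\cC_i(x)}=\sum_{k=1}^d d\,x_k e_k\cdot(1/d)=x$, proving $\{\cC_i\}_{i=1}^n\in\mathbb{U}$.

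\emph{Step 2 (Bias/variance split).} Since $\Exp{Y}=\bar a$, we have $\Exp{\norm{Y-\bar a}^2}=\Exp{\norm{Y}^2}-\norm{\bar a}^2$, so it suffices to compute $\Exp{\norm{Y}^2}$. Introduce indicators $\mathbb{1}_{ik}\eqdef \mathbb{1}[\pi_i=k]$ and write coordinate-wise
\[
 Y_k \;=\; \tfrac{d}{n}\sum_{i=1}^n (a_i)_k\,\mathbb{1}_{ik}, \qquad
 \norm{Y}^2 \;=\; \tfrac{d^2}{n^2}\sum_{k=1}^d \sum_{i,j=1}^n (a_i)_k (a_j)_k \,\mathbb{1}_{ik}\mathbb{1}_{jk}.
\]

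\emph{Step 3 (Counting on random permutations of multisets).} By a standard counting argument for uniform permutations of $S$ (number of arrangements $n!/(q!)^d$; number fixing two chosen positions at the same label is $q(q-1)(n-2)!/(q!)^d$), one obtains
\[
 \Exp{\mathbb{1}_{ik}^2}=\tfrac{1}{d},\qquad \Exp{\mathbb{1}_{ik}\mathbb{1}_{jk}}=\tfrac{q(q-1)}{n(n-1)}=\tfrac{n-d}{d^2(n-1)}\quad (i\neq j).
\]
Substituting and using $\sum_{i\neq j}(a_i)_k(a_j)_k=\bigl(\sum_i(a_i)_k\bigr)^2-\sum_i(a_i)_k^2$, together with $\sum_i a_i=n\bar a$, I expect a clean collapse to
\[
 \Exp{\norm{Y}^2}\;=\;\tfrac{d-1}{n(n-1)}\sum_{i=1}^n\norm{a_i}^2 + \tfrac{n-d}{n-1}\norm{\bar a}^2.
\]

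\emph{Step 4 (Assembling the IV bound).} Subtracting $\norm{\bar a}^2$ and factoring,
\[
 \Exp{\norm{Y-\bar a}^2}\;=\;\tfrac{d-1}{n-1}\Bigl[\tfrac{1}{n}\sum_{i=1}^n\norm{a_i}^2 -\norm{\bar a}^2\Bigr]\;=\;\tfrac{d-1}{n-1}\,\mathrm{Var}(a_1,\dots,a_n).
\]
Since $\tfrac{d-1}{n-1}=1-\tfrac{n-d}{n-1}$, this gives exactly the claimed constant $A=1-\tfrac{n-d}{n-1}$, so $\{\cC_i\}_{i=1}^n\in\mathbb{IV}(A)$.

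\emph{Anticipated obstacle.} The only nontrivial point is Step~3: correctly deriving the pairwise joint probability $\Prob(\pi_i=\pi_j=k)$ for the multiset permutation (as opposed to a permutation of distinct elements). I would either argue combinatorially as above or use exchangeability of pairs $(\pi_i,\pi_j)$ to reduce to drawing two positions without replacement from $q$ copies of $k$ out of $n$ total entries. After this probability is in hand, the rest is algebraic simplification, and the cancellation to the clean factor $\tfrac{d-1}{n-1}$ provides a useful sanity check.
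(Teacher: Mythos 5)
Your proposal is correct and follows essentially the same route as the paper's proof: both reduce the claim to the marginal probability $\Prob(\pi_i=k)=1/d$ and the pairwise probability $\Prob(\pi_i=\pi_j=k)=\tfrac{q(q-1)}{n(n-1)}$ for the multiset permutation, expand the second moment of $\tfrac{1}{n}\sum_i\cC_i(a_i)$ into diagonal and cross terms, and collapse via $\sum_{i\neq j}\inp{a_i}{a_j}=\|\sum_i a_i\|^2-\sum_i\|a_i\|^2$ to the factor $\tfrac{d-1}{n-1}=1-\tfrac{n-d}{n-1}$. The only cosmetic difference is that you work coordinate-wise with indicators and state the conclusion directly in the $\mathbb{IV}$ form, whereas the paper computes $\Exp{\inp{\cC_i(a_i)}{\cC_j(a_j)}}$ by conditioning on $\{\pi_i=\pi_j\}$ and records the result as the AB inequality with $A=B$.
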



{\bf Combining Perm$K$ with quantization.} It is easy to show that if  $\{\cC_i\}_{i=1}^n \in \mathbb{U}(A,B)$, and $\cQ_i \in  \mathbb{U}(\omega_i)$ are chosen independently of $\{\cC_i\}_{i=1}^n$ (we do not require mutual independence of $\{\cQ_i\}$), then $\{\cC_i \circ \cQ_i\}_{i=1}^n \in \mathbb{U}((\max_i \omega_i +1)A,B)$ (see Lemma~\ref{lem:composition}). This allows us to combine our compression techniques with quantization  \citep{alistarh2017qsgd, Cnat}.

\section{Hessian Variance}

\begin{table}
\caption{
Value of $\Lpm^2$ in cases when $f_i(x)  = \phi(x) + \phi_i(x)$, where $\phi:\R^d\to \R$ is an arbitrary differentiable function and $\phi_i:\R^d \to \R$ is twice continuously differentiable. The matrices $\mA_i\in \R^{d\times d}$ are assumed (without loss of generality) to be symmetric. The matrix-valued function $\mL_{\pm}(x,y)$ is defined in Theorem~\ref{thm:second-order}.}
\label{tbl:D_can_be_small}
\begin{center}
\begin{tabular}{ cc | c}
 $\phi(x)$ & $\phi_i(x)$ & Hessian variance $\Lpm^2$ \\ 
\hline 
 any  & $0$ & $0$ \\
 any  & $b_i^\top x + c_i$ & $0$ \\
 \rowcolor{bgcolor}  0  & $\frac{1}{2} x^\top \mA_i x + b_i^\top x + c_i$ & $\lambda_{\max} \left( \frac{1}{n} \sum_{i=1}^n \mA_i^2  - \left(\frac{1}{n} \sum_{i=1}^n \mA_i\right)^2 \right)$ \\
 \rowcolor{bgcolor}  0  & smooth & $\sup \limits_{x, y \in \R^d, x\neq y}  \frac{(x-y)^\top  \mLpm(x,y)(x-y)}{\norm{x-y}^2}$ \\
\end{tabular}
\end{center}
\end{table} 

Working under the {\em same} assumptions on the problem  (\ref{eq:main})--(\ref{eq:stationary}) as \citet{MARINA} (i.e., Assumptions~\ref{ass:diff} and \ref{as:L_+}), in this paper we study the complexity of \algname{MARINA} under the influence of a new quantity, which we call {\em Hessian variance.} 

\begin{definition}[Hessian variance] \label{ass:HV} Let $\Lpm\geq 0$ be the smallest quantity such that
\begin{equation}\label{eq:HV} \textstyle \frac{1}{n} \sum \limits_{i=1}^n \norm{\nabla f_i(x) - \nabla f_i(y)}^2 - \norm{\nabla f(x) - \nabla f(y)}^2 \leq L^2_{\pm} \norm{x-y}^2, \quad \forall x,y\in \R^d. \end{equation}
We refer to the quantity $L_{\pm}^2$ by the name {\em Hessian variance.} 
\end{definition}

 Recall that in this paper we have so far mentioned four ``smoothness'' constants: $L_i$ (Lipschitz constant of $\nabla f_i$), $L_-$ (Lipschitz constant of $\nabla f$), $L_+$ (see Assumption~\ref{as:L_+}) and $\Lpm$ (Definition~\ref{ass:HV}). To avoid ambiguity, let all be defined as the smallest constants for which the defining inequalities hold. In case the defining inequality does not hold, the value is set to $+\infty$. This convention allows us to formulate the following result summarizing the relationships between these quantities.

\begin{restatable}{lemma}{ineq} \label{lem:08hhdf} $L_{-} \leq L_{+}$, $L_{-} \leq \frac{1}{n}\sum \limits_{i=1}^n L_i$, $L^2_{+}\leq \frac{1}{n}\sum\limits_{i=1}^n L_i^2$, and $L_{+}^2 - L_{-}^2\leq \Lpm^2 \leq L_{+}^2$.
\end{restatable}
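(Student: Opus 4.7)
}
The plan is to verify each of the four claimed inequalities by directly manipulating the defining quantities, since under the convention that each constant is the smallest one for which the defining inequality holds, all four quantities admit clean supremum representations:
$L_{-}^2 = \sup_{x\neq y} \|\nabla f(x)-\nabla f(y)\|^2/\|x-y\|^2$,
$L_{+}^2 = \sup_{x\neq y} \bigl(\tfrac{1}{n}\sum_i \|\nabla f_i(x)-\nabla f_i(y)\|^2\bigr)/\|x-y\|^2$,
$L_i^2 = \sup_{x\neq y} \|\nabla f_i(x)-\nabla f_i(y)\|^2/\|x-y\|^2$, and
$L_{\pm}^2$ is the supremum of the same ratio as $L_+^2$ minus the ratio defining $L_-^2$. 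This will reduce everything to elementary manipulations.

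First I would establish the three ``easy'' bounds. For $L_{-}\leq L_{+}$, apply Jensen's inequality (equivalently, $\|\tfrac1n\sum v_i\|^2\leq \tfrac1n\sum\|v_i\|^2$) to the vectors $v_i = \nabla f_i(x)-\nabla f_i(y)$; this shows that the inequality defining $L_+$ forces the inequality defining $L_-$ with the same constant. For $L_-\leq \tfrac{1}{n}\sum_i L_i$, start with $\nabla f(x)-\nabla f(y)=\tfrac{1}{n}\sum_i(\nabla f_i(x)-\nabla f_i(y))$, apply the triangle inequality, and use $\|\nabla f_i(x)-\nabla f_i(y)\|\leq L_i\|x-y\|$. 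For $L_+^2\leq \tfrac1n\sum_i L_i^2$, simply sum the per-$i$ bounds $\|\nabla f_i(x)-\nabla f_i(y)\|^2\leq L_i^2\|x-y\|^2$ and divide by $n$.

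Next I would handle the upper bound $L_{\pm}^2\leq L_+^2$. Since the right-hand term $\|\nabla f(x)-\nabla f(y)\|^2$ subtracted in \eqref{eq:HV} is nonnegative, the left-hand side of \eqref{eq:HV} is at most $\tfrac{1}{n}\sum_i\|\nabla f_i(x)-\nabla f_i(y)\|^2\leq L_+^2\|x-y\|^2$, so $L_+^2$ is a valid constant in \eqref{eq:HV}; by minimality, $L_\pm^2\leq L_+^2$.

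The main step, and the only one requiring a short argument, is $L_+^2 - L_-^2\leq L_\pm^2$. Here I would work from the supremum characterizations above. Writing $A(x,y)$ and $B(x,y)$ for the ratios defining $L_+^2$ and $L_-^2$ respectively, we have $L_\pm^2 = \sup_{x\neq y}(A(x,y)-B(x,y))$. For any $\varepsilon>0$, choose $(x_\varepsilon,y_\varepsilon)$ with $A(x_\varepsilon,y_\varepsilon)>L_+^2-\varepsilon$. Since $A(x_\varepsilon,y_\varepsilon)-B(x_\varepsilon,y_\varepsilon)\leq L_\pm^2$ and $B(x_\varepsilon,y_\varepsilon)\leq L_-^2$, we get $L_+^2-\varepsilon < A(x_\varepsilon,y_\varepsilon)\leq B(x_\varepsilon,y_\varepsilon)+L_\pm^2\leq L_-^2+L_\pm^2$; letting $\varepsilon\downarrow 0$ yields the claim. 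The only subtlety to be careful about is the $+\infty$ convention: if $L_+=+\infty$ and $L_-=+\infty$ simultaneously, the inequality is interpreted in the extended sense, and if $L_+=+\infty$ while $L_-<\infty$, the same $\varepsilon$-argument still produces arbitrarily large values of $A-B$, forcing $L_\pm=+\infty$ as well, so the inequality remains valid. This is the only step where I expect one has to pause and think, but it is essentially a one-line supremum manipulation.
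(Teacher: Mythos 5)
Your proposal is correct and follows essentially the same route as the paper: Jensen for $L_-\leq L_+$, the triangle inequality for $L_-\leq \frac{1}{n}\sum_i L_i$, summing the per-$i$ Lipschitz bounds for $L_+^2\leq\frac1n\sum_i L_i^2$, nonnegativity of $\|\nabla f(x)-\nabla f(y)\|^2$ for $L_\pm^2\leq L_+^2$, and the chain $\cL_+(x,y)\leq L_\pm^2\|x-y\|^2+\cL_-(x,y)\leq(L_-^2+L_\pm^2)\|x-y\|^2$ for the key inequality $L_+^2-L_-^2\leq L_\pm^2$. Your $\varepsilon$-near-maximizer phrasing of that last step is just a cosmetic repackaging of the paper's pointwise bound followed by minimality of $L_+$.
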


It follows that if $L_i$ is finite for all $i$, then $L_-, L_+$ and $\Lpm$ are all finite as well. Similarly, if $L_+$ is finite (i.e., if Assumption~\ref{as:L_+} holds),  then $L_-$ and $\Lpm$ are finite, and $L_{\pm}\leq L_+$. We are not aware of any prior use of this quantity in the analysis of any optimization methods. Importantly,
there are situations when $L_-$   is large, and yet the Hessian variance $\Lpm^2$ is small, or even zero. This is important as the improvements we obtain in our analysis of \algname{MARINA} are most pronounced in the regime when the Hessian variance is small.

\subsection{Hessian variance can be zero}

We now illustrate on a few examples that there are situations when the values of $L_-$ and  $L_i$ are  large and the Hessian variance is zero. The simplest such example is the identical functions regime. 

\begin{example}[Identical functions] Assume that $f_1=f_2=\dots=f_n$. Then $\Lpm=0$. \end{example}

This follows by observing that the left hand side in (\ref{eq:HV}) is zero. Note that while $\Lpm=0$, it is possible for  $L_-$ and $L_+$ to be arbitrarily large! Note that methods based on the Top$K$ compressor (including all error feedback methods) suffer in this regime. Indeed, \algname{EF21} in this simple scenario is the same method for any value of $n$, and hence can't possibly improve as $n$ grows. This is because when $a_i=a_j$ for all $i,j$, $\frac{1}{n}\sum_i {\rm Top}K(a_i) = {\rm Top}K(a_i)$. As the next example shows, Hessian variance is zero even if we perturb the local functions via {\em arbitrary} linear functions.

\begin{example}[Identical functions + arbitrary linear perturbation] Assume that $f_i(x) = \phi(x) + b_i^\top x + c_i$, for some differentiable function $\phi:\R^d\to \R$ and arbitrary $b_i\in \R^d$ and  $c_i\in \R$. Then $\Lpm=0$.
\end{example}
This follows by observing that the left hand side in (\ref{eq:HV}) is zero in this case as well. Note that in this example it is possible for the functions $\{f_i\}$ to have {\em arbitrarily different minimizers}. So, this example does {\em not} correspond to the overparameterized machine learning regime, and is in general challenging for standard methods.

\subsection{Second order characterization}

To get an insight into when the Hessian variance may be small but not necessarily zero, we establish a useful second order characterization.

\begin{restatable}{theorem}{secondorder} \label{thm:second-order}Assume that for each $i\in \{1,2,\dots,n\}$, the function $f_i$ is twice continuously differentiable. Fix any $x,y\in \R^d$ and define\footnote{Note that $\mH_i(x,y) $ is the average of the Hessians of $f_i$ on the line segment connecting $x$ and $y$.} \begin{equation} \label{eq:hfd-0f9y8gfd9-u8fd} \textstyle \mH_i(x,y) \eqdef \int_{0}^1 \nabla^2 f_i(x + t(y-x)) \; d t, \qquad \mH(x,y) \eqdef \frac{1}{n} \sum \limits_{i=1}^n \mH_i(x,y) .\end{equation}
Then the matrices $\mL_{i}(x,y)\eqdef  \mH_i^2 (x,y)$, $\mL_{-}(x,y)\eqdef \mH^2(x,y)$,
$\mL_{+}(x,y)\eqdef \frac{1}{n} \sum_{i=1}^n \mH_i^2 (x,y)$ and 
$\mLpm(x,y) \eqdef \mL_{+}(x,y) - \mL_{-}(x,y) $
are symmetric and positive semidefinite. Moreover,  \[ \textstyle L_i^2 = \sup \limits_{x, y \in \R^d, x\neq y}  \frac{(x-y)^\top  \mL_i(x,y)(x-y)}{\norm{x-y}^2}, \quad L_{-}^2 = \sup \limits_{x, y \in \R^d, x\neq y}  \frac{(x-y)^\top  \mL_{-}(x,y)(x-y)}{\norm{x-y}^2},\]
 \[ \textstyle L_{+}^2 = \sup \limits_{x, y \in \R^d, x\neq y}  \frac{(x-y)^\top  \mL_{+}(x,y)(x-y)}{\norm{x-y}^2}  , \quad \Lpm^2 =\sup \limits_{x, y \in \R^d, x\neq y}  \frac{(x-y)^\top  \mLpm(x,y)(x-y)}{\norm{x-y}^2}. \]
\end{restatable}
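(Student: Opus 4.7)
The plan is to reduce everything to the fundamental theorem of calculus identity
\[
\nabla f_i(x) - \nabla f_i(y) \;=\; \int_0^1 \nabla^2 f_i\bigl(x + t(y-x)\bigr)(x-y)\,dt \;=\; \mH_i(x,y)(x-y),
\]
which is valid because $f_i$ is twice continuously differentiable. Averaging over $i$ gives the analogous identity $\nabla f(x)-\nabla f(y) = \mH(x,y)(x-y)$. The matrices $\mH_i(x,y)$ and $\mH(x,y)$ are symmetric as integrals/averages of symmetric Hessians. Squaring the norms and using symmetry yields
\[
\|\nabla f_i(x)-\nabla f_i(y)\|^2 = (x-y)^\top \mH_i^2(x,y)(x-y), \qquad \|\nabla f(x)-\nabla f(y)\|^2 = (x-y)^\top \mH^2(x,y)(x-y),
\]
and averaging the first of these over $i$ expresses $\tfrac{1}{n}\sum_i\|\nabla f_i(x)-\nabla f_i(y)\|^2$ as $(x-y)^\top \mL_{+}(x,y)(x-y)$. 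Taking the difference with the second gives the corresponding quadratic form identity for $\mLpm(x,y)=\mL_{+}(x,y)-\mL_{-}(x,y)$.

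Next, I verify the claimed symmetry and positive semidefiniteness. Symmetry of $\mL_i(x,y)=\mH_i^2(x,y)$ and $\mL_{-}(x,y)=\mH^2(x,y)$ is immediate because $\mH_i(x,y)$ and $\mH(x,y)$ are symmetric; positive semidefiniteness follows because they are squares of symmetric matrices. Symmetry and PSD of $\mL_{+}(x,y)$ follow since it is a nonnegative combination of PSD matrices. For $\mLpm(x,y)$, symmetry is clear; for PSD I use the same chain of identities as above to write, for any $v\in\R^d$,
\[
v^\top \mLpm(x,y) v \;=\; \frac{1}{n}\sum_{i=1}^n \|\mH_i(x,y)v\|^2 - \Bigl\|\frac{1}{n}\sum_{i=1}^n \mH_i(x,y) v\Bigr\|^2 \;\geq\; 0,
\]
where the inequality is Jensen's inequality applied to the convex function $u\mapsto \|u\|^2$ (equivalently, the standard ``variance is nonnegative'' identity for the vectors $\{\mH_i(x,y)v\}_i$).

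Finally, I derive the four supremum formulas. By definition (with the convention introduced before Lemma~\ref{lem:08hhdf}), $L_i$ is the smallest constant for which $\|\nabla f_i(x)-\nabla f_i(y)\|\leq L_i\|x-y\|$ holds for all $x,y$, equivalently
\[
L_i^2 = \sup_{x\neq y}\frac{\|\nabla f_i(x)-\nabla f_i(y)\|^2}{\|x-y\|^2} = \sup_{x\neq y}\frac{(x-y)^\top \mL_i(x,y)(x-y)}{\|x-y\|^2},
\]
using the quadratic-form identity above. The same argument applied to $\nabla f$ yields the $L_{-}^2$ formula. For $L_{+}^2$ and $\Lpm^2$, I invoke Assumption~\ref{as:L_+} and Definition~\ref{ass:HV}, respectively, and again substitute the quadratic-form identities for $\tfrac{1}{n}\sum_i \|\nabla f_i(x)-\nabla f_i(y)\|^2$ and for this quantity minus $\|\nabla f(x)-\nabla f(y)\|^2$.

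\textbf{Expected main obstacle.} The only non-mechanical step is the PSD-ness of $\mLpm(x,y)$, since in general a matrix ``variance'' of the form $\tfrac{1}{n}\sum_i \mA_i^2 - \bigl(\tfrac{1}{n}\sum_i \mA_i\bigr)^2$ need not be handled by a one-line scalar argument; I will dispatch it by the vector Jensen argument above, which works pointwise in $v$ and hence gives PSD. Everything else is bookkeeping around the fundamental theorem of calculus and the definitions of the four constants.
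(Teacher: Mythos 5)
Your proposal is correct and follows essentially the same route as the paper: the fundamental theorem of calculus identity $\nabla f_i(x)-\nabla f_i(y)=\mH_i(x,y)(x-y)$, symmetry of the integrated Hessians, the resulting quadratic-form identities, and the smallest-constant convention to get the four suprema. Your pointwise-in-$v$ Jensen/variance argument for the positive semidefiniteness of $\mLpm(x,y)$ is just the quadratic-form evaluation of the paper's matrix identity $\mLpm(x,y)=\frac{1}{n}\sum_{i=1}^n\bigl(\mH_i(x,y)-\mH(x,y)\bigr)^2$, so there is no substantive difference.
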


While  $\Lpm^2$ is obviously well defined through Definition~\ref{ass:HV} even when the functions $\{f_i\}$ are {\em not} twice differentiable, the term ``Hessian variance'' comes from the interpretation of  $L_{\pm}^2$ in the case of quadratic functions.

\begin{example}[Quadratic functions]
\label{ex:quadratic_functions}
Let  $f_i(x)=\frac{1}{2} x^\top \mA_i x + b_i^\top x + c_i$, where $\mA_i\in \R^{d\times d}$ are symmetric.  Then $\Lpm^2 = \lambda_{\max} ( \frac{1}{n} \sum_{i=1}^n \mA_i^2  - \left(\frac{1}{n} \sum_{i=1}^n \mA_i)^2 \right)$, where $\lambda_{\max}(\cdot)$ denotes the largest eigenvalue.
\end{example}

Indeed, note that the matrix $\frac{1}{n} \sum_{i=1}^n \mA_i^2  - \left(\frac{1}{n} \sum_{i=1}^n \mA_i\right)^2$ can be interpreted as a matrix-valued variance of the Hessians $\mA_1,\dots,\mA_n$, and $\Lpm^2$ measures the size of this matrix in terms of its largest eigenvalue. 

See Table~\ref{tbl:D_can_be_small} for a summary of the examples mentioned above.  As we shall explain in Section~\ref{sec:theory}, the data/problem regime when the Hessian variance is small is of key importance to the improvements we obtain in this paper.



\section{Improved Iteration and Communication Complexity} \label{sec:theory}

The key contribution of our paper is a more general and more refined analysis of \algname{MARINA}. In particular, we i) extend the reach of \algname{MARINA} to the general class of unbiased and possibly correlated compressors $\{\cC_i\}_{i=1}^n \in \mathbb{U}(A,B)$ while ii) providing a more refined analysis in that we  take the Hessian variance $L_{\pm}^2$ into account. 

\begin{restatable}{theorem}{ab}
  \label{theorem:AB} Let Assumptions~\ref{ass:diff}, \ref{as:L_+}, \ref{eq:unbiased_compressors}  and
\ref{ass:AB} be satisfied. Let the stepsize in \algname{MARINA} be chosen as
$
  0<  \gamma \leq \frac{1}{M}
$, where $M=L_- + \sqrt{\frac{1-p}{p}\left((A - B)L_+^2 + BL_\pm^2\right)}$. Then after $T$ iterations, \algname{MARINA} finds a random point $\hat{x}^T\in \R^d$ for which
  $$
 \textstyle \Exp{\norm{ \nabla f(\hat x^T)}^2} \leq \frac{2 \Delta^0}{\gamma T}.
  $$
\end{restatable}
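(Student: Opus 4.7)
The plan is to introduce a Lyapunov function $\Phi^t \eqdef f(x^t) - f^{\inf} + \tfrac{\gamma}{2p}\|g^t - \nabla f(x^t)\|^2$, where $g^t$ is MARINA's gradient estimator, and establish the one-step contraction $\Exp{\Phi^{t+1}\mid\mathcal{F}^t}\leq \Phi^t - \tfrac{\gamma}{2}\|\nabla f(x^t)\|^2$ whenever $\gamma\leq 1/M$. Telescoping this and returning $\hat{x}^T$ sampled uniformly from $\{x^0,\dots,x^{T-1}\}$, with the standard MARINA initialization $g^0=\nabla f(x^0)$ so that $\Phi^0=\Delta^0$, would then give the stated bound.

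For the descent half, I would combine $L_-$-smoothness of $f$ (finite since $L_- \leq L_+$ by Lemma~\ref{lem:08hhdf}) with the update $x^{t+1}=x^t-\gamma g^t$ and the polarization identity $\langle a,b\rangle=\tfrac{1}{2}(\|a\|^2+\|b\|^2-\|a-b\|^2)$ to obtain
\[
f(x^{t+1}) \leq f(x^t) - \tfrac{\gamma}{2}\|\nabla f(x^t)\|^2 - \tfrac{\gamma}{2}(1-\gamma L_-)\|g^t\|^2 + \tfrac{\gamma}{2}\|g^t-\nabla f(x^t)\|^2.
\]

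For the variance half, since MARINA resets $g^{t+1}=\nabla f(x^{t+1})$ with probability $p$ and otherwise sets $g^{t+1}=g^t + \tfrac{1}{n}\sum_i \cC_i(\Delta_i^t)$ with $\Delta_i^t \eqdef \nabla f_i(x^{t+1})-\nabla f_i(x^t)$, unbiasedness (Assumption~\ref{eq:unbiased_compressors}) kills the cross term and yields
\[
\Exp{\|g^{t+1}-\nabla f(x^{t+1})\|^2 \mid \mathcal{F}^t} = (1-p)\|g^t-\nabla f(x^t)\|^2 + (1-p)\Exp{\bigl\|\tfrac{1}{n}\textstyle\sum_i \cC_i(\Delta_i^t) - \tfrac{1}{n}\sum_i\Delta_i^t\bigr\|^2 \mid \mathcal{F}^t}.
\]
Here is the crux, and the place where the Hessian variance enters: I would apply the AB inequality (Assumption~\ref{ass:AB}) and then algebraically rewrite its right-hand side as $(A-B)\cdot\tfrac{1}{n}\sum_i\|\Delta_i^t\|^2 + B\bigl(\tfrac{1}{n}\sum_i\|\Delta_i^t\|^2 - \|\nabla f(x^{t+1})-\nabla f(x^t)\|^2\bigr)$, so that Assumption~\ref{as:L_+} controls the first piece by $L_+^2\|x^{t+1}-x^t\|^2$ and Definition~\ref{ass:HV} controls the second by $L_\pm^2\|x^{t+1}-x^t\|^2$. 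Since $\|x^{t+1}-x^t\|^2=\gamma^2\|g^t\|^2$, this produces
\[
\Exp{\|g^{t+1}-\nabla f(x^{t+1})\|^2 \mid \mathcal{F}^t} \leq (1-p)\|g^t-\nabla f(x^t)\|^2 + (1-p)\bigl[(A-B)L_+^2 + B L_\pm^2\bigr]\gamma^2\|g^t\|^2.
\]

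Feeding both ingredients into $\Phi^{t+1}$, the coefficient on $\|g^t-\nabla f(x^t)\|^2$ collapses exactly to $\tfrac{\gamma}{2p}$, while the coefficient on $\|g^t\|^2$ is nonpositive precisely when $\gamma L_- + \gamma^2\tfrac{1-p}{p}[(A-B)L_+^2 + B L_\pm^2]\leq 1$, which the choice $\gamma\leq 1/M$ secures via the elementary bound $M^2 \geq L_- M + \tfrac{1-p}{p}[(A-B)L_+^2 + B L_\pm^2]$. The main intellectual obstacle is really just this algebraic decomposition of the AB bound into a piece controlled by $L_+^2$ and a piece controlled by the Hessian variance $L_\pm^2$; once identified, everything else closely parallels the original MARINA analysis, with only a careful verification that the compression randomness is independent of the past when taking conditional expectations.
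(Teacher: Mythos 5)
Your proposal is correct and follows essentially the same route as the paper's own proof: the identical Lyapunov function $\Phi^t = f(x^t)-f^{\inf}+\tfrac{\gamma}{2p}\|g^t-\nabla f(x^t)\|^2$, the same splitting of the AB bound into an $(A-B)L_+^2$ piece (via Assumption~\ref{as:L_+}) and a $BL_\pm^2$ piece (via Definition~\ref{ass:HV}), and the same descent lemma and telescoping. The only cosmetic differences are that the paper cites the PAGE descent lemma and the EF21 stepsize lemma where you derive the corresponding inequalities directly.
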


 In particular, by choosing the maximum stepsize allowed by Theorem~\ref{theorem:AB}, 
 \algname{MARINA} converges in $T$ communication rounds, where $T$ is shown in the first row Table~\ref{table:orcale_complexity}.  If in this result we replace $\Lpm^2$ by the coarse estimate $L_{\pm}^2 \leq L_+^2$, and further specialize to independent compressors satisfying  $\cC_i \in \mathbb{U}(\omega)$ for all $i\in \{1,2,\dots,n\}$, then since $\{\cC_i\}_{i=1}^n\in \mathbb{U}(\omega/n,0)$ (recall Lemma~\ref{lem:bg97fd890d}), our general rate specializes to the result of \citet{MARINA}, which we show in the second row of Table~\ref{table:orcale_complexity}. 
 
 {\em However, and this is a key finding of our work, in the regime when the Hessian variance $\Lpm^2$ is very small, the original result of \citet{MARINA} can be vastly suboptimal!} To show this, in Table~\ref{table:communication_complexity} we compare the {\em communication complexity}, i.e., the \# of communication rounds multiplied by the maximum \# of floats transmitted by a worker to the sever in a single communication round. We compare the communication complexity of \algname{MARINA} with the Rand$K$ and Perm$K$ compressors, and the state-of-the-art error-feedback method \algname{EF21} of \citet{EF21} with the Top$K$  compressor. In all cases we do not consider the communication complexity of the initial step equal to $\cO(d)$. In each case we optimized over the parameters of the methods (e.g., $p$ for \algname{MARINA} and $K$ in all cases; for details see Appendix~\ref{sec:appendix:complexity_bounds}). Our results for \algname{MARINA} with Perm$K$ are better than the competing methods (recall Lemma~\ref{lem:08hhdf}).

\begin{table}[t!]
  \centering
  \caption{  
  The number of communication rounds for  solving (\ref{eq:main})--(\ref{eq:stationary}) by \algname{MARINA} and \algname{EF21}.}
  \label{table:orcale_complexity}
  \begin{tabular}{|c|c|c|}
    \hline
    Method + Compressors & $T =$ \# Communication Rounds \\ \hline    \hline
  \rowcolor{bgcolor}   \begin{tabular}{@{}c@{}} \algname{MARINA} $\bigcap$ $\{\cC_i\}_{i=1}^n \in \mathbb{U}(A,B)$ \\
    { (this paper, 2021)}
     \end{tabular} & 
    $\cO\left(\frac{\Delta_0}{\varepsilon}\left(L_- + \sqrt{\frac{1-p}{p}\left((A - B)L_+^2 + BL_\pm^2\right)}\right)\right)$
    \\ \hline
    \begin{tabular}{@{}c@{}}\algname{MARINA} $\bigcap$ $\cC_i \in \mathbb{U}(\omega)$ and independent \\ \citep{MARINA}\end{tabular} &
    $\cO\left(\frac{\Delta_0}{\varepsilon}\left(L_- + \sqrt{\frac{1-p}{p}\frac{\omega}{n}}L_+\right)\right)$ 
    \\ \hline
    \begin{tabular}{@{}c@{}}\algname{EF21} $\bigcap$ $\cC_i$ are $\alpha$-contractive \\ \citep{EF21}\end{tabular} & $\cO\left(\frac{\Delta_0}{\varepsilon}\left(L_- + \left(\frac{1 + \sqrt{1 - \alpha}}{\alpha} - 1\right)L_+\right)\right)$ \\ \hline
  \end{tabular}
\end{table}

\newcommand{\scaleboxparam}{0.85}

\begin{table}[t!]
  \centering
  \caption{  
  Optimized communication complexity of \algname{MARINA} and \algname{EF21} with particular compressors. }
  \label{table:communication_complexity}
  \begin{tabular}{|c|c|c|}
    \hline 
    &\multicolumn{2}{|c|}{Communication Complexity} \\ \hline
    Method + Compressor & $d \geq n$ (Lemma~\ref{lemma:optimal_parameters_of_methods}) & $d \leq n$ (Lemma~\ref{lemma:optimal_parameters_of_methods_n_geq_d})\\ \hline \hline
 \rowcolor{bgcolor}     \algname{MARINA} $\bigcap$ Perm$K$  & 
    \scalebox{\scaleboxparam}{$\cO\left(\frac{\Delta_0}{\varepsilon}\min\left\{dL_{-},\frac{d}{n}L_- + \frac{d}{\sqrt{n}}L_\pm\right\}\right)$}
    & \scalebox{\scaleboxparam}{$\cO\left(\frac{\Delta_0}{\varepsilon}\min\left\{dL_{-},L_- + \frac{d}{\sqrt{n}}L_\pm\right\}\right)$}
    \\ \hline
    \algname{MARINA} $\bigcap$ Rand$K$  & 
    \scalebox{\scaleboxparam}{$\cO\left(\frac{\Delta_0}{\varepsilon}\min\left\{dL_{-},\frac{d}{\sqrt{n}}L_{+}\right\}\right)$}
    & \scalebox{\scaleboxparam}{$\cO\left(\frac{\Delta_0}{\varepsilon}\min\left\{dL_{-},L_- + \frac{d}{\sqrt{n}}L_{+}\right\}\right)$}
    \\ \hline
    \algname{EF21} $\bigcap$ Top$K$ & \scalebox{\scaleboxparam}{$\cO\left(\frac{\Delta_0}{\varepsilon}dL_-\right)$}
    & \scalebox{\scaleboxparam}{$\cO\left(\frac{\Delta_0}{\varepsilon}dL_-\right)$}  \\[0.5ex] \hline
  \end{tabular}
\end{table}


\subsection{Improvements in the ideal zero-Hessian-variance regime} To better understand the  improvements our analysis provides, let us consider the ideal regime characterized by zero Hessian variance: $\Lpm^2=0$. If we now use compressors $\{\cC_i\}_{i=1}^n \in \mathbb{U}(A,B)$ for which $A=B$, which is the case for Perm$K$, then the dependence on the potentially very large quantity $L_+^2$ is eliminated completely.

{\bf Big model case ($d\geq n$).}  In this case, and using the Perm$K$ compressor, \algname{MARINA} has communication complexity $\cO( L_- \Delta^0 \varepsilon^{-1}\nicefrac{d}{n} )$, while using the  Rand$K$ compressor, the communication complexity of \algname{MARINA} is no better than $\cO(L_- \Delta^0 \varepsilon^{-1} \nicefrac{d}{\sqrt{n}} )$. Hence, we get an improvement by {\em at least} the factor $\sqrt{n}$. Moreover, note that this is an $n\times$ improvement over  gradient descent (\algname{GD}) \citep{khaled2020better} and \algname{EF21}, both of which have communication complexity $\cO( L_-\Delta^0 \varepsilon^{-1}  d) $. In Appendix~\ref{sec:group_hessian_variance}, we discuss how we can get the same theoretical improvement even if $\Lpm^2 > 0.$


{\bf Big data case ($d\leq n$).}  In this case, and using the Perm$K$ compressor, \algname{MARINA} achieves communication complexity $\cO(L_- \Delta^0 \varepsilon^{-1})$, while using the  Rand$K$ compressor, the communication complexity of \algname{MARINA} is no better than $\cO(L_- \Delta^0 \varepsilon^{-1} (1+\nicefrac{d}{\sqrt{n}}) )$.  Hence, we get an improvement by {\em at least} the factor $1 + d/\sqrt{n}$. Moreover, note that this is a $d\times$ improvement over  gradient descent (\algname{GD}) and \algname{EF21}, both of which have communication complexity $\cO( L_-  \Delta^0\varepsilon^{-1}  d) $.

\section{Experiments} \label{sec:experiments}


We compare \algname{MARINA} using Rand$K$ and Perm$K$, and \algname{EF21} with Top$K$,  in two experiments. In the first experiment, we construct quadratic optimization tasks with different $L_\pm$ to capture the dependencies that our theory predicts. In the second experiment, we consider practical machine learning task MNIST \citep{lecun2010mnist} to support our assertions.
Each plot represents the dependence between the norm of gradient (or function value) and the total number of transmitted bits by a node.

\subsection{Testing theoretical predictions on a synthetic quadratic problem}

\label{section:experiment:quadratic}

To test the predictive power of our theory in a controlled environment, we first consider a synthetic (strongly convex) quadratic function $f=\frac{1}{n}\sum f_i$ composed of nonconvex quadratics $$ \textstyle f_i(x) \eqdef \frac{1}{2}x^\top \mA_i x - x^\top b_i,
$$
where $b_i \in \R^d,$ $\mA_i \in \R^{d \times d},$ and $\mA_i = \mA_i^\top$. We enforced that $f$ is $\lambda$--strongly convex, i.e., $\frac{1}{n}\sum_{i=1}^n \mA_i\succcurlyeq \lambda \mI $ for $\lambda > 0.$ We fix $\lambda = 1\mathrm{e}{-6}$, and dimension $d = 1000$ (see Figure~\ref{fig:project-permutation-compression-marina-quadratic-with-norms_best}). We then generated optimization tasks with the number of nodes $n \in \{10, 1000, 10000\}$ and $L_\pm \in \{0, 0.05, 0.1, 0.21, 0.91\}$. We take \algname{MARINA}'s and \algname{EF21}'s parameters prescribed by the theory and performed a grid search for the step sizes for each compressor by multiplying the theoretical ones with powers of two. For simplicity, we provide one plot for each compressor with the best convergence rate. 
First, we see that Perm$K$ outperforms Rand$K$, and their differences in the plots reproduce dependencies from Table~\ref{table:communication_complexity}. Moreover,
when $n \in \{1000, 10000\}$ and $\Lpm \leq 0.21$,  
\algname{EF21} with Top$K$ has worse performance than \algname{MARINA} with Perm$K$, while in heterogeneous regime, when $L_\pm = 0.91$, Top$K$ is superior except when $n = 10000$. 
See Appendix~\ref{section:extra_experiments} for detailed experiments.

\newcommand{\experimetscaption}[1]{Comparison of #1on synthetic quadratic optimization tasks. 
Each row corresponds to a fixed number of nodes; each column corresponds to a fixed noise scale. 
In the legends, we provide compressor names and fine-tuned multiplicity factors of step sizes relative to theoretical ones. 
Abbreviations: NS = noise scale. Axis $x$ represents the number of bits that every node has sent. Dimension $d = 1000.$}

\begin{figure}[h!]
 \centering
 \includegraphics[width=1.0\linewidth]{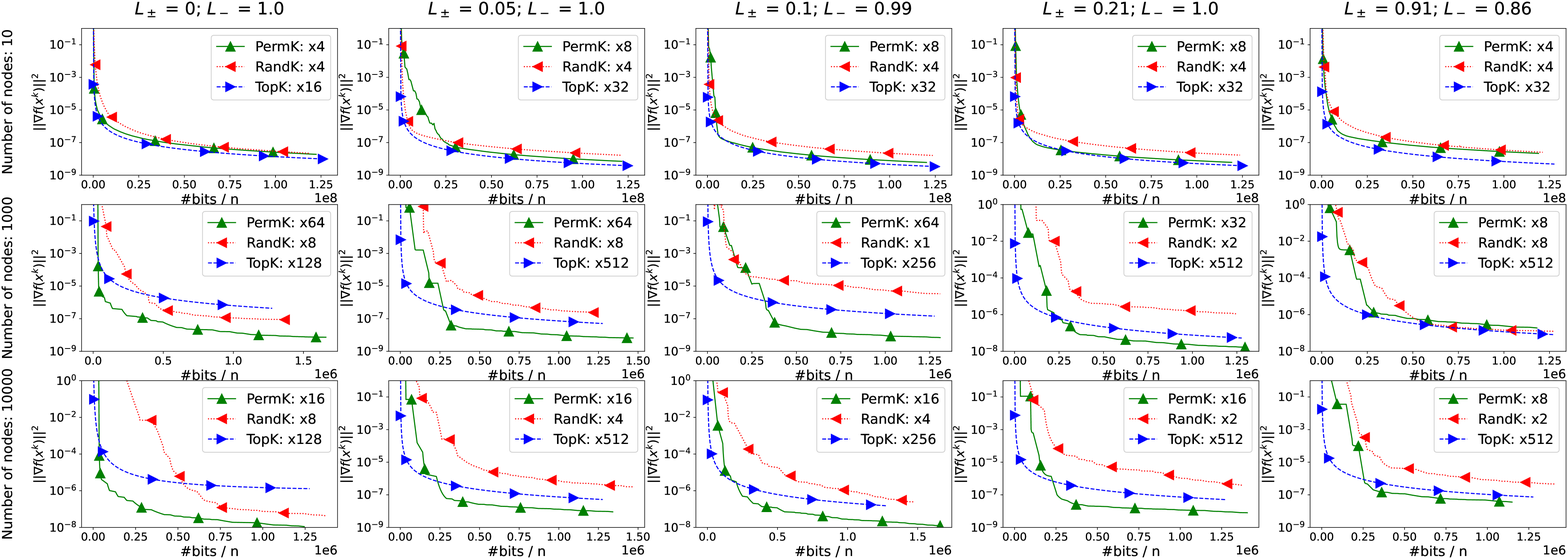}
\vspace*{-0.4cm}
\caption{Comparison of algorithms on synthetic quadratic optimization tasks with nonconvex $\{f_i\}$.}
 \label{fig:project-permutation-compression-marina-quadratic-with-norms_best}
\end{figure}

\subsection{Training an autoencoder with MNIST}

\newcommand{\experimetscaptionmnist}[1]{Comparison of #1on the encoding learning task for the MNIST dataset.
Each row corresponds to a fixed regularization parameter $\lambda$; 
each column corresponds to a fixed probability $\widehat{p}$.
In the legends, we provide compressor names and fine-tuned step sizes. 
Axis $x$ represents the number of bits that every node has sent. } 

\label{section:experiment:mnist}

Now we compare compressors from Section \ref{section:experiment:quadratic} on the  MNIST dataset \citep{lecun2010mnist}. 
Our current goal is to learn the linear autoencoder,
\begin{eqnarray*}
  \textstyle \min \limits_{\mD \in \R^{d_f \times d_e}, \mE \in \R^{d_e \times d_f}} \left[ f(\mD, \mE) \eqdef \frac{1}{N}\sum \limits_{i = 1}^N \norm{\mD \mE a_i - a_i}^2 \right] ,
\end{eqnarray*}
where $a_i \in \R^{d_f}$ are MNIST images, $d_f = 784$ is the number of features, 
$d_e = 16$ is the size of encoding space.
Thus the dimension of the problem $d = 25088,$
and compressors send at most $26$ floats in each communication round since we take $n = 1000.$
We use parameter $\widehat{p}$ to control the homogeneity of MNIST split among $n$ nodes:
if $\widehat{p} = 1$, then all nodes store the same data, and if $\widehat{p} = 0$, then nodes store different splits 
(see Appendix~\ref{section:extra_experiments:mnist}). 
In Figure~\ref{fig:project-permutation-compression-marina-mnist-auto_encoder-1000-nodes-prob-homog-P-reg-P-ef21-init-grad-xavier-normal-with-randk_best},
one plot for each compressor with the best convergence rate is provided for $\widehat{p} \in \{0, 0.5, 0.75, 0.9, 1.0\}.$
We choose parameters of algorithms prescribed by the theory except for the step sizes, where we performed a grid search as before. In all experiments, Perm$K$ outperforms Rand$K$. Moreover, we see that in  the more homogeneous regimes, when $\widehat{p} \in \{0.9, 1.0\}$, Perm$K$ converges faster than Top$K$. When $\widehat{p} = 0.75$, both compressors have almost the same performance. In the heterogenous regime, when $\widehat{p} \in \{0, 0.5\}$, Top$K$ is faster than Perm$K$; however, the difference between them is tiny compared to Rand$K$.

\vspace*{-0.2cm}
\begin{figure}[!h]
  \centering
  \includegraphics[width=0.95\linewidth]{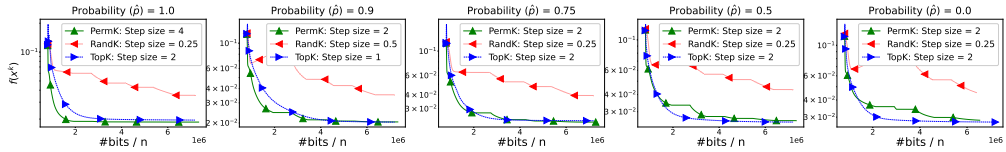}
  \vspace*{-0.3cm}
  \caption{Comparison of algorithms on the encoding learning task for the MNIST dataset.}
  \label{fig:project-permutation-compression-marina-mnist-auto_encoder-1000-nodes-prob-homog-P-reg-P-ef21-init-grad-xavier-normal-with-randk_best}
\end{figure}

\bibliography{iclr2022_conference}
\bibliographystyle{iclr2022_conference}

\newpage

\appendix

\part*{Appendix}

\tableofcontents

\newpage
\section{Extra Experiments}
\label{section:extra_experiments}

In this section, we provide more detailed experiments and explanations.

\subsection{Experiments setup}

\label{section:extra_experiments:setup}

All methods are implemented in Python 3.6 and run on a machine with 24 Intel(R) Xeon(R) Gold 6146 CPU @ 3.20GHz cores with 32-bit precision. Communication between master and nodes is emulated in one machine.

In all experiments, we compare \algname{MARINA} algorithm with Rand$K$ compressor and Perm$K$ compressor 
{and \algname{EF21} with Top$K$}
side-by-side. In Rand$K$ and Top$K$, we take $K = \ceil{d / n}$; we show in Lemma~\ref{lemma:optimal_parameters_of_methods}  that $K = \ceil{d / n}$ is optimal for Rand$K$. 
For Top$K$, the optimal rate predicted by the current state-of-the-art theory is obtained when $K = d$ (however, in practice, Top$K$ works much better when $K \ll d$).
Lastly, we have the pessimistic assumption that 
$\Lpm^2$ and $L_{+}^2$ are equal to their upper bound $\frac{1}{n}\sum_{i=1}^n L_i^2.$

\subsection{Experiment with quadratic optimization tasks: full description}

First, we present Algorithm~\ref{algorithm:matrix_generation} which is used in the experiments of Section~\ref{section:experiment:quadratic}. 
The algorithm is designed to generate sparse quadratic optimization tasks where 
we can control $\Lpm$ using the noise scale.
Furthermore, it can be seen that the procedure generates strongly convex quadratic optimization tasks; 
thus, all assumptions from this paper are fulfilled to use theoretical results.

\begin{algorithm}[!h]
  \caption{Quadratic optimization task generation}
  \begin{algorithmic}[1]
  \label{algorithm:matrix_generation}
  \STATE \textbf{Parameters:} number nodes $n$, dimension $d$, regularizer $\lambda$, and noise scale $s$.
  \FOR{$i = 1, \dots, n$}
  \STATE Generate random noises $\nu_i^s = 1 + s \xi_i^s$ and $\nu_i^b = s \xi_i^b,$ i.i.d. $\xi_i^s, \xi_i^b \sim \cN(0, 1)$
  \STATE Take vector $b_i = \frac{\nu_i^s}{4}(-1 + \nu_i^b, 0, \cdots, 0) \in \R^{d}$
  \STATE Take the initial tridiagonal matrix
  \[\mA_i = \frac{\nu_i^s}{4}\left( \begin{array}{cccc}
    2 & -1 & & 0\\
    -1 & \ddots & \ddots & \\
    & \ddots & \ddots & -1 \\
    0 & & -1 & 2 \end{array} \right) \in \R^{d \times d}\]
  \ENDFOR
  \STATE Take the mean of matrices $\mA = \frac{1}{n}\sum_{i=1}^n \mA_i$
  \STATE Find the minimum eigenvalue $\lambda_{\min}(\mA)$
  \FOR{$i = 1, \dots, n$}
  \STATE Update matrix $\mA_i = \mA_i + (\lambda - \lambda_{\min}(\mA)) \mI$
  \ENDFOR
  \STATE Take starting point $x^0 = (\sqrt{d}, 0, \cdots, 0)$
  \STATE \textbf{Output:} matrices $\mA_1, \cdots, \mA_n$, vectors $b_1, \cdots, b_n$, starting point $x^0$
  \end{algorithmic}
\end{algorithm}

Homogeneity of optimizations tasks is controlled by noise scale $s$; 
indeed, with noise scale equal to zero, all matrices are equal, and, 
by increasing noise scale, functions become less ``similar'' and $\Lpm^2$ grows. In Section~\ref{section:experiment:quadratic}, we take noise scales $s \in \{0, 0.05, 0.1, 0.2, 0.8\}.$

In Figure \ref{fig:project-permutation-compression-marina-quadratic-with-norms_best_pl}, 
we provide the same experiments as in Section~\ref{section:experiment:quadratic} but with $\lambda = 0.0001$ to capture dependencies under P\L\,condition.
Here, we also see that Perm$K$ has better performance when the number of nodes $n \geq 1000$ and $L_\pm \leq 0.21$.

\subsection{Comparison of \algname{MARINA} with Rand$K$ and \algname{MARINA} with Perm$K$ on quadratic optimization problems}

\label{section:extra_experiments:quad_prog:rand_k}

In this section, we provide detailed experiments from Section~\ref{section:experiment:quadratic} and comparisons of Rand$K$ and 
Perm$K$ with different step sizes 
(see Figure~\ref{fig:project-permutation-compression-marina-quadratic-with-norms} and 
Figure~\ref{fig:project-permutation-compression-marina-quadratic-with-norms-pl}). 
We omitted plots where algorithms diverged. 
We can see that in all experiments, Perm$K$ behaves better than Rand$K$ and tolerates larger step sizes. The improvement becomes more significant when $n$ increases.

\subsection{Comparison of \algname{EF21} with Top$K$ and \algname{MARINA} with Perm$K$ on quadratic optimization problems}

In this section, we provide detailed experiments from Section~\ref{section:experiment:quadratic} 
and comparisons of \algname{EF21} with Top$K$ and 
\algname{MARINA} with Perm$K$ with different step sizes 
(see Figure~\ref{fig:project-permutation-compression-marina-quadratic-with-norms_2} 
and Figure~\ref{fig:project-permutation-compression-marina-quadratic-with-norms_2-pl}).
We omitted plots where algorithms diverged. 
As we can see, when $L_{\pm} \leq 0.21$ and $n \geq 10000$, Perm$K$ converges faster than Top$K$. 
While in heterogeneous regimes, when $L_{\pm}$ is large, Top$K$ has better performance except when $n = 10000$.
When $n > d$, we see that Perm$K$ converges faster in all experiments.

\begin{figure}
  \centering
  \includegraphics[width=1.0\linewidth]{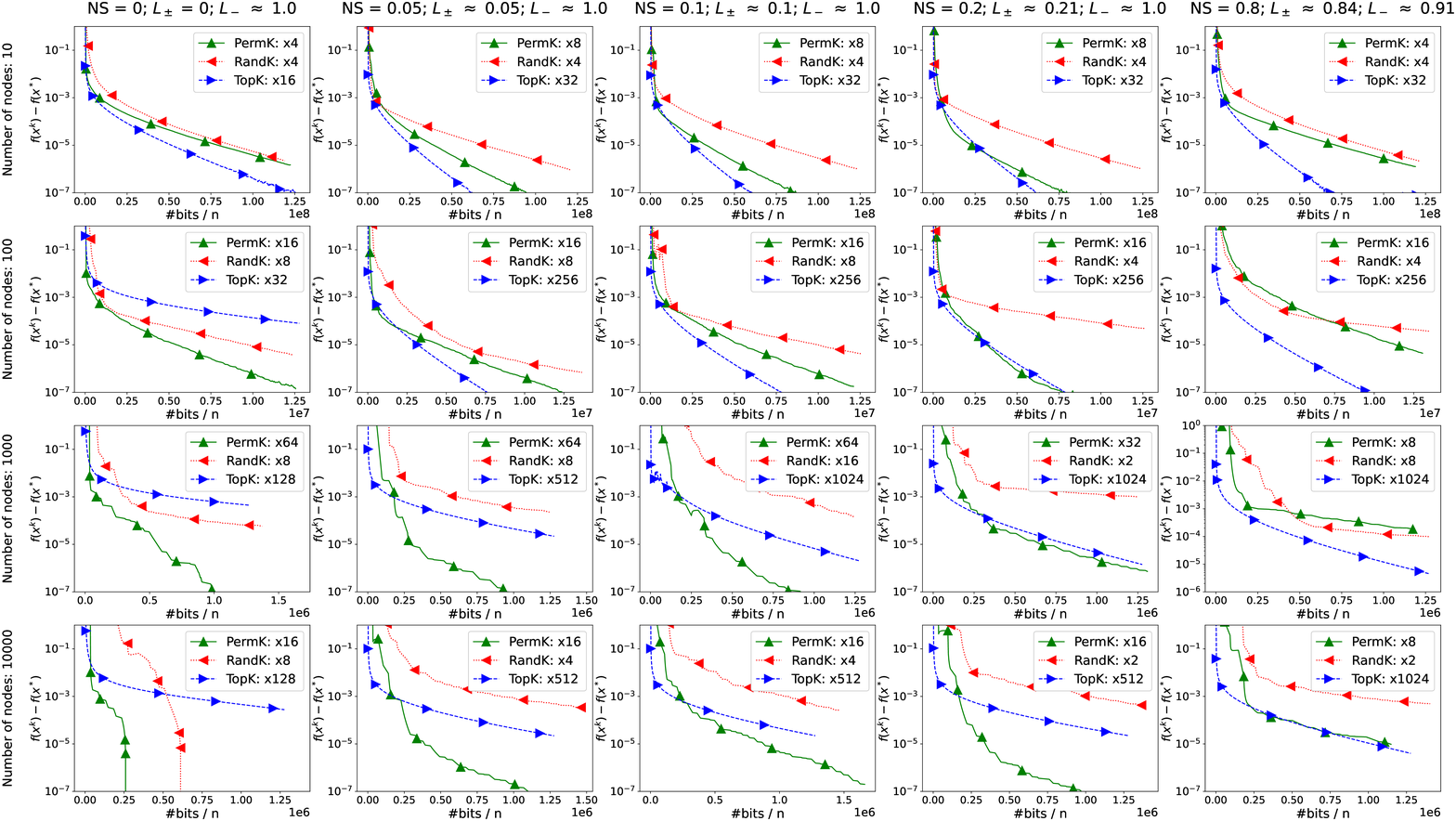}
  \vspace*{-1.0cm}
  \caption{\experimetscaption{algorithms under P\L\,condition }}
  \label{fig:project-permutation-compression-marina-quadratic-with-norms_best_pl}
\end{figure}

\begin{figure}
  \centering
  \includegraphics[width=0.99\linewidth]{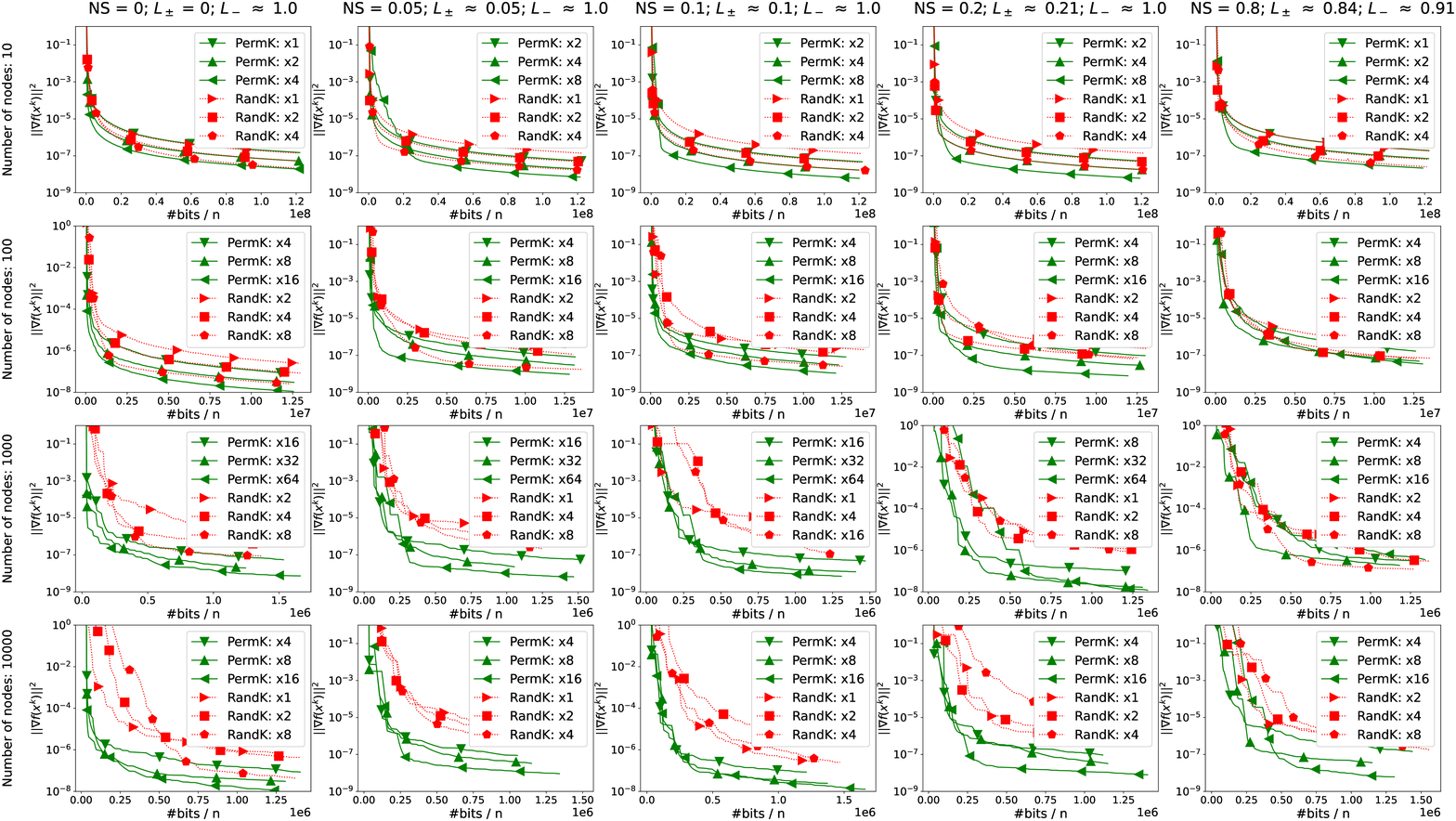}
  \vspace*{-0.5cm}
  \caption{\experimetscaption{Rand$K$ and Perm$K$}}
  \label{fig:project-permutation-compression-marina-quadratic-with-norms}

  \vspace{\floatsep}

  \centering
  \includegraphics[width=0.99\linewidth]{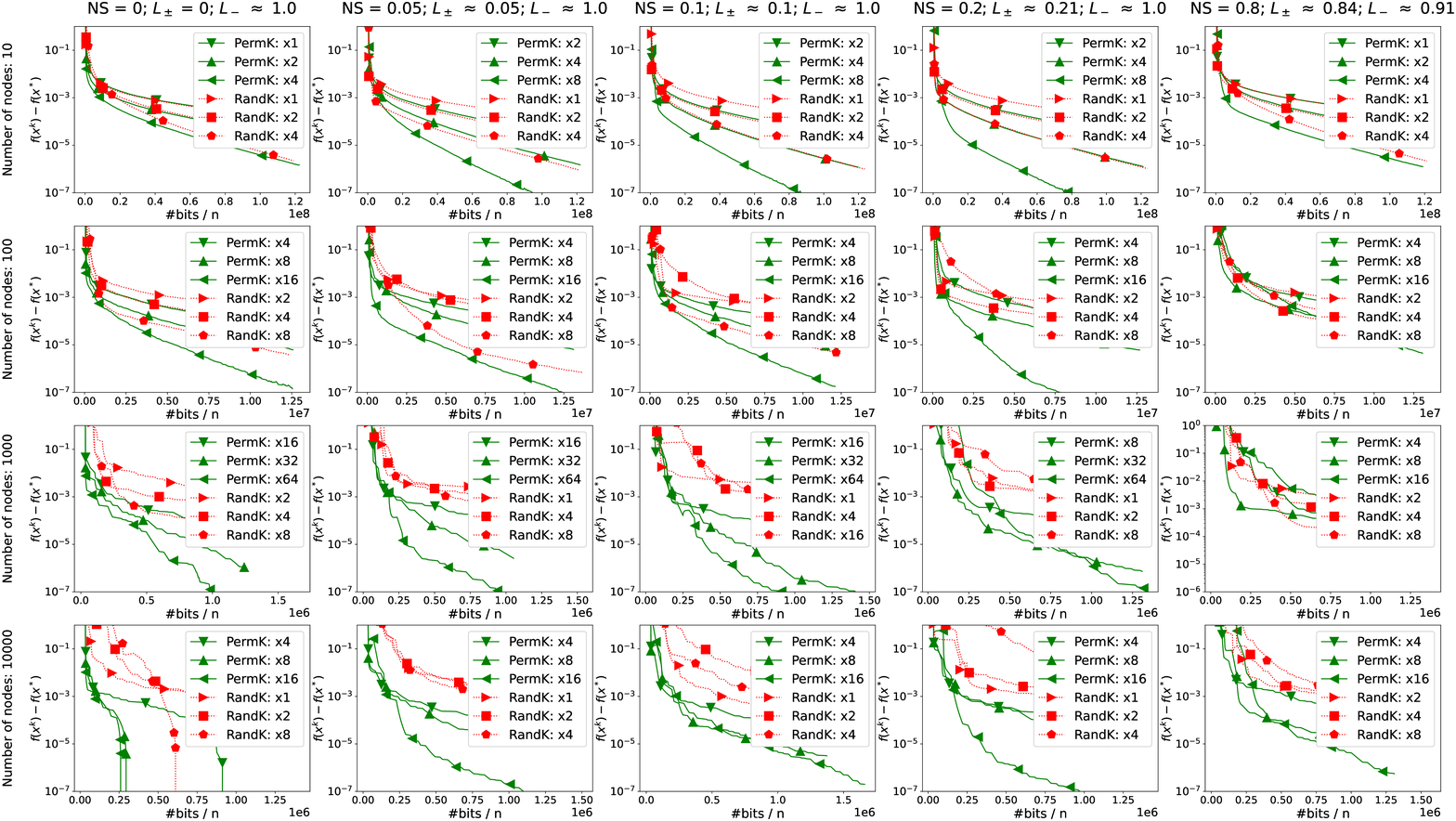}
  \vspace*{-0.5cm}
  \caption{\experimetscaption{Rand$K$ and Perm$K$ under P\L\,condition }}
  \label{fig:project-permutation-compression-marina-quadratic-with-norms-pl}
\end{figure}

\begin{figure}
  \centering
  \includegraphics[width=0.99\linewidth]{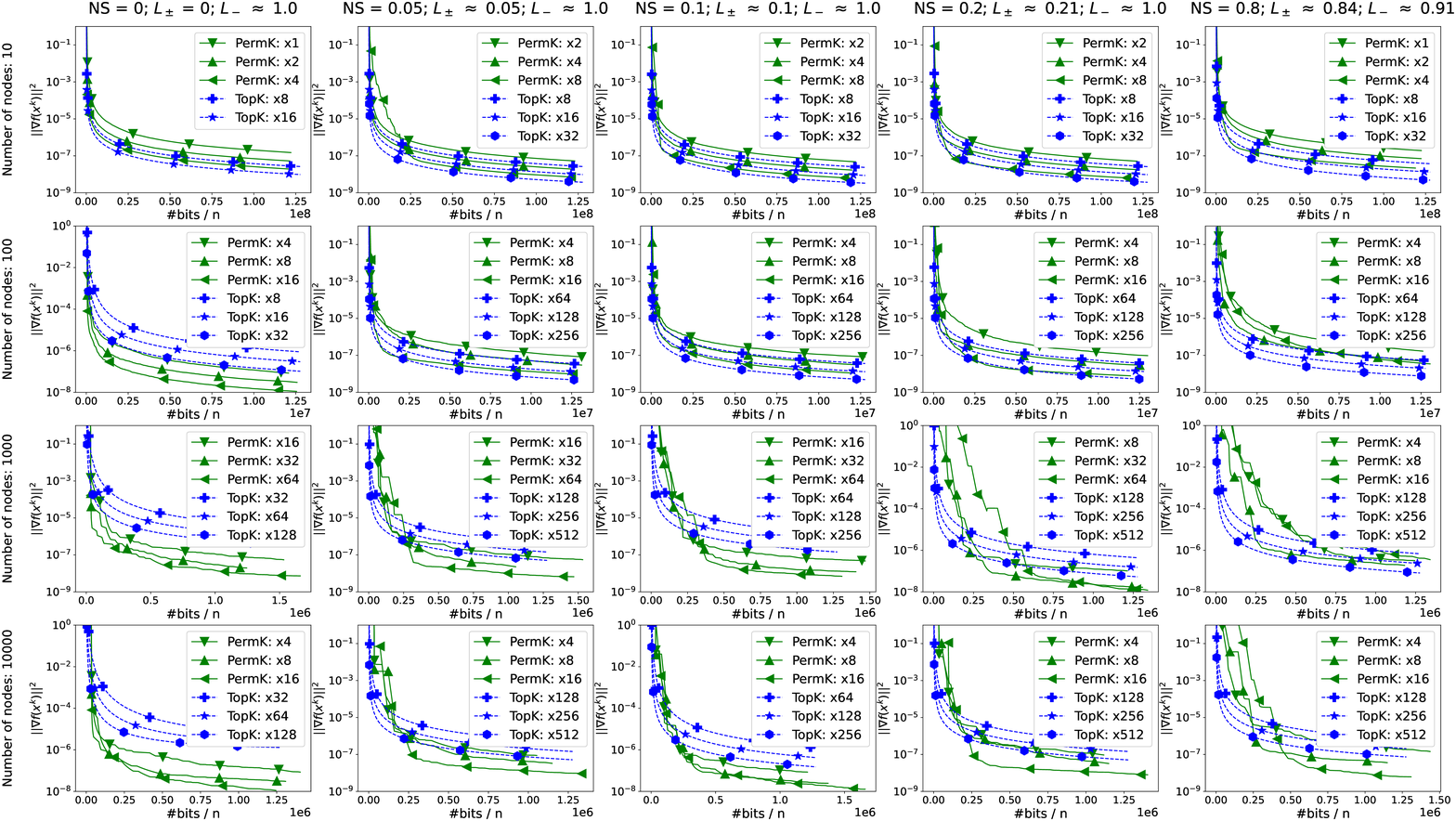}
  \vspace*{-0.5cm}
  \caption{\experimetscaption{Top$K$ and Perm$K$}}
  \label{fig:project-permutation-compression-marina-quadratic-with-norms_2}

  \vspace{\floatsep}

  \centering
  \includegraphics[width=0.99\linewidth]{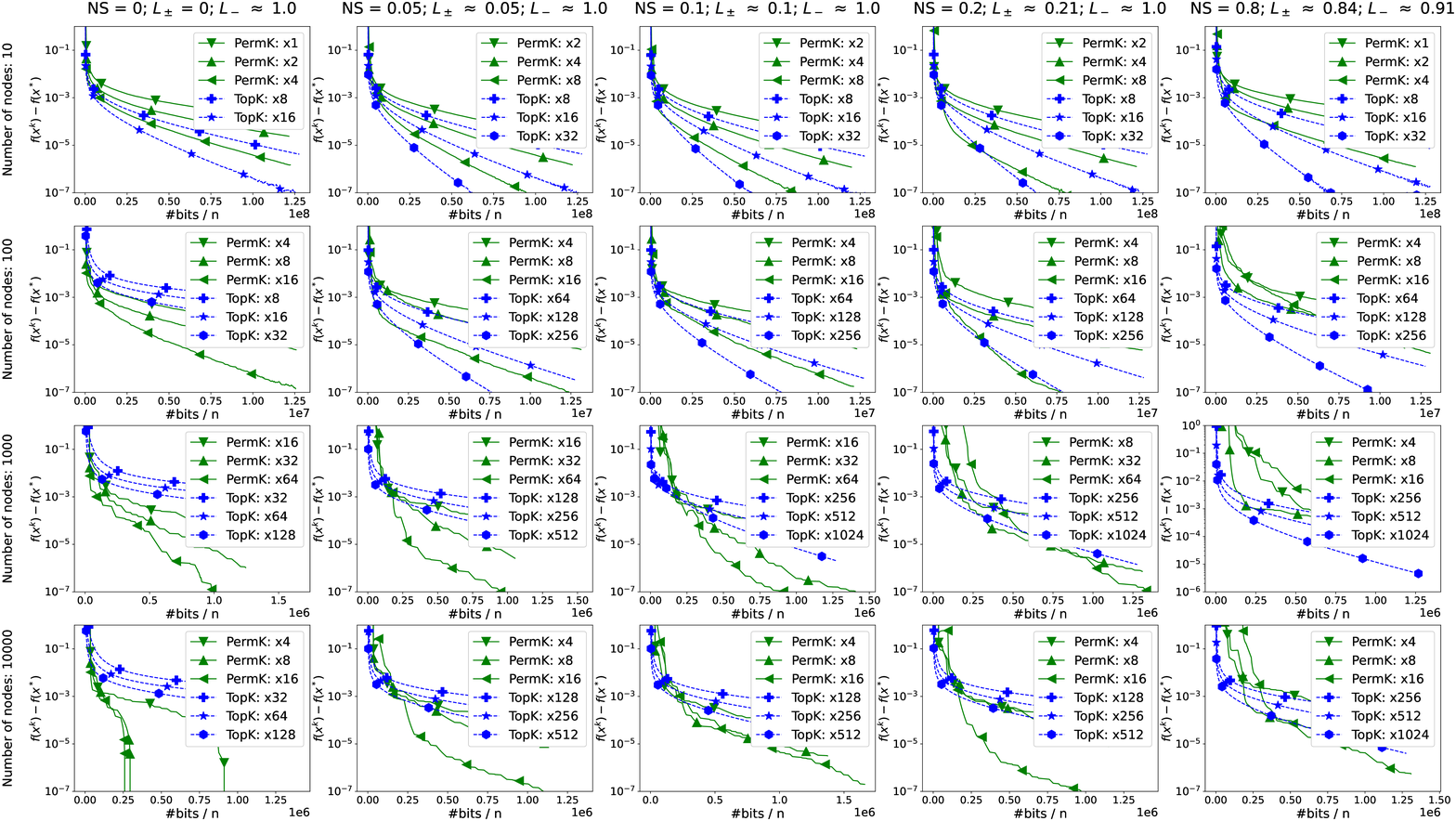}
  \vspace*{-0.5cm}
  \caption{\experimetscaption{Top$K$ and Perm$K$ under P\L\,condition }}
  \label{fig:project-permutation-compression-marina-quadratic-with-norms_2-pl}
\end{figure}

\clearpage

\subsection{Experiment with MNIST: full description}
\label{section:extra_experiments:mnist}

We introduce parameter $\widehat{p}$.
Initially, we randomly split MNIST into $n + 1$ parts: $D_0, D_1, \cdots, D_n$, where $n = 1000$ is the number of nodes. 
Then, for all $i \in \{1,\dots,n\}$, the $i$\ts{th} node takes split $D_0$ with probability $\widehat{p}$, 
or split $D_i$ with probability $1 - \widehat{p}$. We define the chosen split as $\widehat{D_i}$. 
Using probability $\widehat{p}$, we control the homogeneity of our distribution optimization task.
Note that if $\widehat{p} = 1$, all nodes store the same data $D_0$, 
and if $\widehat{p} = 0$, nodes store different splits $D_i$. 

Let us consider the more general optimization problem than in Section~\ref{section:experiment:mnist}.
We optimize the following non-convex loss with regularization:
\begin{eqnarray*}
  \textstyle \min \limits_{\mD \in \R^{d_f \times d_e}, \mE \in \R^{d_e \times d_f}} \left[ f(\mD, \mE) \eqdef \frac{1}{N}\sum \limits_{i = 1}^N \norm{\mD \mE a_i - a_i}^2 + \frac{\lambda}{2}\norm{\mD \mE - \mI}_F^2 \right] ,
\end{eqnarray*}
where $a_i \in \R^{d_f}$ are MNIST images, $d_f = 784$ is the number of features, 
$d_e = 16$ is the size of encoding space.
regularizer $\lambda \geq 0.$

Each node stores function
\begin{eqnarray*}
  f_i(\mD, \mE) \eqdef \frac{1}{|\widehat{D_i}|}\sum_{j \in \widehat{D_i}} \norm{\mD \mE a_j - a_j}^2 + \frac{\lambda}{2}\norm{\mD \mE - \mI}_F^2, \quad \forall i \in \{1, \dots, n\}.
\end{eqnarray*}

In Figure~\ref{fig:project-permutation-compression-marina-mnist-auto_encoder-1000-nodes-prob-homog-P-reg-P-ef21-init-grad-xavier-normal-with-randk_best_full}, one plot for each compressor with the best convergence rate is provided for $\lambda = \{0, 0.00001, 0.001\}$ and $\widehat{p} = \{0, 0.5, 0.75, 0.9, 1.0\}.$ 

We see that in homogeneous regimes, when $\widehat{p} \in \{0.9, 1.0\}$, 
Perm$K$ outperforms other compressors for any $\lambda$. And the larger the regularization parameter $\lambda$, the faster Perm$K$ convergences compared to rivals.

\begin{figure}[!h]
  \centering
  \includegraphics[width=0.95\linewidth]{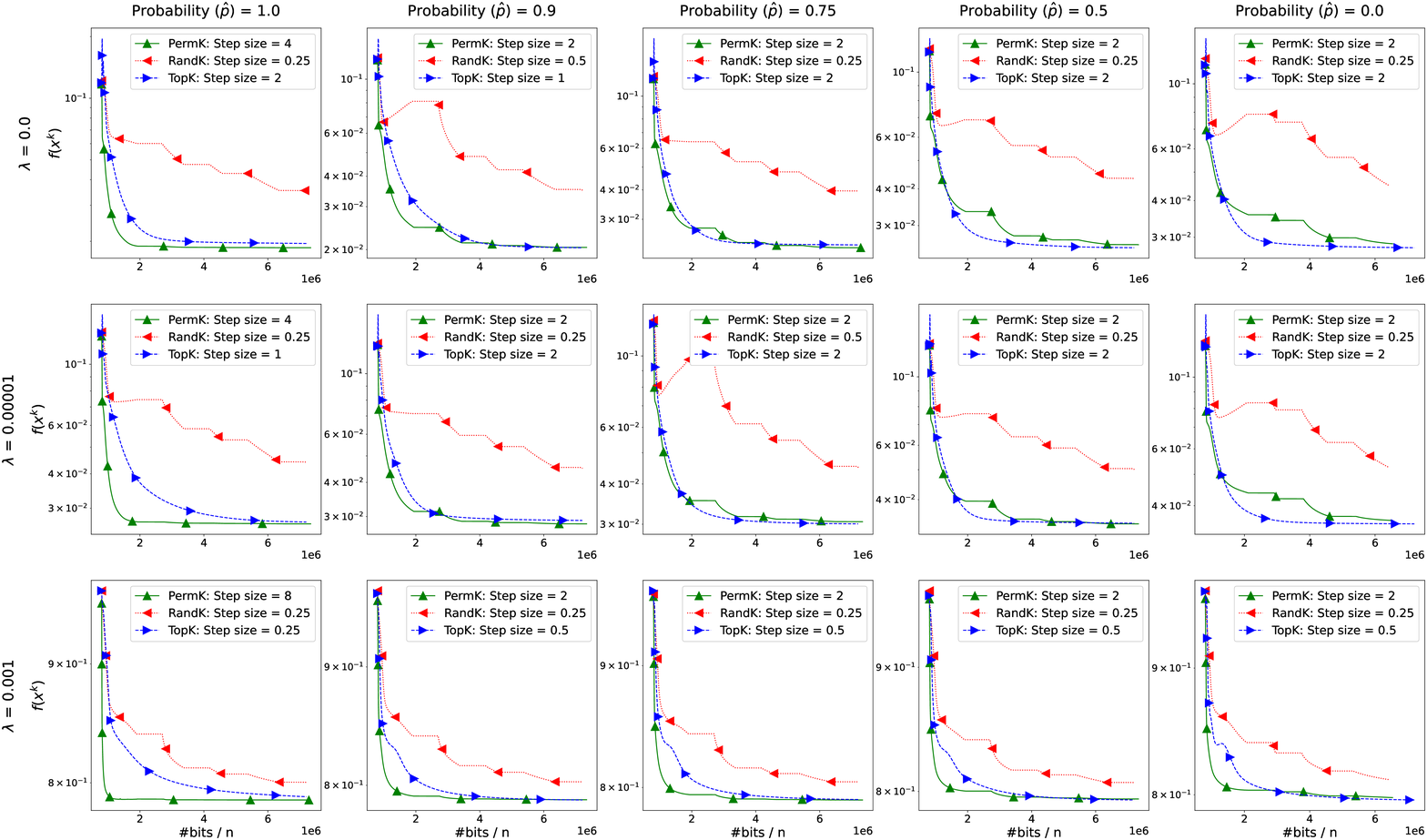}
  \vspace*{-0.5cm}
  \caption{\experimetscaptionmnist{algorithms }}
  \label{fig:project-permutation-compression-marina-mnist-auto_encoder-1000-nodes-prob-homog-P-reg-P-ef21-init-grad-xavier-normal-with-randk_best_full}
\end{figure}

\subsection{Comparison of \algname{MARINA} with Rand$K$ and \algname{MARINA} with Perm$K$ on MNIST dataset}

In this section, we provide detailed experiments from Section~\ref{section:experiment:mnist} and comparisons of Rand$K$ and 
Perm$K$ with different step sizes 
(see Figure~\ref{fig:project-permutation-compression-marina-mnist-auto_encoder-1000-nodes-prob-homog-P-reg-P-ef21-init-grad-xavier-normal-with-randk}).
We omitted plots where algorithms diverged. 
We see that in all experiments, Perm$K$ is better than Rand$K$.
Practical experiments on MNIST fully reproduce dependencies from our theory and experiments 
with synthetic quadratic optimization tasks from Section~\ref{section:experiment:quadratic}.

\begin{figure}[!h]
  \centering
  \includegraphics[width=1.0\linewidth]{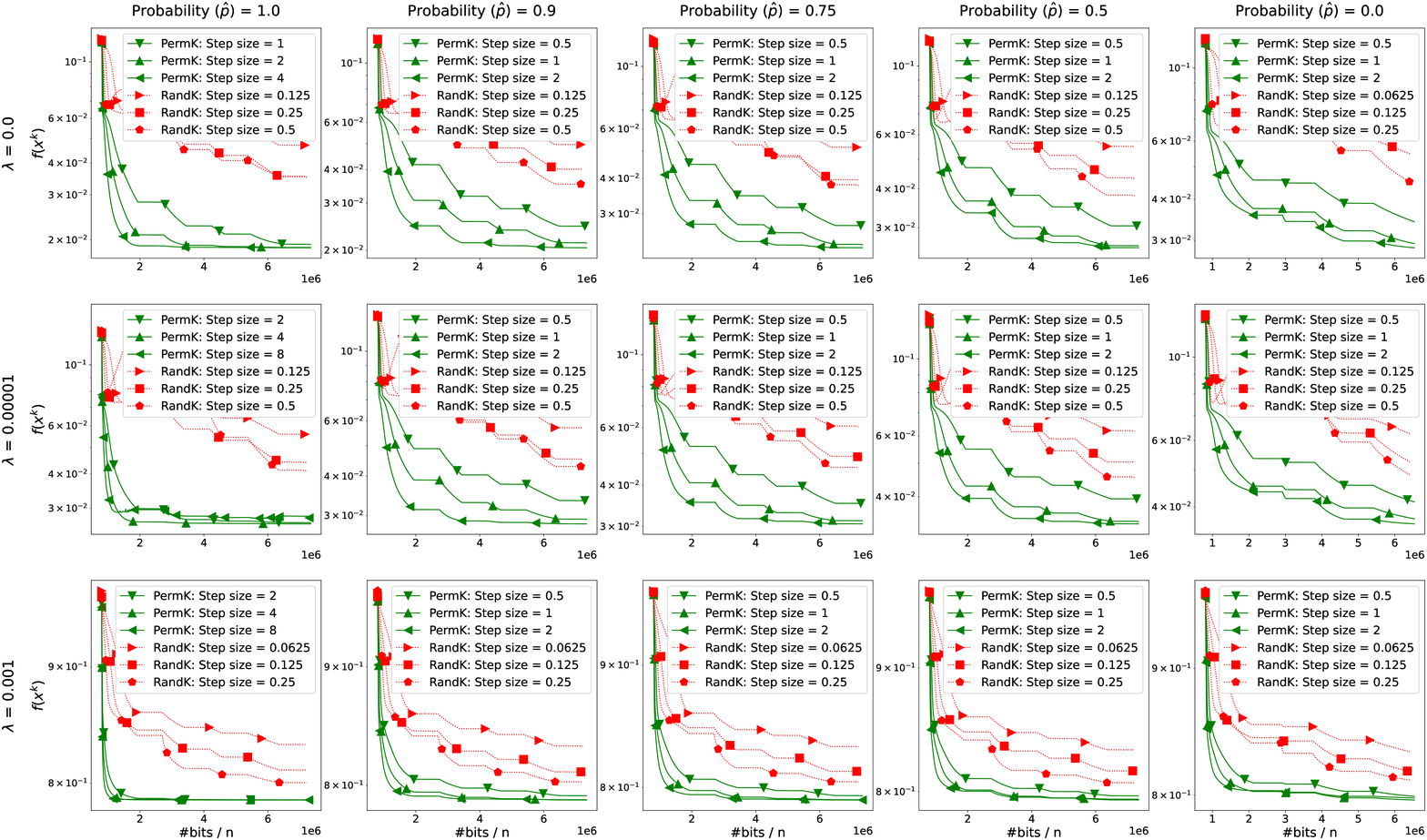}
  \vspace*{-1.0cm}
  \caption{\experimetscaptionmnist{Rand$K$ and Perm$K$}}
  \label{fig:project-permutation-compression-marina-mnist-auto_encoder-1000-nodes-prob-homog-P-reg-P-ef21-init-grad-xavier-normal-with-randk}
\end{figure}

\subsection{Comparison of \algname{EF21} with Top$K$ and \algname{MARINA} with Perm$K$ on MNIST dataset}

In this section, we provide detailed experiments from Section~\ref{section:experiment:mnist} and comparisons of Rand$K$ and 
Perm$K$ with different step sizes 
(see Figure~\ref{fig:project-permutation-compression-marina-mnist-auto_encoder-1000-nodes-prob-homog-P-reg-P-ef21-init-grad-xavier-normal-with-randk_2}).
We omitted plots where algorithms diverged.
We see that, when $\widehat{p} \in \{0.9, 1.0\}$,
Perm$K$ tolerates larger step sizes and convergences faster than Top$K$. 
When a probability $\widehat{p} \in \{0, 0.5\}$, 
both compressors approximately tolerate the same step sizes, but
Top$K$ has a better performance when $\lambda \in \{0, 0.00001\}$.

\begin{figure}[!h]
  \centering
  \includegraphics[width=1.0\linewidth]{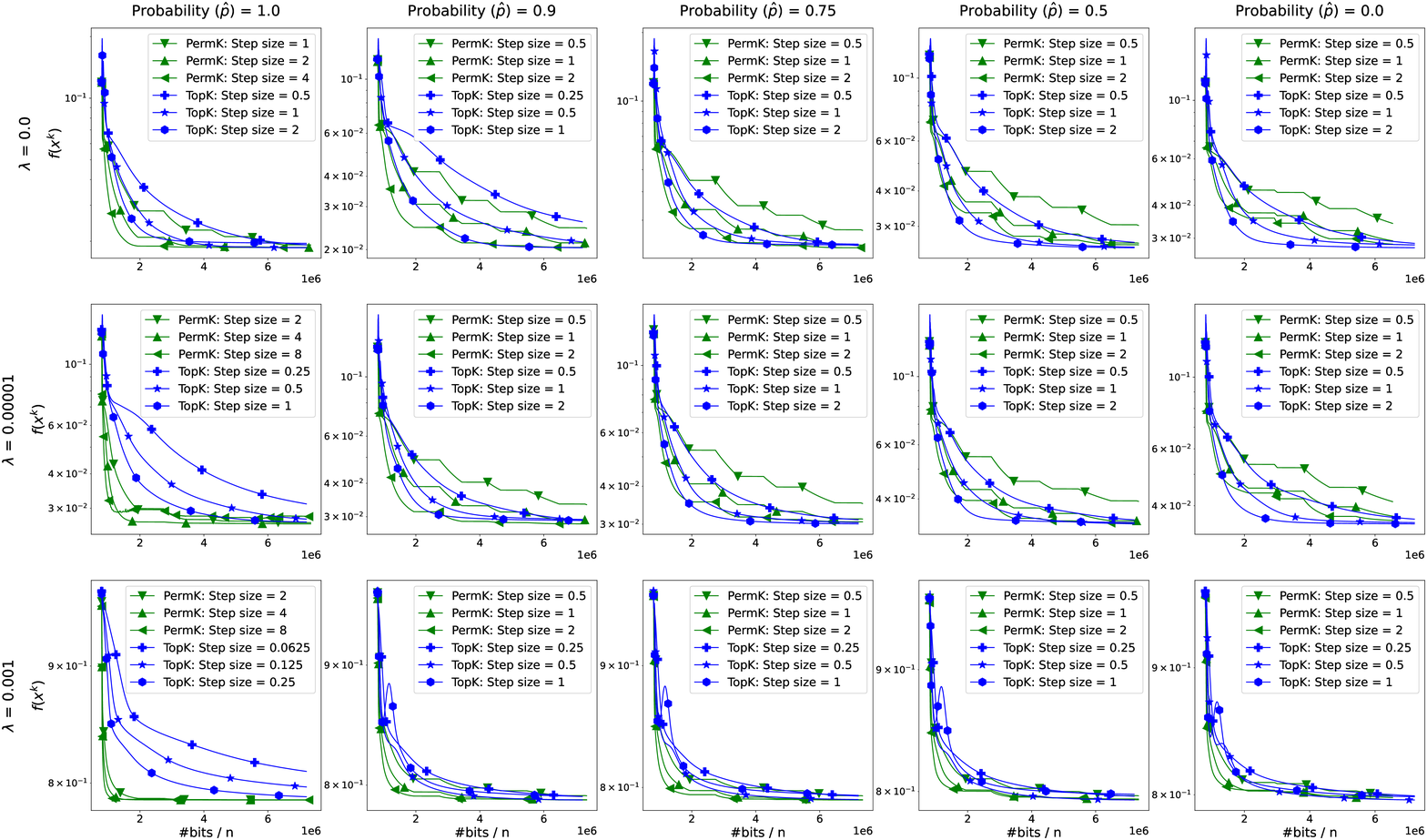}
  \vspace*{-1.0cm}
  \caption{\experimetscaptionmnist{Top$K$ and Perm$K$}}
  \label{fig:project-permutation-compression-marina-mnist-auto_encoder-1000-nodes-prob-homog-P-reg-P-ef21-init-grad-xavier-normal-with-randk_2}
\end{figure}

\clearpage
\section{\algname{MARINA} Algorithm}
\label{section:marina_pseudocode}
To the best of our knowledge, the state-of-the-art method for solving the nonconvex problem (\ref{eq:main}) in terms of the theoretical communication efficiency is \algname{MARINA} \citep{MARINA}. In its simplest variant, \algname{MARINA} performs iterations of the form
\begin{equation} \label{eq:MARINA-generic} x^{k+1} = x^k - \gamma g^k, \qquad g^k = \frac{1}{n}\sum_{i=1}^n g_i^k,\end{equation}
where $g_i^k$ is a carefully designed {\em biased} estimator of the gradient $\nabla f_i(x^k)$, and $\gamma>0$ is a learning rate. The gradient estimators used in \algname{MARINA} are initialized to the full gradients, i.e., $g_i^0 = \nabla f_i(x^0)$, for $i \in \{1,\dots,n\}$, and subsequently updated as
\[g_i^{k+1} = \begin{cases} \nabla f_i(x^{k+1}) & \text{if} \quad \theta_k=1 \\
g^k + \cC_i^k( \nabla f_i(x^{k+1}) - \nabla f_i(x^k)) & \text{if} \quad \theta_k=0 
\end{cases},\]
where $\theta_k$ is a Bernoulli random variable sampled at iteration $k$ (equal to $1$ with probability $p \in (0, 1]$, and equal to $0$ with probability $1-p$), and 
$\cC_i:\R^d\to \R^d$ is a randomized compression operator sampled at iteration $k$ on node $i$ {\em independently} from other nodes. In particular, \citet{MARINA} assume that the compression operators $\cC_i$ are {\em unbiased}, and that their variance is proportional to squared norm of the input vector:
\[ \Exp{\cC_i(x)} = x, \qquad \Exp{\norm{\cC_i(x)-x}^2} \leq \omega_i \norm{x}^2, \qquad \forall x\in \R^d. \]


In each iteration of \algname{MARINA}, the gradient estimator is reset to the true gradient with (small) probability $p$. Otherwise, each worker $i$ compresses the difference of the last two local gradients, and communicates the compressed message $$m_i^k =  \cC_i^k( \nabla f_i(x^{k+1}) - \nabla f_i(x^k))$$ to the server. These messages are then aggregated by the server to form the new gradient estimator via
\[g^{k+1} = g^k + \frac{1}{n}\sum_{i=1}^n m_i^k.\]
Note that i) this preserves the second relation in (\ref{eq:MARINA-generic}), ii) the server {\em can} compute $g^{k+1}$ since it has access to $g^k$, which is the case (via a recursive argument) if  $g^0$ is known by the server at the start of the iterative process\footnote{This is done by each worker sending the full gradient $g_i^0 = \nabla f_i(x^0)$ to the server at initialization.}.

\begin{algorithm}
  \caption{\algname{MARINA}}
  \label{alg:marina}
  \begin{algorithmic}[1]
  \STATE \textbf{Input:} starting point $x^0$, stepsize $\gamma$, probability $p \in (0, 1]$, number of iterations $T$
  \STATE Initialize $g^0 = \nabla f(x^0)$
  \FOR{$k = 0, 1, \dots, T - 1$}
  \STATE Sample $\theta_t \sim \textnormal{Be}(p)$
  \STATE Broadcast $g^t$ to all workers
  \FOR{$i = 1, \dots, n$ in parallel}
  \STATE $x^{t+1} = x^t - \gamma g^t$
  \STATE Set $g^{t+1}_i = \nabla f_i(x^{t+1})$ if $\theta_t = 1,$ 
  and $g^{t+1}_i = g^t + \cC_i\left(\nabla f_i(x^{t+1}) - \nabla f_i(x^{t})\right)$ otherwise
  \label{alg:gradient_estimate_definition}
  \ENDFOR
  \STATE $g^{t+1} = \frac{1}{n} \sum_{i=1}^{n} g^{t+1}_i$
  \ENDFOR
  \STATE \textbf{Output:} $\hat{x}^T$ chosen uniformly at random from $\{x^t\}_{k=0}^{T-1}$
  \end{algorithmic}
\end{algorithm}

Further, note that the {\em expected communication cost} in each iteration of \algname{MARINA} is equal to
\[\ {\rm Comm} = p d + (1-p) \zeta, \qquad \zeta =\max_i \zeta_i,
\]
where $d$ is the cost of communicating a (possibly dense) vector in $\R^d$, and $\zeta_i \leq d$ is the expected cost of communicating a vector compressed by $\cC_i$.

\algname{MARINA} one of the very few examples in stochastic optimization where the use of a biased estimator leads to a better theoretical complexity than the use of an unbiased estimator, with the other example being optimal \algname{SGD} methods for single-node problems \algname{SARAH}~\citep{SARAH}, \algname{SPIDER}~\citep{SPIDER},  \algname{PAGE}~\citep{PAGE}.

\clearpage
\section{Missing Proofs}
\label{section:missing_proofs}

\subsection{Proof of Lemma~\ref{lem:bg97fd890d}}

\unbiased*
\begin{proof} 
  Let us first assume unbiasedness only. By Jensen's inequality,
  $$\norm{ \frac{1}{n} \sum_{i=1}^n \cC_i(a_i) - \frac{1}{n}\sum_{i=1}^n a_i}^2 \leq  \frac{1}{n} \sum_{i=1}^n \norm{ \cC_i(a_i) - a_i}^2.$$
  It remains to apply expectation on both sides and then use inequality
  $$\Exp{\norm{ \cC_i(a_i) - a_i}^2} \leq \omega_i \norm{a_i}^2, \forall i \in \{1, \dots, n\},$$
  to conclude that $\{\cC_i\}_{i = 1}^n \in \mathbb{U}(\max_i\omega_i, 0)$.

  Let us now add the assumption of independence.
  \begin{align*}
    &\Exp{\norm{\frac{1}{n}\sum\limits_{i=1}^n\cC_i(a_i) - \frac{1}{n}\sum\limits_{i=1}^n a_i}^2} \\
    &= \Exp{\norm{\frac{1}{n}\sum\limits_{i=1}^n\left(\cC_i(a_i) - a_i\right)}^2}\\
    &= \frac{1}{n^2}\sum\limits_{i=1}^n\Exp{\norm{\cC_i(a_i) - a_i}^2} + \frac{1}{n^2}\sum\limits_{i\neq j}\Exp{\inp{\cC_i(a_i) - a_i}{\cC_j(a_j) - a_j}}\\
    &\leq \frac{\max_i \omega_i}{n^2} \sum\limits_{i=1}^n\norm{a_i}^2,
  \end{align*}
  by independence, thus, $A = \max_i \omega_i / n$, $B = 0$.
\end{proof}

\subsection{Proof of Lemma~\ref{lem:08hhdf}}

\ineq*
\begin{proof} 
  Let us define 
  \begin{eqnarray*}
    \cL_{-}(x,y) &\eqdef& \norm{\nabla f(x) - \nabla f(y)}^2, \\
    \cL_{+}(x,y) &\eqdef& \frac{1}{n} \sum_{i=1}^n \norm{\nabla f_i(x) - \nabla f_i(y)}^2,\\
    \cL_{\pm}(x,y) &\eqdef& \cL_{+}(x,y) - \cL_{-}(x,y).
  \end{eqnarray*}
  The inequalities are now established as follows:
\begin{enumerate}
  \item By Jensen's inequality and the definition of $L_{+}$,
  \begin{eqnarray*}
    \cL_{-}(x,y) \leq \cL_{+}(x,y) \leq L_{+}^2 \norm{x-y}^2,
  \end{eqnarray*}
  thus, $L_{-}$ is at most $L_{+}$.
  \item By the triangle inequality, we have
  \begin{eqnarray*}
    \norm{\nabla f(x) - \nabla f(y)} \leq \frac{1}{n} \sum_{i=1}^n \norm{\nabla f_i(x) - \nabla f_i(y)} 
    \leq \frac{1}{n} \sum_{i=1}^n L_i \norm{x-y}, 
  \end{eqnarray*}
  thus $L_{-}$ is at most $\frac{1}{n} \sum_{i=1}^n L_i$.
  \item From the definition of $L_i$, we have
  \begin{eqnarray*}
    \cL_{+}(x,y) = \frac{1}{n} \sum_{i=1}^n \norm{\nabla f_i(x) - \nabla f_i(y)}^2 
    \leq \frac{1}{n} \sum_{i=1}^n L_i^2 \norm{x-y}^2,
  \end{eqnarray*}
  and $L_{+}^2$ is at most $\frac{1}{n} \sum_{i=1}^n L_i^2$.
  \item The right inequality follows from $\cL_{-}(x,y) \geq 0$ and $$\cL_{\pm}(x,y) \leq \cL_{+}(x,y) \leq L_{+}^2\norm{x-y}^2.$$
  Now, we prove the left inequality. From the definition of $L_{\pm}$, we have
  \begin{eqnarray*}
    \cL_{\pm}(x,y) \leq L_{\pm}^2 \norm{x-y}^2,
  \end{eqnarray*}
  and 
  \begin{eqnarray*}
    \cL_{\pm}(x,y) = \cL_{+}(x,y) - \cL_{-}(x,y),
  \end{eqnarray*}
  hence,
  \begin{eqnarray*}
    \cL_{+}(x,y) \leq L_{\pm}^2 \norm{x-y}^2 + \cL_{-}(x,y) \leq (L_{-}^2 + L_{\pm}^2) \norm{x-y}^2,
  \end{eqnarray*}
  thus $L_{+}^2 \leq L_{-}^2 + L_{\pm}^2$.
\end{enumerate}
\end{proof}

\subsection{Proof of Theorem~\ref{thm:PermK-1}}
\PermK*

\begin{proof}
  $ $\newline
  
  We fix any $x \in \R^d$ and prove unbiasedness:
  \begin{eqnarray*}
    \Exp{\cC_i(x)} = n\sum_{j = q (i - 1) + 1}^{q i} \Exp{x_{\pi_j} e_{\pi_j}}
    = n\left(\sum_{j = q (i - 1) + 1}^{q i} \frac{1}{d}\sum\limits_{i = 1}^d x_ie_i\right)
    = \frac{nq}{d}x
    = x.
  \end{eqnarray*}
  
  Next, we find the second moment:
  \begin{eqnarray*}
    \Exp{\norm{\cC_i(x)}^2} = n^2 \sum_{j = q (i - 1) + 1}^{q i} \Exp{|x_{\pi_j}|^2}
    = n^2 \sum_{j = q (i - 1) + 1}^{q i} \frac{1}{d}\sum_{i = 1}^d |x_i|^2 
    = n^2 \frac{q}{d}\norm{x}^2 
    = n\norm{x}^2.
  \end{eqnarray*}
  For all $a_1, \dots, a_n \in \R^d,$ the following inequality holds:
  \begin{eqnarray*}
    \Exp{\norm{\frac{1}{n}\sum\limits_{i = 1}\cC_i(a_i)}^2} &=& \frac{1}{n^2}\sum\limits_{i = 1}^n \Exp{\norm{\cC_i(a_i)}^2} + \sum\limits_{i\neq j}\Exp{\inp{\cC_i(a_i)}{C_j(a_j)}} \\
    &=& \frac{1}{n^2}\sum\limits_{i = 1}^n \Exp{\norm{\cC_i(a_i)}^2} \\
    &=& \frac{1}{n}\sum\limits_{i = 1}^n \norm{a_i}^2.
  \end{eqnarray*}
  Hence, Assumption~\ref{ass:AB} is fulfilled with $A = B = 1$. 
  \end{proof}

\subsection{Proof of Theorem~\ref{thm:PermK-2}}
\PermKK*
\begin{proof}
  $ $\newline

  We fix any $x \in \R^d$ and prove unbiasedness:
  \begin{eqnarray*}
    \Exp{\cC_i(x)} = d \Exp{x_{\pi_i}e_{\pi_i}}
    = d \frac{1}{d}\sum\limits_{i = 1}^d x_i e_i
    = x.
  \end{eqnarray*}
  
  Next, we find the second moment:
  \begin{eqnarray*}
    \Exp{\norm{\cC_i(x)}^2} = \frac{1}{d} \sum\limits_{i = 1}^d d^2 |x_i|^2
    = d \norm{x}^2.
  \end{eqnarray*}
  
  For all $i \neq j \in \{1, 2, \dots, n\}, x, y \in \R^d,$ we have
\begin{eqnarray*}
  \Exp{\inp{\cC_i(x)}{\cC_j(y)}} &=& \ExpCond{\inp{\cC_i(x)}{\cC_j(y)}}{\pi_i = \pi_j} \Prob\left(\pi_i = \pi_j\right) \\
  &=& \frac{(q - 1)}{(n - 1)d} \sum\limits_{q = 1}^d\ExpCond{\inp{C_i(x)}{C_j(y)}}{\pi_i = d, \pi_j = d} \\
  &=& \frac{(q - 1)}{(n - 1)d} \sum\limits_{q = 1}^d d^2 x_q y_q \\
  &=& \frac{(q - 1)d}{n - 1}\inp{x}{y}.
\end{eqnarray*}
For all $a_1, \dots, a_n \in \R^d,$ the following inequality holds:
\begin{eqnarray*}
  \Exp{\norm{\frac{1}{n}\sum\limits_{i = 1}^n\cC_i(a_i)}^2} &=&  \frac{1}{n^2}\sum\limits_{i = 1}^n \Exp{\norm{\cC_i(a_i)}^2} + \frac{1}{n^2}\sum\limits_{i\neq j}\Exp{\inp{C_i(a_i)}{C_j(a_j)}} \\
  &=& \frac{d}{n^2}\sum\limits_{i = 1}^n \norm{a_i}^2 + \frac{1}{n^2}\sum\limits_{i\neq j}\Exp{\inp{\cC_i(a_i)}{\cC_j(a_j)}} \\
  &=& \frac{d}{n^2}\sum\limits_{i = 1}^n \norm{a_i}^2 + \frac{(q - 1)d}{n^2(n - 1)}\sum\limits_{i\neq j}\inp{a_i}{a_j} \\
  &=& \left(\frac{d}{n} - \frac{(q - 1)d}{n(n - 1)}\right)\frac{1}{n}\sum\limits_{i = 1}^n \norm{a_i}^2 + \frac{(q - 1)d}{n - 1}\norm{\frac{1}{n}\sum\limits_{i = 1}^n a_i}^2 \\
  &=& \left(1 - \frac{n - d}{n - 1}\right)\frac{1}{n}\sum\limits_{i = 1}^n \norm{a_i}^2 + \frac{n - d}{n - 1}\norm{\frac{1}{n}\sum\limits_{i = 1}^n a_i}^2.
\end{eqnarray*}
Hence, Assumption~\ref{ass:AB} is fulfilled with $A = B = 1 - \frac{n - d}{n - 1}$.
\end{proof}
\subsection{Proof of Theorem~\ref{thm:second-order}}
\secondorder*
\begin{proof}
 The fundamental theorem of calculus says that for any continuously differentiable function $\psi:\R \to \R$ we have
\[\psi(1) - \psi(0) =  \int_0^1 \psi'(t) dt .\]
Choose $i\in \{1,2,\dots,n\}$, $j\in \{1,2,\dots,d\}$, distinct vectors $x,y\in \R^d$, and let $$\psi_{ij}(t)\eqdef \langle \nabla f_i(x+ t(y-x)), e_j \rangle,$$ where $e_j\in \R^d$ is the $j$th standard unit basis vector. Since $f_i$ is twice continuously differentiable, $\psi_{ij}$ is continuously differentiable, and  by the chain rule, $$\psi_{ij}'(t) = \langle \nabla^2 f_i(x+t(y-x))(y-x), e_j \rangle.$$ Applying the fundamental theorem of calculus, we get 
\begin{equation}\label{eq:iubv98fd9fd09}\psi_{ij}(1) - \psi_{ij}(0) = \int_0^1 \langle \nabla^2 f_i(x+t(y-x))(y-x), e_j \rangle dt.\end{equation}
Let $\psi_i:\R \to \R^d$ be defined by $\psi_i(t) \eqdef \nabla f_i(x + t(y-x)) = (\psi_{i1}(t), \dots, \psi_{id}(t)) $.
Combining equations \ref{eq:iubv98fd9fd09} for $j=1,2,\dots,d$ into a vector form using the fact that
\[ \int_0^1 \langle \nabla^2 f_i(x+t(y-x))(y-x), e_j \rangle dt  =  \left\langle \left(\int_0^1 \nabla^2 f_i(x+t(y-x))dt \right) (y-x), e_j \right\rangle  \]
we arrive at the identity
\begin{eqnarray}\nabla f_i(y) - \nabla f_i(x) &=& \psi_{i}(1) - \psi_{i}(0) \notag \\
&\overset{(\ref{eq:iubv98fd9fd09})}{=}& \left(\int_0^1  \nabla^2 f_i(x+t(y-x))dt \right)(y-x) \notag \\
&\overset{(\ref{eq:hfd-0f9y8gfd9-u8fd})}{=}& \mH_i(x,y) (y-x).\label{eq:98h98dh998-jkkjNUDB}\end{eqnarray}
Next, since $\nabla^2 f_i(x+t(y-x))$ is symmetric for all $t$, so is $\mH_i(x,y)$, and hence $\mL_i(x,y) \eqdef\mH_i^2(x,y) = \mH_i^\top(x,y)\mH_i(x,y)$, which also means that $\mL_i(x,y)$ is symmetric and positive semidefinite. Combining these observations, we obtain
\begin{equation}\label{eq:n098h0fd-097bfd}\norm{\nabla f_i(x) - \nabla f_i(y)}^2 \overset{(\ref{eq:98h98dh998-jkkjNUDB})}{=} (x-y)^\top \mL_i(x,y) (x-y).\end{equation}
Clearly,  
\[L_i^2 = \sup \limits_{x,y\in \R^d, x\neq y} \frac{\norm{\nabla f_i(x) - \nabla f_i(y)}^2}{\norm{x-y}^2} \overset{(\ref{eq:n098h0fd-097bfd})}{=} \sup \limits_{x,y\in \R^d, x\neq y} \frac{(x-y)^\top \mL_i(x,y) (x-y)}{\norm{x-y}^2}.\]
Using the same reasoning, we have $\nabla f(y) - \nabla f(x) = \mH(x,y) (y-x),$ and
\begin{align*}
  L_-^2 = \sup \limits_{x,y\in \R^d, x\neq y} \frac{\norm{\nabla f(x) - \nabla f(y)}^2}{\norm{x-y}^2} = 
  \sup \limits_{x,y\in \R^d, x\neq y} \frac{(x-y)^\top \mL_-(x,y) (x-y)}{\norm{x-y}^2},
\end{align*}
where $\mL(x,y) \eqdef\mH^2(x,y) = \mH^\top(x,y)\mH(x,y)$ is symmetric and positive semidefinite, 
since $\mH_i(x,y)$ are symmetric and positive semidefinite.
Finally,
\begin{eqnarray*}
  L_+^2 &=& \sup \limits_{x,y\in \R^d, x\neq y} \frac{\frac{1}{n}\sum_{i=1}^n\norm{\nabla f_i(x) - \nabla f_i(y)}^2}{\norm{x-y}^2} \\
  &=& \sup \limits_{x,y\in \R^d, x\neq y} \frac{(x-y)^\top \left(\frac{1}{n}\sum_{i=1}^n\mH_i^2(x,y)\right) (x-y)}{\norm{x-y}^2} \\
  &=& \sup \limits_{x,y\in \R^d, x\neq y} \frac{(x-y)^\top \mL_+(x,y) (x-y)}{\norm{x-y}^2},
\end{eqnarray*}
and
\begin{eqnarray*}
  L_\pm^2 &=& \sup \limits_{x,y\in \R^d, x\neq y} \frac{\frac{1}{n}\sum_{i=1}^n\norm{\nabla f_i(x) - \nabla f_i(y)}^2 - \norm{\nabla f(x) - \nabla f(y)}^2}{\norm{x-y}^2} \\
  &=& \sup \limits_{x,y\in \R^d, x\neq y} \frac{(x-y)^\top \left(\frac{1}{n}\sum_{i=1}^n\mH_i^2(x,y) - \mH^2(x,y)\right) (x-y)}{\norm{x-y}^2} \\
  &=& \sup \limits_{x,y\in \R^d, x\neq y} \frac{(x-y)^\top \mL_\pm(x,y) (x-y)}{\norm{x-y}^2}.
\end{eqnarray*}
Note, that $\mL_+(x,y)$ inherits symmetry and positive semidefiniteness from $\mH_i^2(x,y).$
Symmetry of $\mL_\pm(x,y)$ is trivial. To prove positive semidefiniteness of $\mL_\pm(x,y)$, note that
\begin{eqnarray*}
  \mL_\pm(x,y) &=& \frac{1}{n}\sum_{i=1}^n \mH_i^2(x,y) - \mH^2(x,y) \\
  &=& \frac{1}{n}\sum_{i=1}^n\left(\mH_i(x,y) - \mH(x,y) + \mH(x,y)\right)^2 - \mH^2(x,y)\\
  &=& \frac{1}{n}\sum_{i=1}^n\left(\mH_i(x,y) - \mH(x,y)\right)^2 + \frac{1}{n}\mH(x,y)\sum_{i=1}^n\left(\mH_i(x,y) - \mH(x,y) \right)\\ 
  &\quad +& \frac{1}{n}\sum_{i=1}^n\left(\mH_i(x,y) - \mH(x,y)\right)\mH(x,y) \\
  &=& \frac{1}{n}\sum_{i=1}^n\left(\mH_i(x,y) - \mH(x,y)\right)^2,
\end{eqnarray*}
which is positive semidefinite.
\end{proof}
\subsection{Proof of Theorem~\ref{theorem:AB}}
\label{sec:proof_theorem_ab}

\ab*

\begin{proof}
In the proof, we follow closely the analysis of \cite{MARINA} and adapt it to utilize the power of Hessian variance (Definition~\ref{ass:HV}) and AB assumption (Assumption~\ref{ass:AB}). 
We bound the term $\Exp{\norm{g^{t+1}-\nabla f(x^{t+1})}^2}$ in a similar fashion to \cite{MARINA}, but make use of the AB assumption. Other steps
are essentially identical, but refine the existing analysis through Hessian variance.

First, we recall the following lemmas.

\begin{lemma}[\cite{PAGE}]
  \label{lemma:page_lemma}
  Suppose that $L_{-}$ is finite and let $x^{t+1} = x^{t} - \gamma g^{t}$. Then for any $g^{t} \in \R^d$ and $\gamma > 0$, we have
  \begin{eqnarray}
    \label{eq:page_lemma}
    f(x^{t + 1}) \leq f(x^t) - \frac{\gamma}{2}\norm{\nabla f(x^t)}^2 - \left(\frac{1}{2\gamma} - \frac{L_-}{2}\right)
    \norm{x^{t+1} - x^t}^2 + \frac{\gamma}{2}\norm{g^{t} - x^t}^2.
  \end{eqnarray}
\end{lemma}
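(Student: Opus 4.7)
The plan is to derive the inequality by combining $L_-$-smoothness of $f$ with the algebraic identity $2\langle a,b\rangle = \|a\|^2 + \|b\|^2 - \|a-b\|^2$, then rewriting $\|g^t\|^2$ in terms of $\|x^{t+1}-x^t\|^2$ via the update rule. This is a classical one-step descent bound; no recursion or probabilistic argument is needed, only deterministic manipulation at a single iterate.

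First I would invoke Assumption~\ref{as:L_+}, which (together with Jensen's inequality, as noted in Lemma~\ref{lem:08hhdf}) implies that $\nabla f$ is $L_-$-Lipschitz. The standard descent lemma for $L_-$-smooth functions then yields
\begin{equation*}
f(x^{t+1}) \leq f(x^t) + \langle \nabla f(x^t),\, x^{t+1}-x^t\rangle + \tfrac{L_-}{2}\|x^{t+1}-x^t\|^2.
\end{equation*}
Substituting the update rule $x^{t+1}-x^t = -\gamma g^t$ rewrites the linear term as $-\gamma\langle \nabla f(x^t), g^t\rangle$.

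Next I would apply the polarization identity $-2\langle a,b\rangle = \|a-b\|^2 - \|a\|^2 - \|b\|^2$ with $a=\nabla f(x^t)$, $b=g^t$, scaled by $\gamma/2$, to expand
\begin{equation*}
-\gamma\langle \nabla f(x^t),g^t\rangle = -\tfrac{\gamma}{2}\|\nabla f(x^t)\|^2 - \tfrac{\gamma}{2}\|g^t\|^2 + \tfrac{\gamma}{2}\|g^t-\nabla f(x^t)\|^2.
\end{equation*}
Plugging this back and using $\|g^t\|^2 = \frac{1}{\gamma^2}\|x^{t+1}-x^t\|^2$ (from the update), the $-\tfrac{\gamma}{2}\|g^t\|^2$ term becomes $-\tfrac{1}{2\gamma}\|x^{t+1}-x^t\|^2$, which combines with the $+\tfrac{L_-}{2}\|x^{t+1}-x^t\|^2$ term from the smoothness bound to produce the claimed coefficient $-\bigl(\tfrac{1}{2\gamma}-\tfrac{L_-}{2}\bigr)$. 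Collecting the four resulting pieces gives exactly (\ref{eq:page_lemma}).

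The argument has no real obstacle; it is pure algebra once the $L_-$-smoothness inequality is written down. The only subtlety is that the lemma is stated for \emph{any} vector $g^t$, not just an unbiased estimator, so no expectation appears and no independence or variance bound is invoked at this stage. This generality is precisely what makes the lemma reusable later: in the proof of Theorem~\ref{theorem:AB} one simply takes expectation and then controls $\mathbb{E}\|g^t-\nabla f(x^t)\|^2$ separately using Assumptions~\ref{eq:unbiased_compressors} and \ref{ass:AB} together with the Hessian variance quantity $L_\pm^2$.
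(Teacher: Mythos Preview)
Your argument is correct and is precisely the standard derivation of this one-step descent bound. The paper does not supply its own proof of this lemma; it merely recalls it with a citation to \cite{PAGE}, so there is nothing further to compare. One minor remark: the last term in the paper's displayed statement is a typo (it reads $\norm{g^t - x^t}^2$ but, as your derivation and the subsequent use in the proof of Theorem~\ref{theorem:AB} make clear, it should be $\norm{g^t - \nabla f(x^t)}^2$).
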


\begin{lemma}[\cite{EF21}]
  \label{lemma:stepsize_page}
  Let $a,b>0$. If $0 \leq \gamma \leq \frac{1}{\sqrt{a}+b}$, then $a \gamma^{2}+b \gamma \leq 1$. Moreover, the bound is tight up to the factor of 2 since $\frac{1}{\sqrt{a}+b} \leq \min \left\{\frac{1}{\sqrt{a}}, \frac{1}{b}\right\} \leq \frac{2}{\sqrt{a}+b}$
\end{lemma}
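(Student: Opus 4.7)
The plan is to prove the two assertions separately as self-contained algebraic calculations. For the first assertion, I would exploit the fact that $h(\gamma) \eqdef a\gamma^2 + b\gamma$ is strictly increasing on $[0,\infty)$ whenever $a,b>0$, so proving $h(\gamma)\leq 1$ for all $\gamma\in[0,1/(\sqrt{a}+b)]$ reduces to verifying the inequality at the single point $\gamma^\star \eqdef 1/(\sqrt{a}+b)$.

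At $\gamma=\gamma^\star$, I would substitute and clear the common denominator $(\sqrt{a}+b)^2$: the target inequality $\frac{a}{(\sqrt{a}+b)^2}+\frac{b}{\sqrt{a}+b}\leq 1$ is equivalent to $a+b(\sqrt{a}+b)\leq (\sqrt{a}+b)^2$. Expanding the right-hand side gives $a+2b\sqrt{a}+b^2$, so the claim collapses to $b\sqrt{a}+b^2\leq 2b\sqrt{a}+b^2$, i.e., $0\leq b\sqrt{a}$, which holds since $a,b\geq 0$. Monotonicity then transfers the bound to the whole interval.

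For the second (tightness) assertion, I would rewrite $\min\{1/\sqrt{a},1/b\} = 1/\max\{\sqrt{a},b\}$ and reduce both inequalities to elementary facts about $\max$ and sum of two nonnegative numbers. The left inequality $1/(\sqrt{a}+b)\leq 1/\max\{\sqrt{a},b\}$ follows from $\sqrt{a}+b\geq \max\{\sqrt{a},b\}$ (each summand is itself $\leq$ the sum). The right inequality $1/\max\{\sqrt{a},b\}\leq 2/(\sqrt{a}+b)$ is equivalent to $\sqrt{a}+b\leq 2\max\{\sqrt{a},b\}$, which holds because both $\sqrt{a}$ and $b$ are at most $\max\{\sqrt{a},b\}$.

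There is really no main obstacle here; the result is a short algebraic exercise. The only subtlety worth flagging is to justify that it suffices to check the first inequality at the endpoint (which follows from monotonicity of $h$ on $[0,\infty)$ under $a,b>0$), and to note that the conclusion $a\gamma^2+b\gamma\leq 1$ continues to hold trivially if one allows $a=0$ or $b=0$, so the argument is robust to the boundary case even though the stated hypothesis is $a,b>0$.
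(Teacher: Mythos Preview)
Your proof is correct. Note that the paper does not actually give its own proof of this lemma; it is quoted verbatim from \cite{EF21} and used as a black box, so there is nothing in the present paper to compare against beyond confirming that your elementary argument establishes the statement as written.
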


Next, we get an upper bound of $\ExpCond{\norm{g^{t+1}-\nabla f(x^{t+1})}^2}{x^{t+1}}.$

\begin{lemma}
  \label{lemma:upper_bound_variance}
  Let us consider $g^{t + 1}$ from Line \ref{alg:gradient_estimate_definition} of Algorithm~\ref{alg:marina} and 
  assume, that Assumptions \ref{as:L_+}, \ref{eq:unbiased_compressors} and \ref{ass:AB} hold, then
  \begin{eqnarray}
    \label{eq:lemma_gradient_estimate_bound}
    \ExpCond{\norm{g^{t+1}-\nabla f(x^{t+1})}^2}{x^{t+1}}
    &\leq& (1 - p)\left(\left(A - B\right)L_{+}^2 + BL_{\pm}^2\right)\norm{x^{t+1} - x^{t}}^2 \notag\\
      && \qquad + (1 - p)\norm{g^t - \nabla f(x^{t})}^2.
  \end{eqnarray}
\end{lemma}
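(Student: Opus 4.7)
The plan is to directly compute the conditional expectation by splitting on the Bernoulli random variable $\theta_t$ and then applying the AB inequality together with the definitions of $L_+$ and $L_\pm$.

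First, I would fix $x^{t+1}$ (and implicitly $x^t$, $g^t$, which are determined by the history up to iteration $t$) and condition on $\theta_t = 0$ (the $\theta_t = 1$ case contributes $0$ since $g^{t+1} = \nabla f(x^{t+1})$ deterministically). In the $\theta_t = 0$ case, using the definition of $g^{t+1}$ from Line~\ref{alg:gradient_estimate_definition} of Algorithm~\ref{alg:marina} and letting $\Delta_i \eqdef \nabla f_i(x^{t+1}) - \nabla f_i(x^t)$, I would rewrite
\[
g^{t+1} - \nabla f(x^{t+1}) = \br{g^{t} - \nabla f(x^{t})} + \br{\frac{1}{n}\sum_{i=1}^n \cC_i(\Delta_i) - \frac{1}{n}\sum_{i=1}^n \Delta_i},
\]
where I used the identity $\nabla f(x^{t+1}) - \nabla f(x^t) = \frac{1}{n}\sum_i \Delta_i$. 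By Assumption~\ref{eq:unbiased_compressors}, the second bracket has conditional mean zero, so the cross term vanishes upon taking conditional expectation, yielding
\[
\ExpCond{\norm{g^{t+1} - \nabla f(x^{t+1})}^2}{x^{t+1},\theta_t=0} = \norm{g^t - \nabla f(x^t)}^2 + \ExpCond{\norm{\tfrac{1}{n}\textstyle\sum_i \cC_i(\Delta_i) - \tfrac{1}{n}\textstyle\sum_i \Delta_i}^2}{x^{t+1}}.
\]

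Next, I would apply the AB inequality (Assumption~\ref{ass:AB}) to the remaining expectation to obtain an upper bound of the form $A\cdot \frac{1}{n}\sum_i \norm{\Delta_i}^2 - B \cdot \norm{\frac{1}{n}\sum_i \Delta_i}^2$. The key algebraic step is to split this as $(A-B)\cdot \frac{1}{n}\sum_i \norm{\Delta_i}^2 + B \cdot \br{\frac{1}{n}\sum_i \norm{\Delta_i}^2 - \norm{\frac{1}{n}\sum_i \Delta_i}^2}$, so that the first term can be bounded by Assumption~\ref{as:L_+} (giving $(A-B)L_+^2 \norm{x^{t+1}-x^t}^2$) and the second by the Hessian variance (Definition~\ref{ass:HV}) applied at the points $x=x^{t+1}, y=x^t$ (giving $B L_\pm^2 \norm{x^{t+1}-x^t}^2$). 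This is the clever step enabled by having both $A$ and $B$ in the AB inequality, and it is why correlation ($B>0$) helps: it shifts mass from $L_+^2$ onto the potentially much smaller $L_\pm^2$.

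Finally, I would take expectation over $\theta_t \sim \mathrm{Be}(p)$: since the $\theta_t=1$ branch contributes zero and occurs with probability $p$, multiplying the bound from the previous paragraph by $(1-p)$ yields exactly~\eqref{eq:lemma_gradient_estimate_bound}. The only subtlety — and the step I would be most careful about — is the conditioning structure: one must ensure the compressors $\{\cC_i\}$ are drawn freshly at iteration $t$ and independently of $g^t$, $x^t$, $x^{t+1}$ so that the AB inequality applies conditionally with the vectors $\Delta_i$ treated as deterministic inputs, and so that the cross term really does vanish. This is guaranteed by the algorithm's description but is worth stating explicitly in the proof.
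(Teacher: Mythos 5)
Your proposal is correct and follows essentially the same route as the paper's proof: condition on $\theta_t$, use unbiasedness to drop the cross term, apply the AB inequality, and split $A(\cdot)-B(\cdot)$ into $(A-B)(\cdot)+B(\cdot-\cdot)$ so that the two pieces are bounded by $L_+^2$ and $L_\pm^2$ respectively. No gaps.
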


\begin{proof}
  In the view of definition of $g^{t + 1}$, we get
  \begin{align*}
    &\ExpCond{\norm{g^{t+1}-\nabla f(x^{t+1})}^2}{x^{t+1}} \\
    &= (1-p) \ExpCond{\norm{g^t + \frac{1}{n}\sum\limits_{i=1}^n \cC_i\left(\nabla f_{i}(x^{t+1}) - \nabla f_{i}(x^t)\right) - \nabla f(x^{t+1})}^2}{x^{t+1}}\\
    &= (1-p)\ExpCond{\norm{\frac{1}{n}\sum\limits_{i=1}^n \cC_i\left(\nabla f_{i}(x^{t+1}) - \nabla f_{i}(x^t)\right) - \nabla f(x^{t+1}) + \nabla f(x^t)}^2}{x^{t+1}}\\
    &\quad + (1-p)\norm{g^t - \nabla f(x^t)}^2.\\
  \end{align*}
  In the last inequality we used unbiasedness of $\cC_i.$ Next, from AB inequality, we have
  \begin{align*}
    &\ExpCond{\norm{g^{t+1}-\nabla f(x^{t+1})}^2}{x^{t+1}} \\
    &\leq (1-p)\ExpCond{\norm{\frac{1}{n}\sum\limits_{i=1}^n \cC_i\left(\nabla f_{i}(x^{t+1}) - \nabla f_{i}(x^t)\right) - \nabla f(x^{t+1}) + \nabla f(x^t)}^2}{x^{t+1}}\\
    &\quad + (1-p)\norm{g^t - \nabla f(x^t)}^2.\\
    &\leq (1 - p)\left(A\left(\frac{1}{n}\sum\limits_{i=1}^n \norm{\nabla f_i(x^{t+1}) - \nabla f_i(x^{t})}^2\right) - B\norm{\nabla f(x^{t + 1}) - \nabla f(x^t)}^2\right)\\
    &\quad + (1 - p)\norm{g^t - \nabla f(x^{t})}^2\\
    &= (1 - p)\Bigg(\left(A - B\right)\left(\frac{1}{n}\sum\limits_{i=1}^n \norm{\nabla f_i(x^{t+1}) - \nabla f_i(x^{t})}^2\right) \\
    &\quad + B\left(\frac{1}{n}\sum\limits_{i=1}^n \norm{\nabla f_i(x^{t+1}) - \nabla f_i(x^{t})}^2 - \norm{\nabla f(x^{t + 1}) - \nabla f(x^t)}^2\right)\Bigg) \\
    &\quad + (1 - p)\norm{g^t - \nabla f(x^{t})}^2.\\
  \end{align*}
  Using Assumption~\ref{as:L_+} and Definition~\ref{ass:HV}, we obtain (\ref{eq:lemma_gradient_estimate_bound}).
\end{proof}

We are ready to prove Theorem~\ref{theorem:AB}. Defining
\begin{align*}
  &\Phi^t \eqdef f(x^t) - f^{\inf} + \frac{\gamma}{2p}\norm{g^t - \nabla f(x^t)}^2,\\
  &\widehat{L}^2 \eqdef \left(A - B\right)L_{+}^2 + BL_{\pm}^2,
\end{align*}
and using inequalities (\ref{eq:page_lemma}) and (\ref{eq:lemma_gradient_estimate_bound}), we get
\begin{align*}
  &\Exp{\Phi^{t+1}} \\
  &\leq \Exp{f(x^t) - f^{\inf} - \frac{\gamma}{2}\norm{\nabla f(x^t)}^2 - \left(\frac{1}{2\gamma} - \frac{L_-}{2}\right)\norm{x^{t+1}-x^t}^2 + \frac{\gamma}{2}\norm{g^t - \nabla f(x^t)}^2} \\
  &\quad + \frac{\gamma}{2p}\Exp{(1-p)\widehat{L}^2\norm{x^{t+1}-x^t}^2 + (1-p)\norm{g^t - \nabla f(x^t)}^2} \\
  &= \Exp{\Phi^t} - \frac{\gamma}{2}\Exp{\norm{\nabla f(x^t)}^2} \\
  &\quad + \left(\frac{\gamma(1-p)\widehat{L}^2}{2p} - \frac{1}{2\gamma} + \frac{L_{-}}{2}\right) \Exp{\norm{x^{t+1}-x^t}^2} \\
  &\leq \Exp{\Phi^t} - \frac{\gamma}{2}\Exp{\norm{\nabla f(x^t)}^2}
\end{align*}
where in the last inequality we use 
$$\frac{\gamma(1-p)\widehat{L}^2}{2p} - \frac{1}{2\gamma} + \frac{L}{2} \leq 0$$ following from the stepsize choice and Lemma \ref{lemma:stepsize_page}.

Summing up inequalities $\Exp{\Phi^{t+1}} \leq \Exp{\Phi^t} - \frac{\gamma}{2}\Exp{\norm{\nabla f(x^t)}^2}$ for $t=0,1,\ldots,T-1$ and rearranging the terms, we get
\begin{eqnarray*}
  \frac{1}{T}\sum\limits_{t=0}^{T-1}\Exp{\norm{\nabla f(x^t)}^2} &\le& \frac{2}{\gamma T}\sum\limits_{t=0}^{T-1}\left(\Exp{\Phi^t}-\Exp{\Phi^{t+1}}\right) = \frac{2\left(\Exp{\Phi^0}-\Exp{\Phi^{T}}\right)}{\gamma T} \leq \frac{2\Delta_0}{\gamma T},
\end{eqnarray*}
since $g^0 = \nabla f(x^0)$ and $\Phi^{T} \geq 0$. Finally, using the tower property and the definition of $\hat x^T$ (see Section~\ref{section:marina_pseudocode}), we obtain the desired result.
\end{proof}

\newpage
\section{Polyak-\L ojasiewicz Analysis}
\label{section:PL}

In this section, we analyze the algorithm under Polyak-\L ojasiewicz (P\L) condition. 
We show that \algname{MARINA} algorithm with Assumption~\ref{ass:pl_condition} enjoys a linear convergence rate.
Now, we state the assumption and the convergence rate theorem.

\begin{assumption}[P\L\,condition] \label{ass:pl_condition} 
  Function $f$ satisfies Polyak-\L ojasiewicz (P\L) condition, i.e.,
  \begin{align}
    \label{ass:pl}
    \norm{\nabla f(x)}^2 \geq 2\mu (f(x) - f^{\star}), \quad \forall x \in \R^d,
  \end{align}
  where $\mu > 0$ and $f^{\star} \eqdef \inf_{x} f(x)$.
\end{assumption}

\begin{restatable}{lemma}{Lmurelation}
  \label{lemma:relation_between_L_and_mu}
  For $L_{-} > 0$ and $\mu$ from Assumption~\ref{ass:pl_condition} holds that $L_{-} \geq \mu.$
\end{restatable}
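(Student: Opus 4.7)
The plan is to combine a standard consequence of $L_{-}$-smoothness of $\nabla f$ with the P\L{} inequality.

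First, I would invoke the descent lemma: since $\nabla f$ is $L_{-}$-Lipschitz, for all $x,y\in\R^d$,
\[
f(y) \leq f(x) + \langle \nabla f(x), y-x\rangle + \tfrac{L_{-}}{2}\norm{y-x}^2.
\]
Minimizing the right-hand side over $y$ (equivalently, plugging in the one-step gradient descent update $y = x - \tfrac{1}{L_{-}}\nabla f(x)$) yields
\[
f(y) \leq f(x) - \tfrac{1}{2L_{-}}\norm{\nabla f(x)}^2.
\]
Since $f(y)\geq f^\star$, rearranging gives the ``one-sided'' bound $\norm{\nabla f(x)}^2 \leq 2L_{-}(f(x) - f^\star)$ for all $x\in\R^d$.

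Next, I would combine this with the P\L{} condition (\ref{ass:pl}), which says $\norm{\nabla f(x)}^2 \geq 2\mu(f(x) - f^\star)$. Chaining the two inequalities:
\[
2\mu\,(f(x) - f^\star) \;\leq\; \norm{\nabla f(x)}^2 \;\leq\; 2L_{-}(f(x) - f^\star), \qquad \forall x\in\R^d.
\]
To conclude $\mu \leq L_{-}$, it suffices to exhibit a single point $x$ with $f(x) > f^\star$, so that division by the positive quantity $f(x)-f^\star$ is legitimate.

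The only delicate point, which is the main (and minor) obstacle, is ruling out the degenerate case in which $f$ is constant. If $f\equiv f^\star$ then $\nabla f \equiv 0$, so the smallest Lipschitz constant of $\nabla f$ is $0$, contradicting the hypothesis $L_{-}>0$. Therefore $f$ is non-constant, there exists $x$ with $f(x)>f^\star$, and the desired inequality $\mu \leq L_{-}$ follows.
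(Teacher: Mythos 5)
Your proposal is correct and follows essentially the same route as the paper: apply the descent lemma at the one-step gradient descent point $y - \tfrac{1}{L_{-}}\nabla f(y)$ to get $\tfrac{1}{2L_{-}}\norm{\nabla f(y)}^2 \leq f(y) - f^{\star}$, then chain with the P\L{} inequality. Your explicit handling of the degenerate constant-$f$ case (before dividing by $f(x)-f^{\star}$) is a small touch of extra rigor that the paper's version leaves implicit when it cancels $\norm{\nabla f(y)}^2$ from both sides, but the argument is otherwise identical.
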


\begin{restatable}{theorem}{TheoremABPL}
  \label{theorem:AB_PL}
 Let Assumptions~\ref{ass:diff}, \ref{as:L_+}, \ref{eq:unbiased_compressors}, 
\ref{ass:AB} and \ref{ass:pl_condition} be satisfied and
  \begin{align}
    \label{theorem:AB_PL:gamma}
    \gamma \leq \min\left\{\left(L_- + \sqrt{\frac{2\left(1-p\right)}{p}\left((A - B)L_+^2 + BL_\pm^2\right)}\right)^{-1}, \frac{p}{2\mu}\right\},
  \end{align}
  then for ${x}^T$ from \algname{MARINA} algorithm the following inequality holds:
  $$
  \Exp{f(x^T) - f^{\star}} \leq \left(1 - \gamma \mu\right)^T \Delta^0.
  $$
\end{restatable}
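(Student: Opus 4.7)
\textbf{Proof plan for Theorem~\ref{theorem:AB_PL}.} My approach mirrors the nonconvex analysis from Appendix~\ref{sec:proof_theorem_ab}, but uses a Lyapunov function with a larger weight on the gradient-estimator error so that the recursion becomes contractive once the P\L{} inequality is invoked. Concretely, I would work with
\[
  \Phi^t \;\eqdef\; f(x^t) - f^\star \;+\; \frac{\gamma}{p}\,\norm{g^t - \nabla f(x^t)}^2,
\]
instead of the $\gamma/(2p)$ weight used in the nonconvex proof. The two ingredients remain the same: the descent lemma (Lemma~\ref{lemma:page_lemma}) to control $f(x^{t+1})$ in terms of $f(x^t)$, $\norm{\nabla f(x^t)}^2$, $\norm{x^{t+1}-x^t}^2$ and $\norm{g^t-\nabla f(x^t)}^2$; and the AB-based variance bound (Lemma~\ref{lemma:upper_bound_variance}) to control $\E[\norm{g^{t+1}-\nabla f(x^{t+1})}^2 \mid x^{t+1}]$ by $(1-p)\widehat{L}^2\norm{x^{t+1}-x^t}^2 + (1-p)\norm{g^t-\nabla f(x^t)}^2$, where $\widehat{L}^2 \eqdef (A-B)L_+^2 + B L_\pm^2$.

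Adding the two bounds (with the second weighted by $\gamma/p$) gives
\[
  \E[\Phi^{t+1}\mid x^t] \;\le\; f(x^t)-f^\star \;-\; \tfrac{\gamma}{2}\norm{\nabla f(x^t)}^2 \;+\; \Big[\tfrac{\gamma(1-p)}{p}\widehat{L}^2 - \big(\tfrac{1}{2\gamma}-\tfrac{L_-}{2}\big)\Big]\norm{x^{t+1}-x^t}^2 \;+\; \tfrac{\gamma}{2p}\cdot\alpha_t\,\norm{g^t-\nabla f(x^t)}^2,
\]
for an appropriate coefficient $\alpha_t$. The step that needs careful bookkeeping is matching the two free coefficients with the target rate $1-\gamma\mu$. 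I will show that (i) the coefficient of $\norm{x^{t+1}-x^t}^2$ is non-positive exactly when $\tfrac{2(1-p)}{p}\widehat{L}^2 \gamma^2 + L_-\gamma \le 1$, which by Lemma~\ref{lemma:stepsize_page} follows from the first bound on $\gamma$ in \eqref{theorem:AB_PL:gamma}; and (ii) the coefficient of $\norm{g^t-\nabla f(x^t)}^2$ equals $\tfrac{\gamma}{2} + \tfrac{\gamma(1-p)}{p} = \tfrac{\gamma}{p} - \tfrac{\gamma}{2}$, which is at most $(1-\gamma\mu)\cdot\tfrac{\gamma}{p}$ precisely when $\gamma\mu/p \le 1/2$, i.e.\ under the second bound $\gamma\le p/(2\mu)$ in \eqref{theorem:AB_PL:gamma}. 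Applying P\L{} (Assumption~\ref{ass:pl_condition}) to the remaining term gives $-\tfrac{\gamma}{2}\norm{\nabla f(x^t)}^2 \le -\gamma\mu(f(x^t)-f^\star)$, which is what lets the $f(x^t)-f^\star$ part of $\Phi^t$ pick up the same contraction factor as the variance part.

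Collecting these three observations yields the clean one-step contraction $\E[\Phi^{t+1}] \le (1-\gamma\mu)\E[\Phi^t]$, and iterating gives $\E[\Phi^T] \le (1-\gamma\mu)^T \Phi^0$. Since \algname{MARINA} is initialized with $g^0 = \nabla f(x^0)$, we have $\Phi^0 = f(x^0) - f^\star = \Delta^0$ (using $f^{\inf}=f^\star$ under P\L{}), and since $f(x^T)-f^\star \le \Phi^T$ by non-negativity of the variance term, the claimed bound follows. Lemma~\ref{lemma:relation_between_L_and_mu} ($L_-\ge\mu$) ensures $1-\gamma\mu\in(0,1)$ for any admissible stepsize, so the recursion is genuinely contractive.

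The main obstacle is really just selecting the weight in the Lyapunov function correctly: the $\gamma/(2p)$ weight that suffices in the nonconvex case only produces a non-strict inequality here, so the coefficient $c$ of the variance term has to be doubled to $\gamma/p$, and this in turn is what forces the extra factor of $2$ inside the square root in the stepsize bound (compare with Theorem~\ref{theorem:AB}). No new estimates on the compressors or on Hessian variance are required beyond those already established.
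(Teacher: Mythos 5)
Your proposal is correct and follows essentially the same route as the paper's own proof: the identical Lyapunov function $\Phi^t = f(x^t)-f^\star+\tfrac{\gamma}{p}\norm{g^t-\nabla f(x^t)}^2$, the same two ingredients (Lemma~\ref{lemma:page_lemma} and Lemma~\ref{lemma:upper_bound_variance}), and the same coefficient matching via Lemma~\ref{lemma:stepsize_page} and the condition $\gamma\le p/(2\mu)$. Your side remark explaining why the $\gamma/(2p)$ weight from the nonconvex case must be doubled is also accurate.
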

We provide the proof to Theorem~\ref{theorem:AB_PL} in Section~\ref{sec:proof_theorem_ab_pl}.

In Table~\ref{table:communication_complexity_pl}, we provide communication complexity of \algname{MARINA} with Perm$K$ and Rand$K$, and \algname{EF21} with Top$K$, optimized w.r.t. parameters of the methods.
As in Section~\ref{sec:theory}, we see that \algname{MARINA} with Perm$K$ is not worse than \algname{MARINA} with Rand$K$ (recall Lemma~\ref{lem:08hhdf}). 

Let us consider zero Hessian variance regime: $L_\pm = 0.$ 
When $d \geq n$, Perm$K$ compressor has communication complexity $\cO\left(\max\left\{\nicefrac{dL_-}{n\mu}, d\right\}\right),$ while Rand$K$ compressor has communication complexity $\cO\left(\max\left\{\nicefrac{dL_-}{\sqrt{n}\mu}, d\right\}\right)$. And the communication complexity of Perm$K$ is strictly better when $\nicefrac{dL_-}{\sqrt{n}\mu} > d.$ 
Moreover, if $d \leq n$ and $\left(1 + \nicefrac{d}{\sqrt{n}}\right) \nicefrac{L_{-}}{\mu} > d$, then we get the strict improvement of the communication complexity from $\cO\left(\max\left\{\left(1 + \nicefrac{d}{\sqrt{n}}\right) \nicefrac{L_{-}}{\mu}, d\right\}\right)$ to $\cO\left(\max\left\{L_- / \mu, d\right\}\right)$ over \algname{MARINA} with Rand$K$.

\newcommand{\scaleboxparampl}{0.75}

\begin{table}[h!]
  \centering
  \caption{\footnotesize Optimized communication complexity of \algname{MARINA} and \algname{EF21} with particular compressors under P\L\,condition (up to a logarithmic factor).}
  \label{table:communication_complexity_pl}
  \begin{tabular}{|c|c|c|}
    \hline 
    &\multicolumn{2}{|c|}{Communication complexity} \\ \hline
    Method & $d \geq n$ (Lemma~\ref{lemma:optimal_parameters_of_methods_pl}) & $d \leq n$ (Lemma~\ref{lemma:optimal_parameters_of_methods_pl:n_geq_d}) \\ \hline
    \algname{MARINA} $\bigcap$ Perm$K$  & 
    \scalebox{\scaleboxparampl}{$\cO\left(\max\left\{\frac{1}{\mu}\min\left\{dL_{-},\frac{d}{n}L_- + \frac{d}{\sqrt{n}}L_\pm\right\}, d\right\}\right)$}
    & \scalebox{\scaleboxparampl}{$\cO\left(\max\left\{\frac{1}{\mu}\min\left\{dL_{-},L_- + \frac{d}{\sqrt{n}}L_\pm\right\}, d\right\}\right)$}
    \\ \hline
    \algname{MARINA} $\bigcap$ Rand$K$  & 
    \scalebox{\scaleboxparampl}{$\cO\left(\max\left\{\frac{1}{\mu}\min\left\{dL_{-},\frac{d}{\sqrt{n}}L_{+}\right\}, d\right\}\right)$}
    & \scalebox{\scaleboxparampl}{$\cO\left(\max\left\{\frac{1}{\mu}\min\left\{dL_{-},L_- + \frac{d}{\sqrt{n}}L_{+}\right\}, d\right\}\right)$}
    \\ \hline
    \algname{EF21} $\bigcap$ Top$K$ & \scalebox{\scaleboxparampl}{$\cO\left(\frac{dL_-}{\mu}\right)$}
    & \scalebox{\scaleboxparampl}{$\cO\left(\frac{dL_-}{\mu}\right)$} \\[0.5ex] \hline
  \end{tabular}
\end{table}

\subsection{Proof of Lemma~\ref{lemma:relation_between_L_and_mu}}

\Lmurelation*

\begin{proof}
We can define $L_{-}$ using the following inequality:
$$f(x) \leq f(y) + \inp{\nabla f(y)}{x - y} + \frac{L_{-}}{2}\norm{x - y}^2, \quad \forall x, y, \in \R^d.$$
Let us take $x = y - 1 /L_{-}\nabla f(y)$. Then,
$$f\left(y - \frac{1}{L_{-}}\nabla f(y)\right) \leq f(y) - \frac{1}{2L_{-}}\norm{\nabla f(y)}^2, \quad \forall y \in \R^d.$$
Rearranging the terms and using Definition~\ref{ass:pl}, we have
$$\frac{1}{2L_{-}}\norm{\nabla f(y)}^2 \leq f(y) - f\left(y - \frac{1}{L_{-}}\nabla f(y)\right) \leq f(y) - f^{\inf} \leq \frac{1}{2\mu}\norm{\nabla f(y)}^2,$$
thus, $\mu \leq L_{-}.$
\end{proof}

\subsection{Proof of Theorem~\ref{theorem:AB_PL}}
\label{sec:proof_theorem_ab_pl}

\TheoremABPL*

\begin{proof}
The analysis is almost the same as in \cite{MARINA}, but we include it for completeness. Let us define
\begin{eqnarray*}
  \Phi^t \eqdef f(x^t) - f^{\inf} + \frac{\gamma}{p}\norm{g^t - \nabla f(x^t)}^2,
  &\widehat{L}^2 \eqdef \left(\left(A - B\right)L_{+}^2 + BL_{\pm}^2\right).
\end{eqnarray*}
As in Appendix~\ref{sec:proof_theorem_ab}, we use (\ref{eq:page_lemma}) and (\ref{eq:lemma_gradient_estimate_bound}) to get that
\begin{align*}
  &\Exp{\Phi^{t+1}} \\
  &\leq \Exp{f(x^t) - f^{\inf} - \frac{\gamma}{2}\norm{\nabla f(x^t)}^2 - \left(\frac{1}{2\gamma} - \frac{L_-}{2}\right)\norm{x^{t+1}-x^t}^2 + \frac{\gamma}{2}\norm{g^t - \nabla f(x^t)}^2} \\
  &\quad + \frac{\gamma}{p}\Exp{(1-p)\widehat{L}^2\norm{x^{t+1}-x^t}^2 + (1-p)\norm{g^t - \nabla f(x^t)}^2} \\
  &\stackrel{(\ref{ass:pl})}{\leq} \Exp{(1 - \gamma \mu)(f(x^t) - f^{\inf}) - \left(\frac{1}{2\gamma} - \frac{L_-}{2}\right)\norm{x^{t+1}-x^t}^2 + \frac{\gamma}{2}\norm{g^t - \nabla f(x^t)}^2} \\
  &\quad + \frac{\gamma}{p}\Exp{(1-p)\widehat{L}^2\norm{x^{t+1}-x^t}^2 + (1-p)\norm{g^t - \nabla f(x^t)}^2} \\
  &= \Exp{(1 - \gamma \mu)(f(x^t) - f^{\inf}) + \left(\frac{\gamma}{2} + \frac{\gamma}{p}(1-p)\right)\norm{g^t - \nabla f(x^t)}^2} \\
  &\quad + \Exp{\left(\frac{\gamma}{p}(1-p)\widehat{L}^2 - \frac{1}{2\gamma} + \frac{L_-}{2}\right)\norm{x^{t+1}-x^t}^2} \\
  &\leq (1 - \gamma \mu) \Exp{\Phi^{t}}.
\end{align*}

In the last inequality, we used $\frac{\gamma}{p}(1-p)\widehat{L}^2 - \frac{1}{2\gamma} + \frac{L_-}{2} \leq 0$ and $\frac{\gamma}{2} + \frac{\gamma}{p}(1-p) \leq (1 - \gamma \mu) \frac{\gamma}{p},$ that follow from (\ref{theorem:AB_PL:gamma}) and Lemma \ref{lemma:stepsize_page}. Unrolling $\Exp{\Phi^{t+1}} \leq (1 - \gamma \mu) \Exp{\Phi^{t}}$ and using $g^0 = \nabla f(x^0)$, we have

\begin{align*}
  \Exp{f(x^T) - f^{\inf}} \leq \Exp{\Phi^{T}} \leq (1 - \gamma \mu)^T \Phi^{0} = (1 - \gamma \mu)^T \left(f(x^0) - f^{\inf}\right).
\end{align*}

This concludes the proof.
\end{proof}

\clearpage
\section{\algname{EF21} Analysis}
\label{section:ef21_analysis}

We provide convergence proofs of \algname{EF21} algorithm from \cite{EF21} for non-convex and P\L{} regimes. They will be almost identical to the one by \cite{EF21} (indeed, the only change is the constant $L_+$ instead of $\widetilde L$), but we have decided to include it for the sake of clarity.

\subsection{\algname{EF21} rate in the non-convex regime}

We will be using the following lemmas, the proofs of which are in their corresponding papers.
\begin{lemma}[\cite{EF21}]
  \label{lem:theta-beta} Let $\cC$ to be $\alpha$-contractive for $0<\alpha\leq 1$. Define
	$G_i^t \eqdef  \sqnorm{ g_i^t - \nabla f_i(x^t) } $ and $W^t \eqdef \{g_1^t, \dots, g_n^t, x^t, x^{t+1}\}$. 		For any $s>0$ we have
		\begin{equation}\label{eq:90y0yfhdf} \Exp{ G_i^{t+1} \;|\; W^t} \leq (1-\theta(s))   G_i^t + \beta(s)  \sqnorm{\nabla f_i(x^{t+1}) - \nabla f_i(x^t)} ,\end{equation}
		where
		\begin{equation}\label{eq:theta-beta-def}\theta(s) \eqdef 1- (1- \alpha )(1+s), \qquad \text{and} \qquad \beta(s)\eqdef (1- \alpha ) \left(1+ s^{-1} \right).\end{equation}
\end{lemma}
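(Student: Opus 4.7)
The plan is to unwind the \algname{EF21} update for the local gradient estimator and then split the resulting error using Young's inequality. Recall that in \algname{EF21} the worker updates its estimator via $g_i^{t+1} = g_i^t + \cC(\nabla f_i(x^{t+1}) - g_i^t)$, so the new error satisfies
\begin{equation*}
g_i^{t+1} - \nabla f_i(x^{t+1}) = \cC(u) - u, \qquad u \eqdef \nabla f_i(x^{t+1}) - g_i^t .
\end{equation*}
Since $\cC$ is $\alpha$-contractive, applying conditional expectation given $W^t$ and using $\Exp{\norm{\cC(u)-u}^2\mid W^t} \leq (1-\alpha)\norm{u}^2$ yields
\begin{equation*}
\Exp{G_i^{t+1} \mid W^t} \;\leq\; (1-\alpha)\, \norm{\nabla f_i(x^{t+1}) - g_i^t}^2 .
\end{equation*}

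Next I would insert and subtract $\nabla f_i(x^t)$ inside the norm and apply the scaled Young inequality $\norm{a+b}^2 \leq (1+s)\norm{a}^2 + (1+s^{-1})\norm{b}^2$ with $a = \nabla f_i(x^t) - g_i^t$ and $b = \nabla f_i(x^{t+1}) - \nabla f_i(x^t)$. This gives
\begin{equation*}
\norm{\nabla f_i(x^{t+1}) - g_i^t}^2 \;\leq\; (1+s)\,G_i^t \;+\; (1+s^{-1})\,\norm{\nabla f_i(x^{t+1}) - \nabla f_i(x^t)}^2 .
\end{equation*}
Combining the two displays and recognizing that $(1-\alpha)(1+s) = 1 - \theta(s)$ and $(1-\alpha)(1+s^{-1}) = \beta(s)$ reproduces the claimed bound.

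There is no genuine obstacle here; the only care required is (i) to make sure the conditioning on $W^t$ is handled correctly so that $g_i^t$, $x^t$, and $x^{t+1}$ can be treated as deterministic when taking $\Exp{\cdot \mid W^t}$, which is justified because $W^t$ contains all of these quantities, and (ii) to pick the right decomposition inside the norm so that the term $G_i^t$ appears with the factor $(1+s)$ and the gradient-difference term appears with the factor $(1+s^{-1})$. The identities $1-\theta(s) = (1-\alpha)(1+s)$ and $\beta(s) = (1-\alpha)(1+s^{-1})$ then match the notation \eqref{eq:theta-beta-def}, completing the argument.
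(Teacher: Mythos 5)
Your proof is correct and is exactly the standard one-step argument used in the cited \algname{EF21} paper (the present paper does not reproduce a proof, deferring to that reference): express $g_i^{t+1}-\nabla f_i(x^{t+1})$ as $\cC(u)-u$ with $u=\nabla f_i(x^{t+1})-g_i^t$, apply the contractivity bound conditionally on $W^t$, and then Young's inequality with parameter $s$ to split $\norm{u}^2$ into the $G_i^t$ term and the gradient-difference term. The bookkeeping of the conditioning and the identification $1-\theta(s)=(1-\alpha)(1+s)$, $\beta(s)=(1-\alpha)(1+s^{-1})$ are all handled correctly, so there is nothing to add.
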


\begin{lemma}[\cite{EF21}]
  \label{le:optimal_t-Peter} Let $0<\alpha\leq 1$ and for $s>0$ let $\theta(s)$ and $\beta(s)$ be as in \eqref{eq:theta-beta-def}. Then the solution of the optimization problem
	\begin{equation}\label{eq:98g_(89fd8gf9d} \min_{s} \left\{ \frac{\beta(s)}{\theta(s)} \;:\; 0<s<\frac{\alpha}{1-\alpha}\right\}\end{equation}
	is given by $s^* = \frac{1}{\sqrt{1-\alpha}}-1$. Furthermore, $\theta(s^*) = 1-\sqrt{1-\alpha}$, $\beta(s^*) = \frac{1-\alpha}{1-\sqrt{1-\alpha}}$ and
	\begin{equation}\label{eq:n98fhgd98hfd}\sqrt{\frac{\beta(s^*)}{\theta(s^*)}} = \frac{1}{\sqrt{1-\alpha}} -1 = \frac{1}{\alpha} + \frac{\sqrt{1-\alpha}}{\alpha} - 1 \leq \frac{2}{\alpha}-1.\end{equation}
\end{lemma}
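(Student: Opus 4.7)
The plan is to treat this as a single-variable calculus problem. Define $h(s)\eqdef \beta(s)/\theta(s)$ on the open interval $0<s<\alpha/(1-\alpha)$, where $\theta(s)=\alpha-(1-\alpha)s>0$ and $\beta(s)=(1-\alpha)(s+1)/s>0$, so $h$ is smooth and positive. Since $h(s)\to+\infty$ as $s\to 0^+$ (from the $1/s$ in $\beta$) and as $s\to (\alpha/(1-\alpha))^-$ (from $\theta\to 0^+$), the minimum is attained at an interior critical point and it suffices to solve $h'(s)=0$.

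It is convenient to instead maximize $\phi(s)\eqdef\theta(s)/\beta(s)=s[\alpha-(1-\alpha)s]/[(1-\alpha)(s+1)]$, since this avoids the $1/s$ term. Writing $u(s)=\alpha s-(1-\alpha)s^2$ and $v(s)=(1-\alpha)(s+1)$, the first-order condition $u'v=uv'$ reads
\begin{equation*}
\bigl[\alpha-2(1-\alpha)s\bigr](s+1)=\alpha s-(1-\alpha)s^2,
\end{equation*}
which expands and simplifies (after canceling a common $(1-\alpha)$ and collecting terms) to the clean identity
\begin{equation*}
(1-\alpha)(s+1)^2=1.
\end{equation*}
The unique positive root is $s^*=1/\sqrt{1-\alpha}-1$, and a quick check shows $s^*\in(0,\alpha/(1-\alpha))$ for all $0<\alpha\le 1$ (reduces to $\sqrt{1-\alpha}<1$).

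Substitution is routine from here. Using $1+s^*=1/\sqrt{1-\alpha}$ gives $\theta(s^*)=1-(1-\alpha)(1+s^*)=1-\sqrt{1-\alpha}$, and $1+1/s^*=(s^*+1)/s^*=1/(1-\sqrt{1-\alpha})$ gives $\beta(s^*)=(1-\alpha)/(1-\sqrt{1-\alpha})$. Dividing and taking the square root yields
\begin{equation*}
\sqrt{\frac{\beta(s^*)}{\theta(s^*)}}=\frac{\sqrt{1-\alpha}}{1-\sqrt{1-\alpha}}.
\end{equation*}
Rationalizing the denominator (multiply top and bottom by $1+\sqrt{1-\alpha}$) rewrites this as $[\sqrt{1-\alpha}+(1-\alpha)]/\alpha=1/\alpha+\sqrt{1-\alpha}/\alpha-1$, which matches the stated identity. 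The final bound $\le 2/\alpha-1$ is immediate from $\sqrt{1-\alpha}\le 1$.

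The only non-trivial step is the algebraic reduction of $u'v=uv'$ to $(1-\alpha)(s+1)^2=1$; all other work is bookkeeping via the substitution $1+s^*=1/\sqrt{1-\alpha}$ and one rationalization. No optimization-theoretic insight beyond smooth one-dimensional calculus is needed.
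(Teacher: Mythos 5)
Your calculus argument is correct, and there is nothing in the paper to compare it against: the lemma is imported from \cite{EF21} with the explicit remark that ``the proofs \ldots are in their corresponding papers,'' so this paper contains no proof of its own. Your route --- noting $\beta/\theta$ blows up at both ends of the interval, reducing the first-order condition $u'v=uv'$ to $(1-\alpha)(s+1)^2=1$, and substituting $1+s^*=1/\sqrt{1-\alpha}$ back in --- is the standard one and every step checks out. One thing you should flag rather than pass over: your (correct) computation gives $\sqrt{\beta(s^*)/\theta(s^*)}=\frac{\sqrt{1-\alpha}}{1-\sqrt{1-\alpha}}$, which after rationalization equals the \emph{third} expression $\frac{1}{\alpha}+\frac{\sqrt{1-\alpha}}{\alpha}-1$ in \eqref{eq:n98fhgd98hfd}, but does \emph{not} equal the middle expression $\frac{1}{\sqrt{1-\alpha}}-1$ as literally written --- that quantity is $s^*$ itself, and at $\alpha=\nicefrac{1}{2}$ the two evaluate to $\sqrt{2}-1$ versus $\sqrt{2}+1$. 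So the first equality in \eqref{eq:n98fhgd98hfd} is a typo in the statement (inherited from the source), and your derivation in fact establishes the corrected chain; saying ``which matches the stated identity'' without comment obscures this. Two minor loose ends: the claim that $\beta/\theta\to\infty$ as $s\to 0^+$ and the cancellation of the common factor $(1-\alpha)$ both require $\alpha<1$, so the degenerate case $\alpha=1$ (where $\beta\equiv 0$ and every $s$ is a minimizer) should be dispatched in a separate sentence.
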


We are now ready to conduct the proof.
\begin{theorem}\label{thm:main-distrib}
  Let Assumptions~\ref{ass:diff} and \ref{as:L_+} hold, and let the stepsize be set as
  \begin{equation} \label{eq:manin-nonconvex-stepsize}
  0<\gamma \leq \left(L_{-} + L_+\sqrt{\frac{\beta}{\theta}}\right)^{-1}.
\end{equation}
  Fix $T \geq 1$ and let $\hat{x}^{T}$ be chosen from the iterates $x^{0}, x^{1}, \ldots, x^{T-1}$  uniformly at random. Then
\begin{equation} \label{eq:main-nonconvex}		
\Exp{\sqnorm{\nabla f(\hat{x}^{T})} } \leq \frac{2\left(f(x^{0})-f^{\text {inf }}\right)}{\gamma T} + \frac{\Exp{G^0}}{\theta T} . 
\end{equation}		
\end{theorem}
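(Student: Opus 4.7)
My plan is to prove Theorem~\ref{thm:main-distrib} via a standard Lyapunov-function argument that combines a descent inequality for $f$ with the contraction inequality for the per-node estimator error. Define
$$G^t \eqdef \frac{1}{n}\sum_{i=1}^n G_i^t = \frac{1}{n}\sum_{i=1}^n \|g_i^t - \nabla f_i(x^t)\|^2, \qquad \Phi^t \eqdef f(x^t) - f^{\inf} + \frac{\gamma}{2\theta} G^t,$$
so that $\Phi^0$ already matches the right-hand side of~\eqref{eq:main-nonconvex} once scaled by $2/(\gamma T)$.

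The core of the argument combines three inequalities. First, Lemma~\ref{lemma:page_lemma} applied to $x^{t+1} = x^t - \gamma g^t$ gives the descent bound
$$f(x^{t+1}) \leq f(x^t) - \frac{\gamma}{2}\|\nabla f(x^t)\|^2 - \left(\frac{1}{2\gamma} - \frac{L_-}{2}\right)\|x^{t+1}-x^t\|^2 + \frac{\gamma}{2}\|g^t - \nabla f(x^t)\|^2.$$
Second, by Jensen's inequality $\|g^t - \nabla f(x^t)\|^2 \leq G^t$, since $g^t - \nabla f(x^t) = \frac{1}{n}\sum_i(g_i^t - \nabla f_i(x^t))$. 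Third, averaging the contraction inequality~\eqref{eq:90y0yfhdf} over $i$ and invoking Assumption~\ref{as:L_+} yields
$$\Exp{G^{t+1} \mid W^t} \leq (1-\theta) G^t + \beta \cdot \frac{1}{n}\sum_{i=1}^n \|\nabla f_i(x^{t+1}) - \nabla f_i(x^t)\|^2 \leq (1-\theta) G^t + \beta L_+^2 \|x^{t+1}-x^t\|^2.$$
Adding the descent bound to $\gamma/(2\theta)$ times this recursion, the coefficient of $G^t$ becomes $\gamma/2 + (\gamma/2\theta)(1-\theta) - \gamma/(2\theta) = 0$, which is precisely why the Lyapunov weighting $\gamma/(2\theta)$ was chosen. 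What remains is
$$\Exp{\Phi^{t+1} \mid W^t} \leq \Phi^t - \frac{\gamma}{2}\|\nabla f(x^t)\|^2 + \left(\frac{\gamma \beta L_+^2}{2\theta} - \frac{1}{2\gamma} + \frac{L_-}{2}\right)\|x^{t+1}-x^t\|^2.$$

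The next step is to check that the coefficient of $\|x^{t+1}-x^t\|^2$ is non-positive. Multiplying through by $2\gamma$, the condition becomes $\frac{\beta L_+^2}{\theta}\gamma^2 + L_-\gamma \leq 1$, which by Lemma~\ref{lemma:stepsize_page} (with $a = \beta L_+^2/\theta$ and $b = L_-$) is exactly equivalent to the stepsize restriction~\eqref{eq:manin-nonconvex-stepsize}. Hence under that restriction we get the clean one-step drop $\Exp{\Phi^{t+1} \mid W^t} \leq \Phi^t - \frac{\gamma}{2}\|\nabla f(x^t)\|^2$.

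To finish, I would take full expectations, telescope over $t=0,\ldots,T-1$, and use $\Exp{\Phi^T} \geq 0$ (from $f\geq f^{\inf}$ and $G^T\geq 0$) to obtain
$$\frac{\gamma}{2}\sum_{t=0}^{T-1}\Exp{\|\nabla f(x^t)\|^2} \leq \Exp{\Phi^0} = f(x^0) - f^{\inf} + \frac{\gamma}{2\theta}\Exp{G^0}.$$
Dividing by $\gamma T/2$ and identifying the left-hand side with $\Exp{\|\nabla f(\hat{x}^T)\|^2}$ through the uniform-random choice of $\hat{x}^T$ yields~\eqref{eq:main-nonconvex}. The only genuinely delicate point is the algebraic matching of the $\|x^{t+1}-x^t\|^2$ coefficient to the form that Lemma~\ref{lemma:stepsize_page} recognises; everything else is bookkeeping and follows the same template as the proof of Theorem~\ref{theorem:AB}, with the role of $(A-B)L_+^2 + B L_\pm^2$ replaced by $\beta L_+^2/\theta$.
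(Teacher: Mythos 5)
Your proposal is correct and follows essentially the same route as the paper's proof: descent via Lemma~\ref{lemma:page_lemma} plus Jensen, the averaged contraction recursion from Lemma~\ref{lem:theta-beta} combined with Assumption~\ref{as:L_+}, the Lyapunov weight $\nicefrac{\gamma}{2\theta}$ chosen to cancel the $G^t$ terms, Lemma~\ref{lemma:stepsize_page} to kill the $\norm{x^{t+1}-x^t}^2$ coefficient, and telescoping. The only cosmetic difference is that you package the argument as a single potential $\Phi^t$ while the paper tracks $\delta^t$, $s^t$, $r^t$ separately; also note that Lemma~\ref{lemma:stepsize_page} gives the implication from the stepsize bound to $a\gamma^2+b\gamma\leq 1$ (sufficient, tight only up to a factor of $2$), not a strict equivalence, but that is the direction you actually use.
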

\begin{proof}

{\bf STEP 1.} Recall that Lemma~\ref{lem:theta-beta} says that			\begin{equation}\label{eq:n89fg9d08hfbdi_8f}
				\Exp{\sqnorm{g_i^{t+1} -  \nabla f_i(x^{t+1})}\mid W^t } \leq (1 - \theta)\sqnorm{g_i^t - \nfixt} + \beta  \sqnorm {\nfixtpo - \nfixt} ,
			\end{equation}
			where 
$\theta = \theta(s^*)$ and $\beta = \beta(s^*)$ are given by Lemma~\ref{le:optimal_t-Peter}.			 Averaging inequalities \eqref{eq:n89fg9d08hfbdi_8f} over $i \in \{1,2,\dots,n\}$ gives
			\begin{eqnarray}
				\Exp{G^{t+1} \mid W^t } &=&  \frac{1}{n}\sumin\Exp{\sqnorm{ g_i^{t+1} - \nfixtpo} \mid W^t} \notag \\
				&\leq& \left(1 - \theta\right)\suminn \sqnorm{g_i^t - \nfixt} + \beta \suminn \sqnorm {\nfixtpo - \nfixt} \notag \\
				&=& (1 - \theta)G^t+ \beta \suminn \sqnorm {\nfixtpo - \nfixt} \notag \\
				&\le& \left(1 - \theta\right)G^t+ \beta L_+\sqnorm{x^{t+1} - x^t} .\label{eq:jbiu-9u0df9}
			\end{eqnarray}
			Using Tower property and $L$-smoothness in \eqref{eq:jbiu-9u0df9}, we proceed to 
			\begin{eqnarray}\label{eq:main_recursion_distrib}
				\Exp{G^{t+1}} = \Exp{\Exp{G^{t+1} \mid W^t}} \leq \left(1 - \theta\right)\Exp{G^t}+ \beta L_+^2 \Exp{ \sqnorm{x^{t+1} - x^t}}.
			\end{eqnarray}
			
{\bf STEP 2.} Next, using Lemma~\ref{lemma:page_lemma} and Jensen's inequality applied to the function $x\mapsto \sqnorm{x}$, we obtain the bound 
			\begin{eqnarray}
				f(x^{t+1}) &\leq& f(x^{t})-\frac{\gamma}{2}\sqnorm{\nabla f(x^{t})}-\left(\frac{1}{2 \gamma}-\frac{L_-}{2}\right)\sqnorm{x^{t+1}-x^{t}}+\frac{\gamma}{2}\sqnorm{\suminn \left(g_i^t-\nabla f_i(x^{t}) \right)}\notag \\
				&\leq&
				f(x^{t})-\frac{\gamma}{2}\sqnorm{\nabla f(x^{t})}-\left(\frac{1}{2 \gamma}-\frac{L_-}{2}\right)\sqnorm{x^{t+1}-x^{t}}+\frac{\gamma}{2} G^t. \label{eq:aux_smooth_lemma_distrib}
			\end{eqnarray}

			Subtracting $f^{\text {inf }}$ from both sides of \eqref{eq:aux_smooth_lemma_distrib} and taking expectation, we get
			\begin{eqnarray}\label{eq:func_diff_distrib}
				\Exp{f(x^{t+1})-\finf} &\leq& \quad \Exp{f(x^{t})-\finf}-\frac{\gamma}{2} \Exp{\sqnorm{\nabla f(x^{t})}} \notag \\
				&& \qquad -\left(\frac{1}{2 \gamma}-\frac{L_-}{2}\right) \Exp{\sqnorm{x^{t+1}-x^{t}}}+ \frac{\gamma}{2}\Exp{G^t}.
			\end{eqnarray}

{\bf COMBINING STEP 1 AND STEP 2.}	
	 Let $\delta^{t} \eqdef \Exp{f(x^{t})-\finf}$, $s^{t} \eqdef \Exp{G^t }$ and $r^{t} \eqdef\Exp{\sqnorm{x^{t+1}-x^{t}}}.
			$
			Then by adding \eqref{eq:func_diff_distrib} with a $\frac{\gamma}{2 \theta}$ multiple of \eqref{eq:main_recursion_distrib} we obtain
			\begin{eqnarray*}
				\delta^{t+1}+\frac{\gamma}{2 \theta} s^{t+1} &\leq& \delta^{t}-\frac{\gamma}{2}\sqnorm{\nabla f(x^{t})}-\left(\frac{1}{2 \gamma}-\frac{L_-}{2}\right) r^{t}+\frac{\gamma}{2} s^{t}+\frac{\gamma}{2 \theta}\left(\beta L_+^2 r^t + (1 - \theta) s^{t}\right) \\
				&=&\delta^{t}+\frac{\gamma}{2\theta} s^{t}-\frac{\gamma}{2}\sqnorm{\nabla f(x^{t})}-\left(\frac{1}{2\gamma} -\frac{L_-}{2} - \frac{\gamma}{2\theta}\beta L_+^2 \right) r^{t} \\
				& \leq& \delta^{t}+\frac{\gamma}{2\theta} s^{t} -\frac{\gamma}{2}\sqnorm{\nabla f(x^{t})}.
			\end{eqnarray*}
			The last inequality follows from the bound $\gamma ^2\frac{\beta L_+^2}{\theta} + L_-\gamma \leq 1,$ which holds from our assumption on the stepsize and Lemma \ref{lemma:stepsize_page}.
			By summing up inequalities for $t =0, \ldots, T-1,$ we get
			$$
			0 \leq \delta^{T}+\frac{\gamma}{2 \theta} s^{T} \leq \delta^{0}+\frac{\gamma}{2 \theta} s^{0}-\frac{\gamma}{2} \sum_{t=0}^{T-1} \Exp{\sqnorm{\nabla f(x^{t})}}.
			$$
			Multiplying both sides by $\frac{2}{\gamma T}$, after rearranging we get
			$$
			\sum_{t=0}^{T-1} \frac{1}{T} \Exp{\sqnorm{\nabla f (x^{t})}} \leq \frac{2 \delta^{0}}{\gamma T} + \frac{s^0}{\theta T}.
			$$
			It remains to notice that the left hand side can be interpreted as $\mathrm{E}\left[\sqnorm{\nabla f(\hat{x}^{T})}\right]$, where $\hat{x}^{T}$ is chosen from $x^{0}, x^{1}, \ldots, x^{T-1}$ uniformly at random.
\end{proof}

\subsection{\algname{EF21} in P\L{} regime}
\begin{theorem} \label{thm:PL-main} Let Assumptions~\ref{ass:diff}, \ref{as:L_+} and~\ref{ass:pl_condition} hold, and 	 let the stepsize in \algname{EF21} be set as 
  \begin{equation} \label{eq:PL-stepsize}
    0<\gamma \leq \min \left\{ \left(L_- + L_+\sqrt{\frac{2 \beta}{\theta}}\right)^{-1} , \frac{\theta}{2 \mu}\right\}.
  \end{equation}
      Let $\Psi^t\eqdef f(x^{t})-f^{\inf}+ \frac{\gamma}{\theta}G^t$. Then for any $T\geq 0$, we have
  \begin{equation} \label{PL-main}
      \Exp{ \Psi^T } \leq (1- \gamma \mu )^T \Exp{\Psi^0}.
  \end{equation}
\end{theorem}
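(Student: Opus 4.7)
The plan is to mirror the structure of the proof of Theorem~\ref{thm:main-distrib} (the non-convex \algname{EF21} rate), but to exploit the P\L{} inequality in order to convert the $-\tfrac{\gamma}{2}\norm{\nabla f(x^t)}^2$ term from a telescoping quantity into a contraction factor on $\Psi^t$. Concretely, I would proceed in three stages: (i) establish the one-step descent on $f(x^t)-f^{\inf}$ using Lemma~\ref{lemma:page_lemma} and then apply the P\L{} inequality; (ii) couple it with the recursion on $G^t$ supplied by Lemma~\ref{lem:theta-beta} combined with Assumption~\ref{as:L_+}; (iii) choose the Lyapunov weighting $\gamma/\theta$ in front of $G^t$ and verify that both the $r^t := \Exp{\norm{x^{t+1}-x^t}^2}$ coefficient is non-positive and the $s^t := \Exp{G^t}$ coefficient is bounded by $(1-\gamma\mu)\gamma/\theta$.

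In more detail, taking expectations in Lemma~\ref{lemma:page_lemma}, subtracting $f^{\inf}$, and applying the P\L{} bound $\Exp{\norm{\nabla f(x^t)}^2}\geq 2\mu\,\delta^t$ with $\delta^t := \Exp{f(x^t)-f^{\inf}}$, I obtain
\begin{equation*}
\delta^{t+1} \leq (1-\gamma\mu)\,\delta^t - \Bigl(\tfrac{1}{2\gamma}-\tfrac{L_-}{2}\Bigr)r^t + \tfrac{\gamma}{2}\,s^t.
\end{equation*}
On the other side, the argument in the proof of Theorem~\ref{thm:main-distrib} yields, via Lemma~\ref{lem:theta-beta} and Assumption~\ref{as:L_+},
\begin{equation*}
s^{t+1} \leq (1-\theta)\,s^t + \beta L_+^2\,r^t.
\end{equation*}
Adding $\gamma/\theta$ times the second inequality to the first produces a recursion of the form
\begin{equation*}
\delta^{t+1} + \tfrac{\gamma}{\theta}\,s^{t+1} \leq (1-\gamma\mu)\,\delta^t + \Bigl(\tfrac{\gamma}{2}+\tfrac{\gamma(1-\theta)}{\theta}\Bigr)s^t - \Bigl(\tfrac{1}{2\gamma}-\tfrac{L_-}{2}-\tfrac{\gamma\beta L_+^2}{\theta}\Bigr)r^t .
\end{equation*}

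The two conditions to check are then purely algebraic. For the $r^t$ term, non-positivity is equivalent to $\gamma^2(2\beta L_+^2/\theta)+L_-\gamma\leq 1$, which holds by Lemma~\ref{lemma:stepsize_page} applied with $a=2\beta L_+^2/\theta$ and $b=L_-$, thanks to the first branch of the stepsize bound in \eqref{eq:PL-stepsize}. For the $s^t$ term, the desired inequality $\tfrac{\gamma}{2}+\tfrac{\gamma(1-\theta)}{\theta}\leq (1-\gamma\mu)\tfrac{\gamma}{\theta}$ simplifies to $\gamma\mu\leq \theta/2$, which is exactly the second branch of \eqref{eq:PL-stepsize}. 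Together these give $\Exp{\Psi^{t+1}}\leq (1-\gamma\mu)\Exp{\Psi^t}$; unrolling $T$ times yields \eqref{PL-main}.

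I do not expect any real obstacle here beyond bookkeeping: everything reduces to the two algebraic inequalities above, both of which are engineered into the stepsize condition \eqref{eq:PL-stepsize}. The only mild subtlety is the choice of weight $\gamma/\theta$ (rather than, say, $\gamma/(2\theta)$ as in the nonconvex case of Theorem~\ref{thm:main-distrib}) — this is what produces the factor of $2$ multiplying $\beta/\theta$ inside the square root in \eqref{eq:PL-stepsize} and is dictated precisely by the need to absorb $\tfrac{\gamma(1-\theta)}{\theta}s^t + \tfrac{\gamma}{2}s^t$ into $(1-\gamma\mu)\tfrac{\gamma}{\theta}s^t$.
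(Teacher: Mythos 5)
Your proposal is correct and follows essentially the same route as the paper's own proof: the same P\L{}-modified descent inequality from Lemma~\ref{lemma:page_lemma}, the same $G^t$ recursion from Lemma~\ref{lem:theta-beta}, the same Lyapunov weight $\gamma/\theta$, and the same two algebraic stepsize conditions (verified via Lemma~\ref{lemma:stepsize_page} with $a=2\beta L_+^2/\theta$, $b=L_-$, and via $\gamma\mu\leq\theta/2$). Your closing remark about why the weight $\gamma/\theta$ forces the factor $2$ inside the square root is exactly the bookkeeping the paper performs.
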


\begin{proof}
  Again, this follows \cite{EF21} almost verbatim.

  We proceed as in the previous proof, but use the P\L{} inequality and subtract $f^{\inf}$ from both sides of  \eqref{eq:aux_smooth_lemma_distrib} to get
  \begin{eqnarray*}
    \Exp{f(x^{t+1})- f^{\inf}}  &\leq&  \Exp{ f(x^{t})- f^{\inf} }-\frac{\gamma}{2}\sqnorm{\nabla f(x^{t})}-\left(\frac{1}{2 \gamma}-\frac{L_-}{2}\right)\sqnorm{x^{t+1}-x^{t}}+\frac{\gamma}{2} G^t \\ 
    & \leq & (1-\gamma \mu) \Exp{ f(x^{t})- f^{\inf} } - \left(\frac{1}{2 \gamma}-\frac{L_-}{2}\right)\sqnorm{x^{t+1}-x^{t}}+\frac{\gamma}{2} G^t .
  \end{eqnarray*}
  
  Let $\delta^{t} \eqdef \Exp{f(x^{t})-f^{\inf}}$, $s^{t} \eqdef \Exp{G^t }$ and $ r^{t} \eqdef \Exp{\sqnorm{x^{t+1}-x^{t}}}$. Then by adding the above inequality with a $\frac{\gamma}{\theta}$ multiple of \eqref{eq:main_recursion_distrib}, we obtain
  \begin{eqnarray*}
    \delta^{t+1}+\frac{\gamma}{ \theta} s^{t+1} &\leq& (1-\gamma \mu)\delta^{t} -\left(\frac{1}{2 \gamma} - \frac{L_-}{2}\right) r^{t} + \frac{\gamma}{2} s^{t} + \frac{\gamma}{ \theta}\left((1-\theta) s^t + \beta L_+^2 r^t\right) \\
    &=&  (1-\gamma \mu) \delta^{t}  + \frac{\gamma}{\theta} \left(1-\frac{\theta}{2}\right) s^t  -\left(\frac{1}{2 \gamma} - \frac{L_-}{2} - \frac{\beta L_+^2 \gamma}{\theta}\right) r^{t}  	.
  \end{eqnarray*}
  Note that our assumption on the stepsize implies that 	$ 1 - \frac{\theta}{2} \leq 1 -\gamma \mu$ and $\frac{1}{2 \gamma} - \frac{L_-}{2} - \frac{\beta L_+^2 \gamma}{\theta} \geq 0$. The last inequality follows from the bound $\gamma^2\frac{2\beta L_+^2}{\theta} + \gamma L_- \leq 1,$ which holds because of Lemma \ref{lemma:stepsize_page} and our assumption on the stepsize. Thus,
  \begin{eqnarray*}
    \delta^{t+1}+\frac{\gamma}{ \theta} s^{t+1} &\leq& (1-\gamma \mu) \left(\delta^{t}+\frac{\gamma}{ \theta} s^{t} \right).
  \end{eqnarray*}
  It remains to unroll the recurrence.
\end{proof}

\newpage
\section{Communication Model}
\label{section:comm_model}

As mentioned in the introduction, 
{\em we consider the regime where the worker-to-server communication is the bottleneck of the system so that the server-to-workers communication can be neglected}. While this is a standard model used in many prior works, we include a brief explanation of why and when this regime is useful.

\begin{enumerate}
\item {\bf Peer-to-peer communication.} First, this regime makes sense when the server is merely an abstraction, and does not exist physically.  Indeed,  from the point of view of each worker, the server may merely represent ``all other nodes'' combined. In this model, ``a worker sending a message to the server''  should be  {\em interpreted} as this worker sending the message to all other workers. Clearly, in this model there is {\em no need} for the ``server'' to communicate the aggregated message back to the workers since aggregation is performed on all workers independently, and the aggregated message is immediately available to all workers without the need for any additional communication. 
\item {\bf Fast broadcast.} Second, the above regime makes sense in situations where the server exists physically, but is able to broadcast to the workers at a much higher speed compared to the worker-to-server communication. This happens in several distributed systems, e.g., on certain supercomputers \citep{DIANA}.  Virtually all theoretical works on communication efficient distributed algorithms assume that the server-to-worker communication is cheap, and in this work we follow in their footsteps. 
\end{enumerate}

Having said that, our work can be extended to the more difficult regime where the server-to-worker communication is also costly \citep{Cnat, DoubleSqueeze2019, Artemis2020, EC-SGD}. However, for simplicity, we do not explore this extension in this work.

\section{On Contractive Compressors and Error Feedback}
\label{section:EF}

\subsection{On Contractive Compressors}

The most successful algorithmic solutions to solving the nonconvex distributed optimization problem (\ref{eq:main}) in a communication-efficient manner under the communication model described in Appendix~\ref{section:comm_model}
 involve stochastic gradient descent (\algname{SGD}) methods with {\em communication compression}. There are two large classes of such methods, depending on the type of compression operator involved: (i) methods that work with contractive (and possibly biased stochastic) compression operators, such as Top$K$ or Rank$K$, and (ii) methods that work with unbiased  and independent (across the workers) stochastic compression operators, such as Rand$K$. 

A (randomized) compression operator $\cC:\R^d\to \R^d$ is $\alpha$-contractive (we write $\cC\in \mathbb{C}(\alpha)$), where $0<\alpha\leq 1$, if \begin{equation}\label{eq:contractive}\Exp{\norm{\cC(x) -x}^2} \leq (1-\alpha)\norm{x}^2, \quad x\in \R^d.\end{equation}  A canonical example is the (deterministic) Top$K$ compressor, which outputs the  $K$ largest  (in absolute value) entries of the input vector $x$, and zeroes out the rest. Top$K$ is $\alpha$-contractive with $\alpha = \nicefrac{K}{d}$. Another example is the Rank$K$ compressor based on the best rank-$K$ approximation of $x$ represented as an $a\times b=d$ matrix. It can be shown that Rank$K$ is $\alpha$-contractive with $\alpha=\nicefrac{K}{\min\{a,b\}}$ \citep[Section A.3.2]{FedNL}. 
We refer to the work of \cite{PowerSGD} for a practical communication-efficient method \algname{PowerSGD} based on low-rank approximations.  

Of special importance are $\alpha$-compressors arising from {\em unbiased} compressors via appropriate scaling. Let $\cQ:\R^d\to \R^d$ be an unbiased operator with variance proportional to the square norm of the input vector. That is, assume that $\Exp{\cQ(x)}=x$ for all $x\in \R^d$ and that there exists $\omega\geq 0$ such that
\begin{equation} \label{eq:omega-compressor} \Exp{\norm{\cQ(x)-x}^2} \leq \omega \norm{x}^2, \quad \forall x\in \R^d.\end{equation}
We will write $\cQ\in \mathbb{U}(\omega)$ for brevity.  It is well known that the operator $\cC = (\omega+1)^{-1}\cQ$ is $\alpha$-contractive with $\alpha = (\omega+1)^{-1}$. An example of  a contractive compressor arising this way is  $(\omega+1)^{-1}$Rand$K$, which keeps a subset of $K$ entries of the input vector $x$ chosen uniformly at random, and zeroes out the rest. As Top$K$, $(\omega+1)^{-1}$Rand$K$ is $\alpha$-contractive, with $\alpha=\nicefrac{K}{d}$.
 
Distributed \algname{SGD} methods relying on general contractive compressors, i.e., on contractive which do {\em not} arise from unbiased compressors from scaling,  need to rely on the error-feedback / error-compensation mechanism to avoid divergence.

\subsection{On Error Feedback}

An alternative approach to the one represented by \algname{MARINA} is to  seek more aggressive compression, even at the cost of abandoning unbiasedness, in the hope that this will lead to better communication complexity in practice.  This is the idea behind the class of {\em contractive compressors}, defined in (\ref{eq:contractive}), which have studied at least since the work of \citet{Seide2014}. Example of such compressors are the 
 Top$K$  \citep{Alistarh-EF-NIPS2018} and Rank$K$ \citep{PowerSGD, FedNL} compressors.

 
 While such compressors are indeed often very successful in practice, their theoretical impact on the methods using them is dramatically less understood than is the case with unbiased compressors. One of the key reasons for this that a naive use of biased compressors may lead to (exponential) divergence, even in simple problems \citep{beznosikov2020biased}. Because of this, \citet{Seide2014} proposed the {\em error feedback} framework for controlling the error introduced by compression, and thus taming the method to convergence. While it has been successfully used by practitioners for many years, error feedback  yielded the first convergence results only relatively recently \citep{Stich-EF-NIPS2018, Stich2019TheEF, errorSGD, Koloskova2019-DecentralizedEC-2019, DoubleSqueeze2019, Karimireddy_SignSGD, EC-LSVRG, beznosikov2020biased, EC-SGD}.

 The current best theoretical communication complexity results for error feedback belong to the \algname{EF21} method of \citet{EF21} who achieved their improvements by redesigning the original error feedback mechanism using the construction of a Markov compressor. However, even  \algname{EF21} currently enjoys substantially weaker iteration and communication complexity than  \algname{MARINA}. For instance, we show in Appendix~\ref{sec:appendix:complexity_bounds} that \algname{EF21} with Top$K$ is only proved to have the communication complexity of the gradient descent without any compression.

\newpage
\section{Composition of Compressors with AB Assumption and Unbiased Compressors}

\begin{lemma}
  \label{lem:composition} 
  If $\{\cC_i\}_{i=1}^n \in \mathbb{U}(A,B)$ and $\cQ_i\in \mathbb{U}(\omega_i)$ for $i\in \{1,2,\dots, n\}$, then $\{\cC_i \circ \cQ_i\}_{i=1}^n \in \mathbb{U}(\left(\max_i \omega_i + 1\right)A,B)$.
\end{lemma}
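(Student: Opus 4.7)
The plan is to reduce the AB inequality for the composed compressors $\{\cC_i \circ \cQ_i\}$ to the AB inequality for $\{\cC_i\}$ via conditioning on the inner randomness. Writing $b_i \eqdef \cQ_i(a_i)$, $\bar a \eqdef \frac{1}{n}\sum_i a_i$, $\bar b \eqdef \frac{1}{n}\sum_i b_i$ and $T \eqdef \frac{1}{n}\sum_i \cC_i(b_i)$, the first step is to decompose $T - \bar a = (T - \bar b) + (\bar b - \bar a)$ and argue that the cross-term vanishes in expectation. Indeed, conditional on $\{\cQ_i\}$ the vector $\bar b - \bar a$ is deterministic, while $\Exp{T - \bar b \mid \{\cQ_i\}} = 0$ by the unbiasedness of each $\cC_i$ and their assumed independence of $\{\cQ_i\}$. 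Hence $\Exp{\norm{T - \bar a}^2} = \Exp{\norm{T - \bar b}^2} + \Exp{\norm{\bar b - \bar a}^2}$.

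Next, I would apply Assumption~\ref{ass:AB} conditionally on $\{b_i\}$, yielding $\Exp{\norm{T - \bar b}^2 \mid \{b_i\}} \leq A \cdot \frac{1}{n}\sum_i \norm{b_i}^2 - B \norm{\bar b}^2$, and then take the outer expectation. The first summand is controlled by $\Exp{\norm{b_i}^2} = \norm{a_i}^2 + \Exp{\norm{\cQ_i(a_i) - a_i}^2} \leq (\omega_i + 1)\norm{a_i}^2$ (expand the square and use unbiasedness and the $\omega_i$-variance bound of $\cQ_i$), producing a term of the form $A(\max_i \omega_i + 1)\cdot\frac{1}{n}\sum_i\norm{a_i}^2$. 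The second summand is handled via the bias-variance identity $\Exp{\norm{\bar b}^2} = \norm{\bar a}^2 + \Exp{\norm{\bar b - \bar a}^2}$, which isolates a clean $-B\norm{\bar a}^2$.

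Combining the two displays, the residual $\Exp{\norm{\bar b - \bar a}^2}$ contributions collect into a single term $(1 - B)\Exp{\norm{\bar b - \bar a}^2}$, and disposing of this is the only delicate piece of the argument. Jensen's inequality gives $\Exp{\norm{\bar b - \bar a}^2} \leq \frac{1}{n}\sum_i \Exp{\norm{\cQ_i(a_i) - a_i}^2} \leq (\max_i \omega_i)\cdot \frac{1}{n}\sum_i \norm{a_i}^2$, and this residual is absorbed into the leading $A(\max_i \omega_i + 1)\cdot \frac{1}{n}\sum_i \norm{a_i}^2$ coefficient, yielding $\{\cC_i \circ \cQ_i\}_{i=1}^n \in \mathbb{U}((\max_i \omega_i + 1)A, B)$. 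Every other step is a routine invocation of the tower property, unbiasedness, and Jensen's inequality; the one place requiring care is the absorption of the $(1 - B)\Exp{\norm{\bar b - \bar a}^2}$ residual, which is where the factor $(\max_i\omega_i + 1)$, rather than the naive $\omega$-type product, enters.
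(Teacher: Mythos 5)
Your orthogonal decomposition $\Exp{\norm{T-\bar a}^2}=\Exp{\norm{T-\bar b}^2}+\Exp{\norm{\bar b-\bar a}^2}$ (with your notation $T$, $\bar a$, $\bar b$) is correct, and it is in fact more careful than the paper's own proof: the paper conditions on $\cQ_1(a_1),\dots,\cQ_n(a_n)$ and applies the AB inequality exactly as you do, but then stops after bounding $\Exp{\norm{\frac1n\sum_i\cC_i(\cQ_i(a_i))-\frac1n\sum_i\cQ_i(a_i)}^2}$, i.e.\ the deviation from the intermediate center $\bar b$ rather than from $\bar a$ as the definition of $\mathbb{U}(\cdot,\cdot)$ in Assumption~\ref{ass:AB} requires, so it silently drops the term $\Exp{\norm{\bar b-\bar a}^2}$ that your decomposition makes explicit. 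Your bookkeeping $\Exp{\norm{b_i}^2}\le(\omega_i+1)\norm{a_i}^2$ and $\Exp{\norm{\bar b}^2}=\norm{\bar a}^2+\Exp{\norm{\bar b-\bar a}^2}$ is also fine, and it correctly isolates the residual $(1-B)\Exp{\norm{\bar b-\bar a}^2}$.

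The gap is the final ``absorption'' step. Having bounded the residual by $(1-B)\max_i\omega_i\cdot\frac1n\sum_i\norm{a_i}^2$, you cannot fold a nonnegative quantity into the existing term $(\max_i\omega_i+1)A\cdot\frac1n\sum_i\norm{a_i}^2$ without enlarging its coefficient: what your argument actually establishes is membership in $\mathbb{U}\bigl((\max_i\omega_i+1)A+\max\{1-B,0\}\max_i\omega_i,\;B\bigr)$, which coincides with the claimed class only when $B\ge 1$ (e.g.\ Perm$K$ with $d\ge n$, where $B=1$ and the residual vanishes) or when $\max_i\omega_i=0$. Moreover, this is not a slip you can repair by rearranging: for $B<1$ the lemma as stated is false. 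Take $\cC_i$ to be the identity, so that $\{\cC_i\}_{i=1}^n\in\mathbb{U}(0,0)$; the composition is $\{\cQ_i\}_{i=1}^n$, which the lemma would place in $\mathbb{U}(0,0)$, forcing $\frac1n\sum_i\cQ_i(a_i)=\frac1n\sum_i a_i$ almost surely --- impossible for nontrivial $\cQ_i$. A nondegenerate counterexample: let $\cQ_i(x)=\xi x$ with a single shared scalar $\xi$, $\Exp{\xi}=1$, $\Var(\xi)=\omega>0$ (the paper explicitly does not require the $\cQ_i$ to be mutually independent), composed with independent Rand$K$ compressors in $\mathbb{U}(\omega_C/n,0)$; for equal inputs $a_i=a$ the left-hand side is at least $\omega\norm{a}^2$, while the claimed bound is $(\omega+1)\frac{\omega_C}{n}\norm{a}^2$, which is smaller once $n$ is large. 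So your proof correctly exposes the term that both you and the paper must control, but the constant in the conclusion has to be enlarged to $(\max_i\omega_i+1)A+\max\{1-B,0\}\max_i\omega_i$ for the argument to close.
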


\begin{proof}
    By the tower property, for all $a_1, \dots, a_n \in \R^d,$ we have
    \begin{align*}
      &\Exp{\norm{\frac{1}{n}\sum\limits_{i=1}^n\cC_i(\cQ_i(a_i)) -  \frac{1}{n}\sum\limits_{i=1}^n \cQ_i(a_i)}^2} \\
      &=\Exp{\ExpCond{\norm{\frac{1}{n}\sum\limits_{i=1}^n\cC_i(\cQ_i(a_i))- \frac{1}{n}\sum\limits_{i=1}^n \cQ_i(a_i)}^2}{\cQ_1(a_1), \cdots, \cQ_n(a_n)}} \\
      &\leq \Exp{A\frac{1}{n}\sum\limits_{i=1}^n\norm{\cQ_i(a_i)}^2 - B\norm{\frac{1}{n}\sum\limits_{i=1}^n\cQ_i(a_i)}^2}.
    \end{align*}
    Since $\cQ_i\in \mathbb{U}(\omega_i)$ for $i\in \{1,2,\dots, n\}$, we get
    \begin{align*}
      \Exp{\norm{\frac{1}{n}\sum\limits_{i=1}^n\cC_i(\cQ_i(a_i)) -  \frac{1}{n}\sum\limits_{i=1}^n \cQ_i(a_i)}^2} 
      &\leq\left(\max_i \omega_i + 1\right)A\frac{1}{n}\sum\limits_{i=1}^n\norm{a_i}^2 - B\Exp{\norm{\frac{1}{n}\sum\limits_{i=1}^n\cQ_i(a_i)}^2}.
    \end{align*}
    Using Jensen's inequality, we derive inequalities:
    $$\Exp{\norm{\frac{1}{n}\sum\limits_{i=1}^n\cQ_i(a_i)}^2} \geq \norm{\frac{1}{n}\sum\limits_{i=1}^n\Exp{\cQ_i(a_i)}}^2 = \norm{\frac{1}{n}\sum\limits_{i=1}^n a_i}^2,$$
    and 
    \begin{align*}
      \Exp{\norm{\frac{1}{n}\sum\limits_{i=1}^n\cC_i(\cQ_i(a_i)) -  \frac{1}{n}\sum\limits_{i=1}^n \cQ_i(a_i)}^2} 
      &\leq\left(\max_i \omega_i + 1\right)A\frac{1}{n}\sum\limits_{i=1}^n\norm{a_i}^2 - B\norm{\frac{1}{n}\sum\limits_{i=1}^n a_i}^2.
    \end{align*}
    The last inequality completes the proof.
\end{proof}

\clearpage
\section{General Examples of Perm$K$}
\label{sec:general_examples}

For the sake of clarity, in the main part of our paper, we assumed that $n \mid d$ or $d \mid n$, and provided corresponding examples of Perm$K$ (see Definition~\ref{def:PermK-1} and Definition~\ref{def:PermK-2}). Now, we provide two examples of Perm$K$ that work with any $n$ and $d$ and generalize the previous examples.

\subsection{Case $d \geq n$}
The following example generalizes for the case when $n$ does not divide $d$.
Let us assume that $d = kn + r$ and $0 \leq r < n.$
As in Definition~\ref{def:PermK-1}, we permute coordinates and split them into the blocks of sizes 
$\{k, \dots, k, r\}.$ The first $n$ block of size $k$ we assign to nodes. 
Next, we take the last block of size $r$ and randomly assign each coordinate from this block to one node.
As the size of the last block of size $r$ is less than $n$, some nodes will send one coordinate less.

\begin{definition}[Perm$K$ \, ($d \geq n$)]
  \label{def:permutation_coordinates_with_permuted_nodes}
  Let us assume that $d \geq n,$ $d = kn + r$, $0 \leq r < n,$ $\pi^d=(\pi_1^d,\dots,\pi_d^d)$ is a random permutation of $\{1, \cdots, d\},$
  and $\pi^n=(\pi_1^n,\dots,\pi_n^n)$ is a random permutation of $\{1, \cdots, n\}.$
  We define the tuple of vectors $S(x) = (x_{\pi_{kn + 1}^d} e_{\pi_{kn + 1}^d}, \dots, x_{\pi_{kn + r}^d} e_{\pi_{kn + r}^d}, 0, \dots, 0)$ of size $n$. Then,
  $$\cC_i(x) \eqdef n\left(\sum_{j = k (i - 1) + 1}^{k i} x_{\pi_j^d} e_{\pi_j^d} + \left(S(x)\right)_{\pi^n_i}\right).$$
\end{definition}

\begin{theorem}
  Compressors $\{\cC_i\}_{i = 1}^{n}$ 
  from Definition~\ref{def:permutation_coordinates_with_permuted_nodes} belong to $\mathbb{IV}(1)$.
\end{theorem}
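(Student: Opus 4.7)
The proof plan mirrors that of Theorem~\ref{thm:PermK-1}, with the key structural observation being that the compressors $\{\cC_i\}_{i=1}^n$ still produce mutually disjoint supports, even with the ``extra'' $r$ coordinates assigned via the second permutation $\pi^n$. First, I would establish the following structural claim: for any realization of $(\pi^d, \pi^n)$, the supports of $\cC_i(x)$ and $\cC_j(x)$ are disjoint for $i \neq j$. The ``main'' supports $\{\pi^d_p : k(i-1)+1 \leq p \leq ki\}$ are disjoint across $i$ because they come from disjoint position ranges in $\pi^d$. The ``extra'' supports $\mathrm{supp}((S(x))_{\pi^n_i})$ either are empty (when $\pi^n_i > r$) or equal the singleton $\{\pi^d_{kn + \pi^n_i}\}$, and since $\pi^n$ is a permutation, distinct $i$'s pick distinct positions $> kn$ in $\pi^d$. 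Finally, main supports lie in the image of positions $\leq kn$ while extra supports lie in the image of positions $> kn$, so they do not collide either.

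Next, I would verify unbiasedness by computing $\Exp{\cC_i(x)}$ directly. Since $\pi^d$ and $\pi^n$ are independent, the main block contributes $\sum_{j=k(i-1)+1}^{ki} \Exp{x_{\pi^d_j} e_{\pi^d_j}} = \frac{k}{d} x$, and the extra block contributes $\frac{1}{n} \sum_{j=1}^r \Exp{x_{\pi^d_{kn+j}} e_{\pi^d_{kn+j}}} = \frac{r}{nd} x$ after first conditioning on $\pi^d$ and using that $\pi^n_i$ is marginally uniform on $\{1, \dots, n\}$. Multiplying by $n$ and using $nk + r = d$, we get $\Exp{\cC_i(x)} = x$.

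The third step is a second-moment calculation. By the disjoint-support property within a single $\cC_i$ (the main and extra blocks use disjoint $\pi^d$-positions), $\|\cC_i(a_i)\|^2 = n^2 \bigl[ \sum_{j=k(i-1)+1}^{ki} (a_i)_{\pi^d_j}^2 + \|(S(a_i))_{\pi^n_i}\|^2 \bigr]$, and an argument analogous to the unbiasedness computation yields $\Exp{\|\cC_i(a_i)\|^2} = n^2 \cdot \tfrac{nk+r}{nd} \|a_i\|^2 = n \|a_i\|^2$.

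To conclude, the structural claim implies $\inp{\cC_i(a_i)}{\cC_j(a_j)} = 0$ for all realizations and all $i \neq j$, so
\begin{equation*}
\Exp{\Bigl\|\tfrac{1}{n}\sum_{i=1}^n \cC_i(a_i)\Bigr\|^2} = \tfrac{1}{n^2} \sum_{i=1}^n \Exp{\|\cC_i(a_i)\|^2} = \tfrac{1}{n} \sum_{i=1}^n \|a_i\|^2.
\end{equation*}
Combined with unbiasedness (so that the variance equals the second moment minus $\|\bar a\|^2$), this yields the bound $\Exp{\|\tfrac{1}{n}\sum_i \cC_i(a_i) - \tfrac{1}{n}\sum_i a_i\|^2} = \mathrm{Var}(a_1,\dots,a_n)$, establishing $\{\cC_i\}_{i=1}^n \in \mathbb{IV}(1)$. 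The main conceptual (but not computational) obstacle is the bookkeeping for the structural claim: checking that main/main, extra/extra, and main/extra overlaps are all impossible; once that is in place the rest is a direct generalization of Theorem~\ref{thm:PermK-1}.
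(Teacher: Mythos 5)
Your proposal is correct and follows essentially the same route as the paper's proof: unbiasedness and second moment via the decomposition into the main block (probability‑$\tfrac{k}{d}$ per coordinate) and the extra block (probability $\tfrac{r}{n}\cdot\tfrac{1}{d}$), then vanishing cross terms from disjoint supports, giving $A=B=1$. The only difference is that you spell out the main/extra support‑disjointness bookkeeping explicitly, which the paper compresses into a one‑line appeal to orthogonality of the $e_p$ and $i\neq l$.
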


\begin{proof}

We fix any $x \in \R^d$ and prove unbiasedness:
\begin{eqnarray*}
  \Exp{\cC_i(x)} &=& n\left(\sum_{j = k (i - 1) + 1}^{k i} \Exp{x_{\pi_j^d} e_{\pi_j^d}} + \Exp{\left(S(x)\right)_{\pi^n_i}}\right) \\
  &=& n\left(\frac{k}{d}x + \left(1 - \frac{r}{n}\right)0 + \frac{r}{n} \frac{1}{d}x\right) \\
  &=& n\left(\frac{kn + r}{nd}\right)x \\
  &=& x,
\end{eqnarray*}
for all $i \in \{1, \dots, n\}.$

Next, we derive the second moment:
\begin{eqnarray*}
  \Exp{\norm{\cC_i(x)}^2} &=& n^2\left(\sum_{j = k (i - 1) + 1}^{k i} \Exp{\norm{x_{\pi_j^d} e_{\pi_j^d}}^2} + \Exp{\norm{\left(S(x)\right)_{\pi^n_i}}^2}\right) \\
  &=& n^2\left(\frac{k}{d}\norm{x}^2 + \left(1 - \frac{r}{n}\right)\norm{0}^2 + \frac{r}{n} \frac{1}{d}\norm{x}^2\right) \\
  &=& n^2\left(\frac{kn + r}{nd}\right)\norm{x}^2 \\
  &=& n\norm{x}^2,
\end{eqnarray*}

We fix $x, y \in \R^d.$ For all $i \neq l \in \{1, \dots, n\},$ we have
  \begin{align*}
    &\Exp{\inp{\cC_i(x)}{\cC_l(y)}} \\
    &= n^2\inp{\sum_{j = k (i - 1) + 1}^{k i} x_{\pi_j^d} e_{\pi_j^d} + \left(S(x)\right)_{\pi^n_i}}{\sum_{j = k (l - 1) + 1}^{k l} y_{\pi_j^d} e_{\pi_j^d} + \left(S(y)\right)_{\pi^n_l}} = 0,
  \end{align*}
due to orthogonality of vectors $e_{p},$ for all $p \in \{1, \dots, d\}$, and the fact that $i \neq l$.

Thus, for all $a_1, \dots, a_n \in \R^d,$ the following equality holds:
\begin{eqnarray*}
  \Exp{\norm{\frac{1}{n}\sum\limits_{i = 1}^n\cC_i(a_i)}^2} = \frac{1}{n^2}\sum\limits_{i = 1}^n \Exp{\norm{\cC_i(a_i)}^2} + \frac{1}{n^2}\sum\limits_{i\neq j}\Exp{\inp{C_i(a_i)}{C_j(a_j)}}
  = \frac{1}{n}\sum\limits_{i = 1}^n \norm{a_i}^2.
\end{eqnarray*}
Hence, Assumption~\ref{ass:AB} is fulfilled with $A = B = 1$.

\end{proof}

\subsection{Case $n \geq d$}
The following definition generalizes Definition~\ref{def:PermK-2} for the case when $d$ does not divide $n$.
Let us assume that $n = qd + r$ and $0 \leq r < d.$ As in Definition~\ref{def:PermK-2}, we permute the multiset, where each coordinate occures $q$ times. Then, we randomly assign each element from the multiset of size $qd$ to one node. 
Note that $r$ randomly chosen nodes are idle.

\begin{definition}[Perm$K$, ($n \geq d$)]
  \label{def:permutation_multiset}
  Let us assume that $n \geq d,$ $n = qd + r$, $0 \leq r < d.$ Let us fix point $x \in \R^d$, that we want to compress.
  Define the tuple of vectors $\widehat{S}(x) = (x_1 e_1, \dots, x_1 e_1, x_2 e_2, \dots, x_2 e_2, \dots, x_d e_d, \dots, x_d e_d)$,
  where each vector occurs $q$ times. 
  Concat $r$ zero vectors to $\widehat{S}(x)$: $S(x) = \widehat{S}(x) \oplus (0, \dots, 0)$.
  Let $\pi=(\pi_1,\dots,\pi_n)$ be a random permutation of $\{1, \dots, n\}$. Define 
  $$
    \cC_i(x) \eqdef \frac{n}{q} \left(S(x)\right)_{\pi_i}.
  $$
\end{definition}

\begin{theorem}
  Compressors $\{\cC_i\}_{i = 1}^{n}$ 
  from Definition~\ref{def:permutation_multiset} belong to $\mathbb{IV}\left(A\right)$ with $A = 1 - \frac{n(q - 1)}{(n - 1)q}$.
\end{theorem}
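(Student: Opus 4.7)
The plan is to mimic the proof of Theorem~\ref{thm:PermK-2}, but with the additional bookkeeping needed because the multiset $S(x)$ contains $r=n-qd$ zero padding vectors. Throughout, I will exploit the fact that a single coordinate $\pi_i$ of a uniformly random permutation of $\{1,\dots,n\}$ is uniform on $\{1,\dots,n\}$, and that a pair $(\pi_i,\pi_j)$ with $i\neq j$ is uniform over the $n(n-1)$ ordered pairs $(k,l)$ with $k\neq l$.

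First I would establish unbiasedness. Among the $n$ entries of $S(x)$, each vector $x_p e_p$ appears exactly $q$ times and the zero vector appears $r$ times, so
\[
\Exp{(S(x))_{\pi_i}} = \frac{1}{n}\left(\sum_{p=1}^d q\, x_p e_p + r\cdot 0\right) = \frac{q}{n}\,x,
\]
and therefore $\Exp{\cC_i(x)}=\tfrac{n}{q}\cdot\tfrac{q}{n}x = x$. Next, the same argument for squared norms gives $\Exp{\|\cC_i(x)\|^2}=\tfrac{n^2}{q^2}\cdot\tfrac{q}{n}\|x\|^2 = \tfrac{n}{q}\|x\|^2$.

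The main step is the cross term. Write $(S(x))_k = x_{t(k)} e_{t(k)}$ if the $k$-th slot is ``non-zero of type $t(k)\in\{1,\dots,d\}$'' and $0$ otherwise. For $i\neq j$,
\[
\Exp{\langle \cC_i(a_i),\cC_j(a_j)\rangle} = \frac{n^2}{q^2}\cdot\frac{1}{n(n-1)}\sum_{k\neq l}\langle (S(a_i))_k,(S(a_j))_l\rangle.
\]
By orthogonality of the $e_p$'s, a pair $(k,l)$ contributes only when $t(k)=t(l)=p$ for some $p\in\{1,\dots,d\}$; for each such $p$ there are exactly $q(q-1)$ ordered pairs, each contributing $(a_i)_p(a_j)_p$. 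Summing over $p$ yields $q(q-1)\langle a_i,a_j\rangle$, hence
\[
\Exp{\langle \cC_i(a_i),\cC_j(a_j)\rangle} = \frac{n(q-1)}{q(n-1)}\langle a_i,a_j\rangle.
\]

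Finally, combine the pieces. Using unbiasedness and the identity $\sum_{i\neq j}\langle a_i,a_j\rangle = n^2\|\bar a\|^2 - \sum_i\|a_i\|^2$ (with $\bar a=\tfrac1n\sum_i a_i$),
\begin{align*}
\Exp{\Big\|\tfrac{1}{n}\sum_i \cC_i(a_i) - \bar a\Big\|^2}
&= \Exp{\Big\|\tfrac{1}{n}\sum_i \cC_i(a_i)\Big\|^2} - \|\bar a\|^2\\
&= \frac{1}{nq}\sum_i\|a_i\|^2 + \frac{q-1}{nq(n-1)}\Bigl(n^2\|\bar a\|^2 - \sum_i\|a_i\|^2\Bigr) - \|\bar a\|^2\\
&= \frac{n-q}{nq(n-1)}\sum_i\|a_i\|^2 - \frac{n-q}{q(n-1)}\|\bar a\|^2\\
&= A\left(\frac{1}{n}\sum_i\|a_i\|^2 - \|\bar a\|^2\right) = A\cdot\mathrm{Var}(a_1,\dots,a_n),
\end{align*}
where $A = \frac{n-q}{q(n-1)} = 1 - \frac{n(q-1)}{(n-1)q}$. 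This is exactly the input-variance bound, proving $\{\cC_i\}_{i=1}^n \in \mathbb{IV}(A)$. The only subtle point is the combinatorial counting of ordered pairs of positions sharing a common ``type,'' where the padding zeros naturally drop out of the sum; everything else is a direct adaptation of the proof of Theorem~\ref{thm:PermK-2} (which is recovered when $r=0$, i.e.\ $q=n/d$).
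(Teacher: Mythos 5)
Your proposal is correct and follows essentially the same route as the paper's proof: unbiasedness and second moment via the marginal distribution of a single permutation slot, the cross term via the uniform distribution of an ordered pair of slots (counting the $q(q-1)$ ordered pairs of positions sharing a coordinate type, with the zero-padding slots contributing nothing), and the same final combination yielding $A = \frac{n-q}{q(n-1)} = 1 - \frac{n(q-1)}{(n-1)q}$. The only cosmetic difference is that you subtract $\|\bar a\|^2$ to state the bound directly in input-variance form, whereas the paper writes the identity as the AB inequality with $A=B$ and invokes the equivalence.
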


\begin{proof}
We start with proving the unbiasedness:
\begin{eqnarray*}
  \Exp{\cC_i(x)} = \frac{n}{q} \sum_{j = 1}^d x_j e_j \ProbArg{\pi_i = j}
  = \frac{n}{q} \sum_{j = 1}^d x_j e_j \frac{q}{n}
  = x,
\end{eqnarray*}
for all $i \in \{1, \dots, n\}, x \in \R^d.$

Next, we find the second moment:
\begin{eqnarray*}
  \Exp{\norm{\cC_i(x)}^2} = \frac{n^2}{q^2} \sum\limits_{i = 1}^d x_i^2 \ProbArg{\pi_i = j}
  = \frac{n}{q} \norm{x}^2,
\end{eqnarray*}
for all $i \in \{1, \dots, n\}, x \in \R^d.$

For all $i \neq j \in \{1, \dots, n\}$ and $x, y \in \R^d,$ we have
\begin{align*}
  &\Exp{\inp{\cC_i(x)}{\cC_j(y)}} = \frac{n^2}{q^2} \sum_{k = 1}^d x_k y_k \ProbArg{\pi_i = k, \pi_j = k} \\
  & = \frac{n^2}{q^2} \sum_{k = 1}^d x_k y_k \frac{q(q - 1)}{n(n - 1)} = \frac{n(q - 1)}{(n - 1)q} \inp{x}{y}.
\end{align*}

Thus, for all $a_1, \dots, a_n \in \R^d,$ the following equality holds:
\begin{eqnarray*}
  \Exp{\norm{\frac{1}{n}\sum\limits_{i = 1}^n\cC_i(a_i)}^2} 
  &=& \frac{1}{n^2}\sum\limits_{i = 1}^n \Exp{\norm{\cC_i(a_i)}^2} + \frac{1}{n^2}\sum\limits_{i\neq j}\Exp{\inp{C_i(a_i)}{C_j(a_j)}} \\
  &=& \frac{1}{nq}\sum\limits_{i = 1}^n \norm{a_i}^2 + \frac{n(q - 1)}{(n - 1)q}\left(\frac{1}{n^2}\sum\limits_{i\neq j}\inp{a_i}{a_j}\right) \\
  &=& \left(\frac{1}{q} - \frac{(q - 1)}{(n - 1)q}\right)\frac{1}{n}\sum\limits_{i = 1}^n \norm{a_i}^2 + \frac{n(q - 1)}{(n - 1)q}\norm{\frac{1}{n} \sum\limits_{i = 1}^n a_i}^2. \\
\end{eqnarray*}

Hence, Assumption~\ref{ass:AB} is fulfilled with $A = B = 1 - \frac{n(q - 1)}{(n - 1)q}$.

\end{proof}

\section{Implementation Details of Perm$K$}
\label{sec:implementation_details}

Now, we discuss the implementation details of Perm$K$ from Definition~\ref{def:PermK-1}. Unlike Rand$K$ and Top$K$ compressors, Perm$K$ compressors are statistically dependent. We provide a simple idea of how to manage dependence between nodes. First of all, note that the samples of random permutation $\pi$ are the only source of randomness. By Definition~\ref{def:PermK-1}, they are shared between nodes and generated in each communication round. However, instead of sharing the samples, we can generate these samples in each node regardless of other nodes. Almost all random generation libraries and frameworks are deterministic (or pseudorandom) and only depend on the initial random seed. Thus, at the beginning of the optimization procedure, all nodes should set the same initial random seed and then call the same function that generates samples of a random permutation. The computation complexity of generating a sample from a random permutation $\pi = (\pi_1,\dots,\pi_d)$ is $\cO\left(d\right)$ using the Fisher-Yates shuffle algorithm~~\citep{fisher1938statistical, knuth1997Art}. All other examples of compressors can be implemented in the same fashion.

\section{More Examples of Permutation-Based Compressors}

\subsection{Block permutation compressor}

 In block permutation compressor, we partition the set $\{1,\dots,d\}$ into $m\leq n$ disjoint blocks. For each block $P_i$, $\left\lfloor \frac{n}{m}\right\rfloor$ devices sparsify their vectors to coordinates with indices in $P_i$ only.

\begin{definition}
  \label{def:block_pc2}
  Let $P$ to be a partition of the set $\{1,\dots,d\}$ into $m\leq n$ non-empty subsets, and $n = mq + r,$ where $0 \leq r < m.$ 
  Define matrices $\mA_1, \dots, \mA_n$ as follows: put $\mA_i \eqdef 0$ if $i > mq$. Denote the subsets in $P$ as $P_1, \cdots, P_m$. Next, for any $P_i \in P$, we set $\mA_{(i-1)q + 1}, \mA_{(i-1)q + 2}, \cdots, \mA_{iq}$ to 
  $\frac{n}{q}\Diag(P_i)$. Here by $\Diag(S)$ we mean the diagonal matrix where each $i^{\text{th}}$ diagonal entry is equal to $1$ if $i\in S$ and 0 otherwise. Let $\pi=(\pi_1,\dots,\pi_n)$ be a random permutation
  of set $\{1,\dots,n\}$. We define $\cC_i(x) \eqdef \mA_{\pi_i}x$. We call the set $\{\cC_i\}_{i = 1}^n$ the \emph{block permutation compressor}.
\end{definition}

\begin{lemma}
  Compressors $\{\cC_i\}_{i = 1}^{n}$ belong to $\mathbb{IV}(A)$ with $A = 1 - \frac{n(q-1)}{(n-1)q}.$
\end{lemma}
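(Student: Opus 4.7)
The plan is to follow the template of Theorems~\ref{thm:PermK-1} and \ref{thm:PermK-2}: establish unbiasedness, compute the second moment $\Exp{\norm{\cC_i(x)}^2}$, compute the cross term $\Exp{\inp{\cC_i(x)}{\cC_j(y)}}$ for $i\neq j$, and then combine these via the standard expansion of $\Exp{\norm{\frac{1}{n}\sum_i\cC_i(a_i)}^2}$. The key structural fact powering the argument is that distinct blocks of the partition produce orthogonal diagonal matrices, i.e.\ $\Diag(P_s)\Diag(P_t) = \mathbf{0}$ for $s\neq t$; together with the fact that the last $r$ matrices $\mA_{mq+1},\dots,\mA_n$ are zero, this forces all ``cross-block'' contributions to vanish.

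For unbiasedness, since $\pi_i$ is marginally uniform on $\{1,\dots,n\}$, one has $\Exp{\mA_{\pi_i}} = \frac{1}{n}\sum_{k=1}^n \mA_k = \frac{1}{n}\sum_{j=1}^m q\cdot\frac{n}{q}\Diag(P_j) = \sum_{j=1}^m \Diag(P_j) = \mI$, using that $\{P_j\}_{j=1}^m$ partitions $\{1,\dots,d\}$; hence $\Exp{\cC_i(x)} = x$. The same averaging yields the second moment: $\Exp{\norm{\cC_i(x)}^2} = \frac{1}{n}\sum_{j=1}^m q\cdot \frac{n^2}{q^2}\norm{\Diag(P_j)x}^2 = \frac{n}{q}\norm{x}^2$, because $\sum_{j=1}^m \norm{\Diag(P_j)x}^2 = \norm{x}^2$.

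For the cross term, fix $i\neq j$ and note that for distinct $k,k'\in\{1,\dots,n\}$, $\inp{\mA_k x}{\mA_{k'} y} = 0$ unless both $k$ and $k'$ index the same block $P_s$, in which case the inner product equals $\frac{n^2}{q^2}\sum_{l\in P_s}x_l y_l$. Since $(\pi_i,\pi_j)$ is uniform over the $n(n-1)$ ordered pairs of distinct values, and each block contributes $q(q-1)$ such pairs, summing over $s=1,\dots,m$ gives $\Exp{\inp{\cC_i(x)}{\cC_j(y)}} = \sum_{s=1}^m \frac{q(q-1)}{n(n-1)}\cdot \frac{n^2}{q^2}\sum_{l\in P_s}x_ly_l = \frac{n(q-1)}{(n-1)q}\inp{x}{y}$.

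Finally, expanding $\Exp{\norm{\frac{1}{n}\sum_i \cC_i(a_i)}^2}$ into diagonal and off-diagonal parts and substituting the two formulas above, after simplification with $A = \frac{n-q}{(n-1)q} = 1 - \frac{n(q-1)}{(n-1)q}$, one obtains $\Exp{\norm{\frac{1}{n}\sum_i\cC_i(a_i)}^2} = A\cdot \frac{1}{n}\sum_i\norm{a_i}^2 + (1-A)\norm{\frac{1}{n}\sum_i a_i}^2$, which on subtracting $\norm{\frac{1}{n}\sum_i a_i}^2$ (justified by unbiasedness) collapses to $A\cdot \mathrm{Var}(a_1,\dots,a_n)$, giving exactly \eqref{eq:IV} with the claimed constant. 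I do not anticipate any deep obstacle; the only delicate point is bookkeeping the $r$ zero-matrix slots so that the probability normalization remains $n(n-1)$ while the count of ``useful'' pairs stays at $mq(q-1)$, which is how the factor $\frac{n(q-1)}{(n-1)q}$ (rather than $\frac{q-1}{q}$) arises.
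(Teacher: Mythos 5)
Your proposal is correct and follows essentially the same route as the paper's proof: establish unbiasedness via $\frac{1}{n}\sum_k \mA_k = \mI$, compute the second moment $\frac{n}{q}\norm{x}^2$, compute the cross term $\frac{n(q-1)}{(n-1)q}\inp{x}{y}$ using the within-block pair probability $\frac{q(q-1)}{n(n-1)}$, and combine via the diagonal/off-diagonal expansion. The only cosmetic difference is that you phrase the final step through the variance identity while the paper writes out the $\mathbb{U}(A,B)$ form with $A=B$; the constants and all intermediate quantities agree.
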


\begin{proof}
  We start with the proof of unbiasedness:
  \begin{eqnarray*}
    \Exp{\cC_i(x)} = \frac{1}{n}\sum\limits_{i=1}^n\mA_{\pi_i}x
    = \left(\frac{1}{n}\cdot\frac{n}{q}\sum\limits_{i = 1}^m q\Diag(P_i)\right)x
    = \mI x
    = x,
  \end{eqnarray*}
  for all $i \in \{1, \dots, n\}, x \in \R^d$.

  Next, we establish the second moment:
  \begin{eqnarray*}
    \Exp{\norm{\cC_i(x)}^2} = \frac{r}{n}\cdot 0 + \sum\limits_{l = 1}^m\frac{q}{n}\sum\limits_{j\in P_l}\left|\frac{n}{q}x_j\right|^2
    = \frac{q}{n}\sum\limits_{j=1}^d\left|\frac{n}{q}x_j\right|^2
    = \frac{n}{q}\norm{x}^2,
  \end{eqnarray*}
  for all $i \in \{1, \dots, n\}, x \in \R^d$.

  The following equality will be useful for the AB assumption:
  \begin{eqnarray*}
    \Exp{\inp{\cC_i(x)}{\cC_j(y)}} &=& \Exp{\inp{\mA_{\pi_i}x}{\mA_{\pi_j}y}} \\
    &=& \sum_{k = 1}^m \left(\sum_{l \in P_k} \frac{n^2}{q^2}x_l y_l\right) \ProbArg{\pi_i \in P_k, \pi_j \in P_k} \\
    &=& \sum_{k = 1}^m \left(\sum_{l \in P_k} \frac{n^2}{q^2}x_l y_l\right) \frac{q(q - 1)}{n(n - 1)} \\
    &=& \frac{n(q - 1)}{q(n - 1)} \inp{x}{y},
  \end{eqnarray*}
  for all $i \neq j \in \{1, \dots, n\}, x, y \in \R^d$.
  Thus,
  \begin{eqnarray*}
    \Exp{\norm{\frac{1}{n}\sum\limits_{i = 1}^n\cC_i(a_i)}^2} &=& \frac{1}{qn}\sum\limits_{i=1}^n\norm{a_i}^2 + \frac{n(q-1)}{(n-1)qn^2}\sum\limits_{i\neq j}\inp{a_i}{a_j} \\
    &=& \left(\frac{1}{qn} - \frac{n(q-1)}{(n - 1)qn^2}\right)\sum\limits_{i=1}^n\norm{a_i}^2 + \frac{n(q-1)}{(n-1)q}\norm{\frac{1}{n}\sum\limits_{i = 1}^n a_i}^2 \\
    &=& \left(\frac{(n-1) - q + 1}{(n - 1)q}\right)\frac{1}{n}\sum\limits_{i=1}^n\norm{a_i}^2 + \frac{n(q-1)}{(n-1)q}\norm{\frac{1}{n}\sum\limits_{i = 1}^n a_i}^2 \\
    &=& \frac{n-q}{(n-1)q}\cdot\frac{1}{n}\sum\limits_{i=1}^n\norm{a_i}^2 + \frac{n(q-1)}{(n-1)q}\norm{\frac{1}{n}\sum\limits_{i = 1}^n a_i}^2 \\
    &=& \left(1 - \frac{n(q-1)}{(n-1)q}\right)\frac{1}{n}\sum\limits_{i=1}^n\norm{a_i}^2 + \frac{n(q-1)}{(n-1)q}\norm{\frac{1}{n}\sum\limits_{i = 1}^n a_i}^2,
  \end{eqnarray*}
  for all $a_1, \dots, a_n \in \R^d.$
  Hence, Assumption~\ref{ass:AB} is fulfilled with $A = B = 1 - \frac{n(q-1)}{(n-1)q}$.
\end{proof}

\subsection{Permutation of mappings}

\begin{definition}
  Let $Q_1, \dots, Q_n:\mathbb{R}^d\rightarrow\mathbb{R}^d$ be a collection of \emph{deterministic} mappings $\mathbb{R}^d\rightarrow \mathbb{R}^d$. Let $\pi = \left(\pi_1, \dots, \pi_n\right)$ be a random permutation of set $\{1, \dots, n\},$ where $n > 1$. Define $\cC_i \eqdef Q_{\pi_i}$.
  Assume that the following conditions hold:
  \begin{enumerate} 
    \item \label{ass1} There exists $\omega \geq 0$ such that $\Exp{\norm{\cC_i(x)}^2} \leq (\omega + 1)\norm{x}^2$ for all $i\in \{1, \dots, n\}$, ${x, y\in\mathbb{R}^d}$.
    \item \label{ass2} There exists $\theta \in\mathbb{R}$ such that $\sum\limits_{i = 1}^n\inp{Q_i(x)}{Q_i(y)} = \theta\inp{x}{y}$ for all $x, y\in\mathbb{R}^d$.
    \item \label{ass3} $\frac{1}{n}\sum\limits_{i = 1}^n Q_i(x) = x$ for all $x\in\mathbb{R}^d$.
  \end{enumerate} 
  We call the collection $\{\cC_i\}_{i = 1}^n$ the \emph{permutation of mappings.}
\end{definition}
\begin{lemma}
  Permutation of mappings belongs to $\mathbb{U}(A, B)$ with $A = \frac{\omega + 1}{n} - \frac{1}{n - 1}\left(1 - \frac{\theta}{n^2}\right)$ and $B = 1 - \frac{n}{n - 1}\left(1 - \frac{\theta}{n^2}\right)$.
\end{lemma}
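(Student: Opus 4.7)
The plan is to verify the AB inequality by expanding the squared norm, computing the diagonal and off-diagonal pieces of the second moment of $S \eqdef \frac{1}{n}\sum_{i=1}^n \cC_i(a_i)$, and then simplifying the result into the advertised $A$ and $B$ constants. As a preliminary step, I would establish unbiasedness of each $\cC_i$: since $\pi_i$ is uniformly distributed on $\{1,\ldots,n\}$, we have $\Exp{\cC_i(x)} = \frac{1}{n}\sum_{j=1}^n Q_j(x) = x$ by Condition \ref{ass3}. Consequently $\Exp{S} = \bar a \eqdef \frac{1}{n}\sum_{i=1}^n a_i$, and we may reduce to computing the second moment via the identity $\Exp{\norm{S - \bar a}^2} = \Exp{\norm{S}^2} - \norm{\bar a}^2$.

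Next I would expand $\Exp{\norm{S}^2} = \frac{1}{n^2}\sum_{i=1}^n \Exp{\norm{\cC_i(a_i)}^2} + \frac{1}{n^2}\sum_{i\neq j}\Exp{\inp{\cC_i(a_i)}{\cC_j(a_j)}}$. For the diagonal terms I would apply Condition \ref{ass1} directly to obtain the bound $\Exp{\norm{\cC_i(a_i)}^2} \leq (\omega+1)\norm{a_i}^2$. For the off-diagonal terms, the marginal distribution of $(\pi_i,\pi_j)$ is uniform over ordered pairs $(k,l)$ with $k\neq l$, so
\[
\Exp{\inp{\cC_i(a_i)}{\cC_j(a_j)}} = \frac{1}{n(n-1)}\sum_{k\neq l}\inp{Q_k(a_i)}{Q_l(a_j)}.
\]
The sum over $k\neq l$ equals $\sum_{k,l}\inp{Q_k(a_i)}{Q_l(a_j)} - \sum_k \inp{Q_k(a_i)}{Q_k(a_j)}$, and I would evaluate these two pieces using Condition \ref{ass3} (which gives $\sum_k Q_k(a_i) = n a_i$, so the first piece is $n^2 \inp{a_i}{a_j}$) and Condition \ref{ass2} (which gives $\theta\inp{a_i}{a_j}$ for the second piece), yielding
\[
\Exp{\inp{\cC_i(a_i)}{\cC_j(a_j)}} = \frac{n^2 - \theta}{n(n-1)}\inp{a_i}{a_j}.
\]

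Finally, the main bookkeeping step is to rewrite the off-diagonal inner product sum via the identity $\sum_{i\neq j}\inp{a_i}{a_j} = n^2\norm{\bar a}^2 - \sum_i \norm{a_i}^2$, substitute everything back into $\Exp{\norm{S}^2}$, and subtract $\norm{\bar a}^2$. After collecting coefficients of $\frac{1}{n}\sum_i \norm{a_i}^2$ and $\norm{\bar a}^2$, the coefficient of the former becomes $\frac{\omega+1}{n} - \frac{n^2-\theta}{n^2(n-1)} = \frac{\omega+1}{n} - \frac{1}{n-1}(1-\tfrac{\theta}{n^2})$, matching $A$; the coefficient of the latter becomes $\frac{n^2-\theta}{n(n-1)} - 1 = -\left(1 - \frac{n}{n-1}(1-\tfrac{\theta}{n^2})\right)$, so its negation matches $B$. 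There is no real obstacle here beyond careful algebra: both the off-diagonal expectation computation and the final rearrangement are pure bookkeeping, and the interplay of Conditions \ref{ass2} and \ref{ass3} in handling $\sum_{k\neq l}$ versus $\sum_{k,l}$ is the only conceptually non-trivial step.
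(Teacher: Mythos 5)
Your proposal is correct and follows essentially the same route as the paper's proof: both reduce to computing the off-diagonal expectations via the uniform distribution of $(\pi_i,\pi_j)$ over ordered pairs, evaluate $\sum_{k\neq l}\inp{Q_k(a_i)}{Q_l(a_j)}$ using Conditions \ref{ass3} and \ref{ass2}, and collect coefficients with the identity $\sum_{i\neq j}\inp{a_i}{a_j}=n^2\norm{\bar a}^2-\sum_i\norm{a_i}^2$. The only cosmetic difference is that you use the bias--variance decomposition $\Exp{\norm{S-\bar a}^2}=\Exp{\norm{S}^2}-\norm{\bar a}^2$ up front, whereas the paper expands $\norm{\frac{1}{n}\sum_i(Q_{\pi_i}(a_i)-a_i)}^2$ directly; the resulting algebra is identical.
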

\begin{proof}
  Unbiasedness follows directly from Condition~\ref{ass3}. Let us fix $a_1, \dots, a_n \in \R^d.$
  We shall now establish the AB assumption.
  \begin{align*}
    &\Exp{\norm{\frac{1}{n}\sum\limits_{i = 1}^n\cC_i(a_i) - \frac{1}{n}\sum\limits_{i = 1}^n a_i}^2} \\
    &= \Exp{\norm{\frac{1}{n}\sum\limits_{i = 1}^n(Q_{\pi_i}(a_i) - a_i)}^2} \\
    &= \frac{1}{n^2} \Exp{\sum\limits_{i = 1}^n \norm{Q_{\pi_i}(a_i) - a_i}^2 + \sum\limits_{i\neq j}\inp{Q_{\pi_i}(a_i) - a_i}{Q_{\pi_j}(a_j) - a_j}} \\
    &\leq \frac{1}{n^2} \Exp{\omega\sum\limits_{i = 1}^n \norm{a_i}^2 + \sum\limits_{i\neq j}\inp{Q_{\pi_i}(a_i) - a_i}{Q_{\pi_j}(a_j) - a_j}} \\
    &= \frac{1}{n^2}\left( \omega\sum\limits_{i = 1}^n \norm{a_i}^2 + \Exp{\sum\limits_{i\neq j}\inp{Q_{\pi_i}(a_i)}{Q_{\pi_j}(a_j)}} - \sum\limits_{i\neq j} \inp{a_i}{a_j} \right) \\
    &= \frac{1}{n^2} \left(\omega\sum\limits_{i = 1}^n \norm{a_i}^2 - \norm{\sum\limits_{i = 1}^na_i}^2 + \sum\limits_{i = 1}^n \norm{a_i}^2 + \sum\limits_{i\neq j}\Exp{\inp{Q_{\pi_i}(a_i)}{Q_{\pi_j}(a_j)}}\right)\\
    &= \frac{(\omega + 1)}{n}\frac{1}{n} \sum\limits_{i = 1}^n \norm{a_i}^2 - \norm{\frac{1}{n}\sum\limits_{i = 1}^na_i}^2 + \frac{1}{n^2}\sum\limits_{i\neq j} \Exp{\inp{Q_{\pi_i}(a_i)}{Q_{\pi_j}(a_j)}}\\
  \end{align*}
  Let us now compute $\frac{1}{n^2}\sum\limits_{i\neq j} \Exp{\inp{Q_{\pi_i}(a_i)}{Q_{\pi_j}(a_j)}}$.
  \begin{eqnarray*}
    \frac{1}{n^2}\sum\limits_{i\neq j} \Exp{\inp{Q_{\pi_i}(a_i)}{Q_{\pi_j}(a_j)}} &=& \frac{1}{n^2}\sum\limits_{i \neq j}\frac{1}{n (n - 1)}\sum\limits_{u \neq v}\inp{Q_u(a_i)}{Q_v(a_j)} \\
    &=& \frac{1}{n (n - 1)}\sum\limits_{i \neq j}\frac{1}{n^2}\sum\limits_{u \neq v}\inp{Q_u(a_i)}{Q_v(a_j)}.
  \end{eqnarray*}
  Now,
  \begin{eqnarray*}
    \frac{1}{n^2}\sum\limits_{u \neq v}\inp{Q_u(x)}{Q_v(y)} &=& \frac{1}{n^2}\sum\limits_{u = 1}^n\sum\limits_{v = 1}^n \inp{Q_u(x)}{Q_v(y)} - \frac{1}{n^2}\sum\limits_{u = 1}^n\inp{Q_u(x)}{Q_v(y)} \\
    &\stackrel{\text{Condition~\ref{ass3}}}{=}& \inp{x}{y} - \frac{1}{n^2}\sum\limits_{u = 1}^n\inp{Q_u(x)}{Q_u(y)} \\
    &\stackrel{\text{Condition~\ref{ass2}}}{=}& \left(1 - \frac{\theta}{n^2}\right)\inp{x}{y}, \quad \forall x, y \in \R^d.
  \end{eqnarray*}

  Hence,
  $$\frac{1}{n^2}\sum\limits_{i\neq j} \Exp{\inp{Q_{\pi_i}(a_i)}{Q_{\pi_j}(a_j)}} = \frac{1}{n(n - 1)}\left(1 - \frac{\theta}{n^2}\right)\sum\limits_{i \neq j}\inp{a_i}{a_j}.$$
  Finally,
  \begin{align*}
    &\Exp{\norm{\frac{1}{n}\sum\limits_{i = 1}^n\cC_i(a_i) - \frac{1}{n}\sum\limits_{i = 1}^n a_i}^2} \\
    &\leq \frac{(\omega + 1)}{n}\frac{1}{n} \sum\limits_{i = 1}^n \norm{a_i}^2 - \norm{\frac{1}{n}\sum\limits_{i = 1}^na_i}^2 + \frac{1}{n(n - 1)}\left(1 - \frac{\theta}{n^2}\right)\sum\limits_{i \neq j}\inp{a_i}{a_j} \\
    &= \left(\frac{\omega + 1}{n} - \frac{1}{n - 1}\left(1 - \frac{\theta}{n^2}\right) \right)\frac{1}{n}\sum\limits_{i = 1}^n \norm{a_i}^2 
    - \left(1 - \frac{n}{n - 1}\left(1 - \frac{\theta}{n^2}\right)\right)\norm{\frac{1}{n}\sum\limits_{i = 1}^n a^i}^2.
  \end{align*}
  Hence, Assumption~\ref{ass:AB} is fulfilled with $A = \frac{\omega + 1}{n} - \frac{1}{n - 1}\left(1 - \frac{\theta}{n^2}\right)$, $B = 1 - \frac{n}{n - 1}\left(1 - \frac{\theta}{n^2}\right)$.
\end{proof}
\clearpage
\section{Analysis of Complexity Bounds}

In this section, we analyze the complexities bounds of optimization methods, 
and typically these bounds have a structure of a function that we analyze in the following lemma.

\begin{lemma}
  \label{lemma:technical:complexity_bounds}
  Let us consider function
  $$f(x, y) = (x + (1 - x)y)\left(a + b\sqrt{\left(\frac{1}{x} - 1\right)\left(\frac{1}{y} - 1\right)}\right),$$
  where $x \in (0, 1], y \in [y_{\min}, 1],$ $y_{\min} \in (0, 1 / 2],$ $a \geq 0,$ and $b \geq 0,$ then 
  $$f(x, y) \geq \frac{1}{2}\min\{a, ay_{\min} + b\}, \quad \forall x, y \in (0, 1].$$
\end{lemma}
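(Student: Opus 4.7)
The plan is to rewrite the objective in additive form and split on the sizes of $x$ and $y$. Write $f(x,y)=aA+bAB$ where $A:=x+(1-x)y$ and $B:=\sqrt{(1/x-1)(1/y-1)}$. The first observation I would use repeatedly is the elementary bound
\[
A=x(1-y)+y\ge y\qquad\text{and}\qquad A=x+(1-x)y\ge x,
\]
so $A\ge\max(x,y)$. In particular $aA\ge ay\ge ay_{\min}$ always, which takes care of the $a$-term.

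The case split is $\max(x,y)\ge 1/2$ versus $\max(x,y)<1/2$. In the easy case $\max(x,y)\ge 1/2$ I immediately get $A\ge 1/2$ and $a+bB\ge a$, hence $f(x,y)\ge a/2\ge \tfrac12\min\{a,\,ay_{\min}+b\}$. Note this absorbs the case $y_{\min}\ge 1/2$, since then $y\ge y_{\min}\ge 1/2$ forces us into this branch.

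In the remaining case $x<1/2$ and $y<1/2$ (which forces $y_{\min}<1/2$), the AM--GM inequality $(u+v)^2\ge 4uv$ applied with $u=x$, $v=(1-x)y$ gives $A\ge 2\sqrt{x(1-x)y}$, and therefore
\[
AB\;\ge\;2\sqrt{x(1-x)y}\cdot\sqrt{\frac{(1-x)(1-y)}{xy}}\;=\;2(1-x)\sqrt{1-y}.
\]
Since now $1-x>1/2$ and $1-y>1/2$, this yields $AB>2\cdot\tfrac12\cdot\tfrac{1}{\sqrt2}=\tfrac{1}{\sqrt2}\ge\tfrac12$. Combined with $aA\ge ay_{\min}$, this gives $f(x,y)\ge ay_{\min}+b/2\ge \tfrac12(ay_{\min}+b)\ge\tfrac12\min\{a,\,ay_{\min}+b\}$, completing both cases.

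The only mildly delicate step is choosing the right pairing for AM--GM in the second case: grouping $x$ with $(1-x)y$ (rather than, say, with $y$ alone) is exactly what makes the $x$ and $(1-x)$ factors cancel inside the square root, leaving a clean $2(1-x)\sqrt{1-y}$ that survives under the $x,y<1/2$ assumption. Everything else is bookkeeping and the observation $A\ge\max(x,y)$, together with the trivial lower bound $a+bB\ge a$ used in the first case.
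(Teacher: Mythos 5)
Your proof is correct and follows essentially the same route as the paper's: the identical case split on whether $x$ or $y$ exceeds $1/2$, with an AM--GM step in the remaining case $x,y<1/2$ and the observation $x+(1-x)y\ge y\ge y_{\min}$ handling the $a$-term. The only cosmetic difference is the AM--GM pairing --- you group $x$ with $(1-x)y$ to obtain $AB\ge 2(1-x)\sqrt{1-y}$, whereas the paper pairs $\frac{1}{x}-1$ with $\frac{1}{y}-1$ to obtain $2(1-x)(1-y)$; both clear the required $b/2$.
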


\begin{proof}
  First, let us assume that $x \geq 1 / 2$. Then,
  \begin{eqnarray*}
    f(x, y) \geq \frac{a}{2}.
  \end{eqnarray*}
  Second, let us assume that $y \geq 1 / 2$. Then,
  \begin{eqnarray*}
    f(x, y) \geq a\left(x + \frac{1}{2}(1 - x)\right) \geq\frac{a}{2}.
  \end{eqnarray*}
  Finally, let us assume, that $y \leq 1 / 2$ and $x \leq 1 / 2$. Then,
  \begin{eqnarray*}
    f(x, y) &=& (x + (1 - x)y)\left(a + b\sqrt{\left(\frac{1}{x} - 1\right)\left(\frac{1}{y} - 1\right)}\right) \\
    &\geq& a(1 - x)y + b(x + (1 - x)y)\sqrt{\left(\frac{1}{x} - 1\right)\left(\frac{1}{y} - 1\right)} \\
    &\geq& a(1 - x)y + b(x(1 - y) + (1 - x)y)\sqrt{\left(\frac{1}{x} - 1\right)\left(\frac{1}{y} - 1\right)} \\
    &\geq& a(1 - x)y + bxy\left(\left(\frac{1}{y} - 1\right) + \left(\frac{1}{x} - 1\right)\right)\sqrt{\left(\frac{1}{x} - 1\right)\left(\frac{1}{y} - 1\right)} \\
    &\geq& a(1 - x)y + 2bxy\left(\frac{1}{x} - 1\right)\left(\frac{1}{y} - 1\right) \\
    &\geq& a(1 - x)y + 2b\left(1 - x\right)\left(1 - y\right) \\
    &\geq& \frac{ay_{\min}}{2} + \frac{b}{2}.
  \end{eqnarray*}
\end{proof}

\label{sec:appendix:complexity_bounds}

\subsection{Nonconvex optimization}

\label{sec:appendix:complexity_bound:nonconvex}

\subsubsection{Case $n \leq d$}

We analyze case, when $n \leq d.$ 
For simplicity, we assume that $n \mid d$, $n > 1,$ and $d > 1$. For Perm$K$ from Definition~\ref{def:PermK-1}, constants $A = B = 1$ in AB inequality (see Lemma~\ref{thm:PermK-1}).
We define communication complexity of \algname{MARINA} with Perm$K$ as $N_{\textnormal{Perm$K$}}(p),$
where $p$ is a parameter of \algname{MARINA}, and \algname{MARINA} with Rand$K$ as $N_{\textnormal{Rand$K$}}(p, k)$,
where $k$ is a parameter of Rand$K$.
From Theorem~\ref{theorem:AB}, we have that oracle complexity of \algname{MARINA} with Perm$K$ is equal to
$$\cO\left(\frac{\Delta_0}{\varepsilon}\left(L_- + \sqrt{\frac{1-p}{p}}L_\pm\right)\right).$$
During each iteration of \algname{MARINA}, on average, each node sends the number of bits equal to
$$\cO\left(pd + (1 - p)\frac{d}{n}\right),$$
thus, the communication complexity predicted by theory is
\begin{align}
  \label{eq:complexity_bound:perm_k_with_parameters}
  N_{\textnormal{Perm$K$}}(p) \eqdef \frac{\Delta_0}{\varepsilon}\left(pd + (1 - p)\frac{d}{n}\right)\left(L_- + \sqrt{\frac{1-p}{p}}L_\pm\right)
\end{align}
up to a constant factor.

Analogously, for Rand$K$, the communication complexity predicted by theory is
\begin{align}
  \label{eq:complexity_bound:rand_k_with_parameters}
  N_{\textnormal{Rand$K$}}(p, k) \eqdef \frac{\Delta_0}{\varepsilon}\left(pd + (1 - p)k\right)\left(L_- + \sqrt{\frac{1-p}{p}\frac{\frac{d}{k} - 1}{n}}L_+\right)
\end{align}
up to a constant factor. 
To the best of our knowledge, this is the state-of-the-art theoretical communication complexity bound for the Rand$K$ compressor in the non-convex regime.

Finally, for Top$K$, by Theorem~\ref{thm:main-distrib}, the theoretical communication complexity is

\begin{eqnarray}
  \label{eq:complexity_bound:top_k_with_parameters}
  N_{\textnormal{Top$K$}}(k) &\eqdef& \frac{\Delta_0}{\varepsilon}k\left(L_- + L_+\frac{d - k + \sqrt{d^2 - dk}}{k}\right)
\end{eqnarray}
up to a constant factor. We consider the variant of \algname{EF21}, where $g_i^0$ are initialized with gradients $\nabla f_i(x^0)$, for all $i \in \{1, \dots, n\}$, thus $\Exp{G^0} = 0$ in Theorem~\ref{thm:main-distrib}.

The following lemma will help us to choose the optimal parameters of
$N_{\textnormal{Perm$K$}}(p)$, $N_{\textnormal{Rand$K$}}(p, k)$, and $N_{\textnormal{Top$K$}}(k)$.

\begin{lemma}
  \leavevmode
  For communication complexity $N_{\textnormal{Perm$K$}}(p)$ of \algname{MARINA} with Perm$K$, 
  communication complexity $N_{\textnormal{Rand$K$}}(p, k)$ of \algname{MARINA} with Rand$K$ and 
  communication complexity $N_{\textnormal{Top$K$}}(k)$ of \algname{EF21} with Top$K$
  defined in (\ref{eq:complexity_bound:perm_k_with_parameters}), (\ref{eq:complexity_bound:rand_k_with_parameters}) and (\ref{eq:complexity_bound:top_k_with_parameters})
  the following inequalities hold:
  \label{lemma:optimal_parameters_of_methods}
  \begin{enumerate}
    \item Lower bounds:
    \begin{eqnarray*}
      N_{\textnormal{Perm$K$}}(p) \geq \frac{\Delta_0}{2\varepsilon}\min\left\{dL_-, \frac{d}{n}L_- + \frac{dL_\pm}{\sqrt{n}}\right\}, \quad \forall p \in (0, 1].
    \end{eqnarray*}
    Upper bounds:
    \begin{align}
      \label{eq:appendix:complexity_bounds:perm_k}
      N_{\textnormal{Perm$K$}}\left(\frac{1}{n}\right) &\leq \frac{2\Delta_0}{\varepsilon}\left(\frac{d}{n}L_- + \frac{dL_\pm}{\sqrt{n}}\right), \\
      \label{eq:appendix:complexity_bounds:perm_k_2}
      N_{\textnormal{Perm$K$}}\left(1\right) &= \frac{\Delta_0d L_{-}}{\varepsilon}.
    \end{align}
    \item
    Lower bounds:
    \begin{eqnarray*}
      N_{\textnormal{Rand$K$}}(p, k) \geq \frac{\Delta_0}{2\varepsilon}\min\left\{dL_-, \frac{dL_{+}}{\sqrt{n}}\right\}, \quad \forall p \in (0, 1], \forall k \in \{1, \dots, d\},
    \end{eqnarray*}
    Upper bounds:
    For all $k \in \{1, \dots, d / \sqrt{n}\},$ $p = k / d,$
    \begin{eqnarray}
      \label{eq:appendix:complexity_bounds:rand_k}
      N_{\textnormal{Rand$K$}}\left(p, k\right) \leq \frac{4\Delta_0dL_{+}}{\varepsilon\sqrt{n}}.
    \end{eqnarray}
    Moreover, for all $k \in \{1, \dots, d\}, p = 1,$
    \begin{eqnarray}
      \label{eq:appendix:complexity_bounds:rand_k_2}
      N_{\textnormal{Rand$K$}}\left(1, k\right) = \frac{\Delta_0d L_{-}}{\varepsilon}.
    \end{eqnarray}
    \item
    \begin{eqnarray}
      \label{eq:appendix:complexity_bounds:top_k}
      \min_{k \in \{1, \dots, d\}} N_{\textnormal{Top$K$}}(k) = N_{\textnormal{Top$K$}}(d) = \frac{\Delta_0dL_-}{\varepsilon}
    \end{eqnarray}
  \end{enumerate}
\end{lemma}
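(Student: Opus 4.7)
The plan is to reduce the lower bounds for Perm$K$ and Rand$K$ to the technical Lemma~\ref{lemma:technical:complexity_bounds} via suitable substitutions into the template $f(x, y) = (x + (1-x)y)\bigl(a + b\sqrt{(1/x - 1)(1/y - 1)}\bigr)$, and to handle the remaining upper bounds and the Top$K$ case by direct algebra. The main obstacle will be the algebraic matching: the cost formulas contain factors of the form $\sqrt{(1-p)/p}$ rather than $\sqrt{(1/x-1)(1/y-1)}$, so one must carefully choose the scaling of the constant $b$ so that an extra $\sqrt{1/y - 1}$ cancels.

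\textbf{Perm$K$.} I will substitute $x = p$, $y = 1/n$, $a = L_-$, and $b = L_\pm/\sqrt{n-1}$, so that $b \cdot \sqrt{1/y - 1} = L_\pm$ and a short expansion gives $N_{\textnormal{Perm$K$}}(p) = \frac{\Delta_0 d}{\varepsilon}\, f(p, 1/n)$. With $y_{\min} = 1/n \leq 1/2$ (using $n > 1$), Lemma~\ref{lemma:technical:complexity_bounds} yields $f(p, 1/n) \geq \frac{1}{2}\min\{L_-, L_-/n + L_\pm/\sqrt{n-1}\}$, and since $1/\sqrt{n-1} \geq 1/\sqrt{n}$ while $\min\{L_-, \cdot\}$ is non-decreasing in its second argument, this dominates the claimed bound with $\sqrt{n}$ in place of $\sqrt{n-1}$. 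For the two upper bounds I will plug $p = 1/n$ and $p = 1$ directly into the explicit formula and simplify, using $1 + (n-1)/n \leq 2$ and $\sqrt{n-1} \leq \sqrt{n}$ in the first case.

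\textbf{Rand$K$.} The same scheme applies with $x = p$, $y = k/d$, $a = L_-$, and $b = L_+/\sqrt{n}$, giving $N_{\textnormal{Rand$K$}}(p, k) = \frac{\Delta_0 d}{\varepsilon}\, f(p, k/d)$. Applying Lemma~\ref{lemma:technical:complexity_bounds} with $y_{\min} = 1/d \leq 1/2$ yields the lower bound $\frac{1}{2}\min\{L_-, L_-/d + L_+/\sqrt{n}\}$, which dominates $\frac{1}{2}\min\{L_-, L_+/\sqrt{n}\}$ by the same monotonicity observation. For the upper bound \eqref{eq:appendix:complexity_bounds:rand_k}, substituting $p = k/d$ turns the cost into $k(2 - k/d)\bigl(L_- + (d/k - 1)L_+/\sqrt{n}\bigr)$; I will upper-bound this by $2kL_- + 2dL_+/\sqrt{n}$ and then use $L_- \leq L_+$ (Lemma~\ref{lem:08hhdf}) together with $k \leq d/\sqrt{n}$ to absorb $2kL_-$ into $2dL_+/\sqrt{n}$. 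The $p = 1$ case is immediate.

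\textbf{Top$K$.} Here I will argue monotonicity directly. A short calculation gives
\[
  N_{\textnormal{Top$K$}}(k) - N_{\textnormal{Top$K$}}(d) = \frac{\Delta_0}{\varepsilon}\bigl((d - k)(L_+ - L_-) + L_+ \sqrt{d(d - k)}\bigr),
\]
and the right-hand side is nonnegative for all $k \leq d$ (using $L_+ \geq L_-$ from Lemma~\ref{lem:08hhdf}) and vanishes exactly at $k = d$. Hence the minimum over $k \in \{1, \dots, d\}$ is attained at $k = d$, which evaluates to $\Delta_0 d L_-/\varepsilon$.
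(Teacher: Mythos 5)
Your proposal is correct and follows essentially the same route as the paper: reduce the Perm$K$ and Rand$K$ lower bounds to Lemma~\ref{lemma:technical:complexity_bounds} via the substitution $x=p$, $y=1/n$ (resp.\ $y=k/d$), verify the upper bounds by plugging in the stated parameters, and settle Top$K$ by monotonicity using $L_-\leq L_+$. The only cosmetic differences are that you scale $b$ by $1/\sqrt{n-1}$ to get an exact identity and defer the $\sqrt{n-1}\to\sqrt{n}$ comparison to the end (the paper instead lower-bounds $\sqrt{n/n}\geq\sqrt{(n-1)/n}$ before invoking the lemma), and your Top$K$ difference computation is more explicit than the paper's one-line remark.
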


\begin{proof}
  \leavevmode
  \begin{enumerate}
  \item
  We start with the first inequality:
  \begin{align*}
    N_{\textnormal{Perm$K$}}(p) &= \frac{\Delta_0}{\varepsilon}\left(pd + (1 - p)\frac{d}{n}\right)\left(L_- + \sqrt{\frac{1-p}{p}}L_\pm\right) \\
    &=\frac{\Delta_0}{\varepsilon}\left(p + (1 - p)\frac{1}{n}\right)\left(dL_- + dL_\pm\sqrt{\frac{1-p}{p}}\right) \\
    &=\frac{\Delta_0}{\varepsilon}\left(p + (1 - p)\frac{1}{n}\right)\left(dL_- + dL_\pm\sqrt{\frac{n}{n}}\sqrt{\frac{1-p}{p}}\right) \\
    &\geq\frac{\Delta_0}{\varepsilon}\left(p + (1 - p)\frac{1}{n}\right)\left(dL_- + 
    \frac{dL_\pm}{\sqrt{n}}\sqrt{\left(\frac{1}{p} - 1\right)\left(n - 1\right)}\right).
  \end{align*}
  Using Lemma~\ref{lemma:technical:complexity_bounds} with $a = dL_-,$ $b = \frac{dL_\pm}{\sqrt{n}},$ and $y_{\min} = 1 / n$, we get
  \begin{align*}
    N_{\textnormal{Perm$K$}}(p) &= \frac{\Delta_0}{\varepsilon}\left(pd + (1 - p)\frac{d}{n}\right)\left(L_- + \sqrt{\frac{1-p}{p}}L_\pm\right) \\
    &\geq \frac{\Delta_0}{2\varepsilon}\min\left\{dL_-, \frac{d}{n}L_- + \frac{dL_\pm}{\sqrt{n}}\right\}. \\
  \end{align*}
  for all $p \in (0, 1].$
  We can obtain the bound \ref{eq:appendix:complexity_bounds:perm_k} if we take $p = 1 / n$:
  \begin{align*}
    N_{\textnormal{Perm$K$}}\left(\frac{1}{n}\right) &= \frac{\Delta_0}{\varepsilon}\left(\frac{d}{n} + \left(1 - \frac{1}{n}\right)\frac{d}{n}\right)\left(L_- + \sqrt{n - 1}L_\pm\right) \\
    &\leq \frac{2\Delta_0}{\varepsilon}\left(\frac{d}{n}L_- + \frac{d}{\sqrt{n}}L_\pm\right).
  \end{align*}
    We obtain the equality \ref{eq:appendix:complexity_bounds:perm_k_2} by taking $p = 1$.
  \item 
  \begin{align*}
    N_{\textnormal{Rand$K$}}(p, k) &= \frac{\Delta_0}{\varepsilon}\left(pd + (1 - p)k\right)\left(L_- + \sqrt{\frac{1-p}{p}\frac{\frac{d}{k} - 1}{n}}L_+\right) \\
    &=\frac{\Delta_0}{\varepsilon}\left(p + (1 - p)\frac{k}{d}\right)\left(dL_- + \frac{dL_+}{\sqrt{n}}\sqrt{\left(\frac{1}{p} - 1\right)\left(\frac{d}{k} - 1\right)}\right).
  \end{align*}
  Using Lemma~\ref{lemma:technical:complexity_bounds} with $a = dL_-,$ $b = \frac{dL_\pm}{\sqrt{n}},$ and $y_{\min} = 1 / d$, we get
  \begin{align*}
    &N_{\textnormal{Rand$K$}}(p, k) \geq \frac{\Delta_0}{2\varepsilon}\min\left\{dL_-, L_{-} + \frac{dL_{+}}{\sqrt{n}}\right\} \geq \frac{\Delta_0}{2\varepsilon}\min\left\{dL_-, \frac{dL_{+}}{\sqrt{n}}\right\},
  \end{align*}
  for all $p \in (0, 1], k \in \{1, \dots, d\}.$ We can obtain the bound \ref{eq:appendix:complexity_bounds:rand_k}
  if we take $k \in \{1, \dots, d / \sqrt{n}\}$ and $p = k / d$:
  \begin{align*}
    N_{\textnormal{Rand$K$}}(p, k) &\leq \frac{2\Delta_0}{\varepsilon}\left(kL_- + k\left(\frac{d}{k} - 1\right)\frac{L_+}{\sqrt{n}}\right) \\
    &\leq\frac{2\Delta_0}{\varepsilon}\left(\frac{dL_{-}}{\sqrt{n}} + \frac{dL_+}{\sqrt{n}}\right) \leq \frac{4\Delta_0dL_+}{\varepsilon\sqrt{n}}.
  \end{align*}
  The equality \ref{eq:appendix:complexity_bounds:rand_k_2} is obtained by taking $p = 1.$
  \item
    This part is easily proved, using $L_{-} \leq L_{+}$ from Lemma~\ref{lem:08hhdf}, and directly minimizing (\ref{eq:appendix:complexity_bounds:top_k}).
  \end{enumerate}
\end{proof}

In Table~\ref{table:communication_complexity}, we summarize bounds 
(\ref{eq:appendix:complexity_bounds:perm_k}), (\ref{eq:appendix:complexity_bounds:perm_k_2}), 
(\ref{eq:appendix:complexity_bounds:rand_k}), (\ref{eq:appendix:complexity_bounds:rand_k_2}), and 
(\ref{eq:appendix:complexity_bounds:top_k}).

\subsubsection{Case $n \geq d$}

Now, we analyze case, when $n \geq d.$  For simplicity, without losing the generality, we assume that $d \mid n,$ $n > 1,$ and $d > 1$.
Then, Perm$K$ from Definition~\ref{def:PermK-2} satisfies the AB inequality with $A = B = \frac{d - 1}{n - 1}$.

In each iteration of \algname{MARINA}, on average, Perm$K$ sends
$$\cO\left(pd + (1 - p)\right)$$
bits, thus the theoretical communication complexity is
\begin{align}
  \label{eq:complexity_bound:perm_k_n_geq_d_with_parameters}
  N_{\textnormal{Perm$K$}}(p) \eqdef \frac{\Delta_0}{\varepsilon}\left(pd + (1 - p)\right)\left(L_- + \sqrt{\frac{1-p}{p}\frac{d - 1}{n - 1}}L_\pm\right)
\end{align}
up to a constant factor.

\begin{lemma}
  \label{lemma:optimal_parameters_of_methods_n_geq_d}
  For communication complexity $N_{\textnormal{Perm$K$}}(p)$ of \algname{MARINA} with Perm$K$, 
  communication complexity $N_{\textnormal{Rand$K$}}(p, k)$ of \algname{MARINA} with Rand$K$ and 
  communication complexity $N_{\textnormal{Top$K$}}(k)$ of \algname{EF21} with Top$K$
  defined in (\ref{eq:complexity_bound:perm_k_n_geq_d_with_parameters}), (\ref{eq:complexity_bound:rand_k_with_parameters}) and (\ref{eq:complexity_bound:top_k_with_parameters})
  the following inequalities hold:
  \leavevmode
  \begin{enumerate}
    \item
    Lower bounds:
    \begin{eqnarray*}
      N_{\textnormal{Perm$K$}}(p) \geq \frac{\Delta_0}{2\varepsilon}\min\left\{dL_-, L_- + \frac{dL_\pm}{\sqrt{n}}\right\}, \quad \forall p \in (0, 1].
    \end{eqnarray*}
    Upper bounds:
    \begin{align}
      \label{eq:appendix:complexity_bounds:perm_k_n_geq_d}
      &N_{\textnormal{Perm$K$}}\left(\frac{1}{d}\right) \leq \frac{4\Delta_0}{\varepsilon}\left(L_- + \frac{dL_\pm}{\sqrt{n}}\right), \\
      \label{eq:appendix:complexity_bounds:perm_k_n_geq_d_2}
      &N_{\textnormal{Perm$K$}}\left(1\right) = \frac{\Delta_0d L_{-}}{\varepsilon}.
    \end{align}
    \item
    Lower bounds:
    \begin{eqnarray*}
      N_{\textnormal{Rand$K$}}(p, k) \geq \frac{\Delta_0}{2\varepsilon}\min\left\{dL_-, L_{-} + \frac{dL_{+}}{\sqrt{n}}\right\}, \quad \forall p \in (0, 1], \forall k \in \{1, \dots, d\},
    \end{eqnarray*}
    Upper bounds:
    \begin{eqnarray}
      \label{eq:appendix:complexity_bounds:rand_k_n_geq_d}
      N_{\textnormal{Rand$K$}}\left(\frac{1}{d}, 1\right) \leq \frac{2\Delta_0}{\varepsilon}\left(L_- + \frac{dL_+}{\sqrt{n}}\right),
    \end{eqnarray}
    Moreover, for all $k \in \{1, \dots, d\}, p = 1,$
    \begin{eqnarray}
      \label{eq:appendix:complexity_bounds:rand_k_n_geq_d_2}
      N_{\textnormal{Rand$K$}}\left(1, k\right) = \frac{\Delta_0d L_{-}}{\varepsilon}.
    \end{eqnarray}
    \item
    \begin{eqnarray}
      \label{eq:appendix:complexity_bounds:top_k_n_geq_d}
      \min_{k \in \{1, \dots, d\}} N_{\textnormal{Top$K$}}(k) = N_{\textnormal{Top$K$}}(d) = \frac{\Delta_0dL_-}{\varepsilon}
    \end{eqnarray}
  \end{enumerate}
\end{lemma}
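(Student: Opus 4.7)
The plan is to mirror the proof of Lemma~\ref{lemma:optimal_parameters_of_methods} almost verbatim, with the one substantive change that in the $n\geq d$ regime Perm$K$ has $A = B = \frac{d-1}{n-1}$ (not $A = B = 1$) and each compressed message costs $1$ float instead of $d/n$. The whole argument rests on a clean change of variables that recasts each complexity expression in the form covered by Lemma~\ref{lemma:technical:complexity_bounds}.

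For the Perm$K$ part, I would rewrite (\ref{eq:complexity_bound:perm_k_n_geq_d_with_parameters}) by pulling out a factor of $d$:
\begin{eqnarray*}
N_{\textnormal{Perm$K$}}(p) = \frac{d\Delta_0}{\varepsilon}\left(p + (1-p)\tfrac{1}{d}\right)\left(L_- + \tfrac{L_\pm}{\sqrt{n-1}}\sqrt{\left(\tfrac{1}{p}-1\right)\left(d-1\right)}\right),
\end{eqnarray*}
set $x = p$, $y = 1/d$, $a = L_-$, $b = L_\pm/\sqrt{n-1}$, $y_{\min} = 1/d \leq 1/2$ (as $d \geq 2$), and invoke Lemma~\ref{lemma:technical:complexity_bounds} to get $N_{\textnormal{Perm$K$}}(p) \geq \frac{\Delta_0}{2\varepsilon}\min\{dL_-, L_- + dL_\pm/\sqrt{n-1}\}$. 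Finally I would use $\sqrt{n-1}\leq\sqrt{n}$ to replace $\sqrt{n-1}$ by $\sqrt{n}$ (this only weakens the lower bound) and conclude the claimed inequality. The upper bound (\ref{eq:appendix:complexity_bounds:perm_k_n_geq_d}) is obtained by direct substitution $p=1/d$: the prefactor is $1 + (1-1/d) \leq 2$ and $\sqrt{(d-1)(d-1)/(n-1)} \leq d/\sqrt{n-1} \leq \sqrt{2}\,d/\sqrt{n}$ (using $n-1 \geq n/2$ for $n\geq 2$), so the product is at most $\frac{4\Delta_0}{\varepsilon}(L_- + dL_\pm/\sqrt{n})$; equality (\ref{eq:appendix:complexity_bounds:perm_k_n_geq_d_2}) at $p=1$ is immediate.

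For the Rand$K$ part, the same strategy works: rewrite (\ref{eq:complexity_bound:rand_k_with_parameters}) as $\frac{d\Delta_0}{\varepsilon}(x+(1-x)y)(L_- + \tfrac{L_+}{\sqrt n}\sqrt{(1/x-1)(1/y-1)})$ with $x = p$, $y = k/d$, $a = L_-$, $b = L_+/\sqrt n$, $y_{\min} = 1/d$, and apply Lemma~\ref{lemma:technical:complexity_bounds} to obtain the lower bound $\frac{\Delta_0}{2\varepsilon}\min\{dL_-, L_- + dL_+/\sqrt n\}$. The upper bound (\ref{eq:appendix:complexity_bounds:rand_k_n_geq_d}) at $p = 1/d$, $k = 1$ is again direct: the prefactor is $\leq 2$ and the square-root term is $(d-1)/\sqrt n \leq d/\sqrt n$. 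Equation (\ref{eq:appendix:complexity_bounds:rand_k_n_geq_d_2}) is trivial at $p=1$.

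For the Top$K$ part, write $g(k) \eqdef kL_- + L_+(d-k) + L_+\sqrt{d(d-k)}$ so that $N_{\textnormal{Top$K$}}(k) = \frac{\Delta_0}{\varepsilon}g(k)$. Since $L_- \leq L_+$ by Lemma~\ref{lem:08hhdf}, differentiating gives $g'(k) = L_- - L_+ - \frac{L_+ d}{2\sqrt{d(d-k)}} \leq 0$ for all $k \in [1,d)$, so $g$ is nonincreasing on $[1,d]$ and attains its minimum at $k=d$, where the square-root term vanishes, yielding $g(d) = dL_-$. The only mildly nontrivial step in the whole lemma is checking that $\sqrt{n-1}\leq\sqrt{n}\leq\sqrt{2(n-1)}$ so the constants in the upper bound (\ref{eq:appendix:complexity_bounds:perm_k_n_geq_d}) can be absorbed into a universal factor of $4$; everything else is substitution plus one application of Lemma~\ref{lemma:technical:complexity_bounds}, so no real obstacle is anticipated.
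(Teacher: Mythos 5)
Your proposal is correct and follows essentially the same route as the paper: both reduce each complexity expression to the form handled by Lemma~\ref{lemma:technical:complexity_bounds} with $y_{\min}=1/d$ for the lower bounds, and substitute $p=1/d$ (resp.\ $p=1/d$, $k=1$) for the upper bounds; the only cosmetic differences are that you factor $d$ out front rather than absorbing it into $a,b$, and you apply the technical lemma before replacing $\sqrt{n-1}$ by $\sqrt{n}$ whereas the paper does this in the opposite order. Your Top$K$ monotonicity argument is in fact slightly more explicit than the paper's, which simply cites direct minimization using $L_-\leq L_+$.
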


\begin{proof}
  \leavevmode
  \begin{enumerate}
  \item
  \begin{eqnarray*}
    &N_{\textnormal{Perm$K$}}(p) = \frac{\Delta_0}{\varepsilon}\left(pd + (1 - p)\right)\left(L_- + \sqrt{\frac{1-p}{p}\frac{d - 1}{n - 1}}L_\pm\right) \\
    &\geq \frac{\Delta_0}{\varepsilon}\left(p + (1 - p)\frac{1}{d}\right)\left(dL_- + \frac{dL_\pm}{\sqrt{n}}\sqrt{\left(\frac{1}{p} - 1\right)\left(d - 1\right)}\right)
  \end{eqnarray*}
  Using Lemma~\ref{lemma:technical:complexity_bounds} with $a = dL_-,$ $b = \frac{dL_\pm}{\sqrt{n}},$ and $y_{\min} = 1 / d,$ we get
  \begin{align*}
    &N_{\textnormal{Perm$K$}}(p) \geq \frac{\Delta_0}{2\varepsilon}\min\left\{dL_-, L_- + \frac{dL_\pm}{\sqrt{n}}\right\} \\
  \end{align*}
  for all $p \in (0, 1].$
  We can show the bound \ref{eq:appendix:complexity_bounds:perm_k_n_geq_d} if we take $p = 1 / d$:
  \begin{align*}
    N_{\textnormal{Perm$K$}}\left(\frac{1}{d}\right) = \frac{\Delta_0}{\varepsilon}\left(1 + \left(1 - \frac{1}{d}\right)1\right)\left(L_- + \frac{d - 1}{\sqrt{n - 1}}L_\pm\right) \leq \frac{4\Delta_0}{\varepsilon}\left(L_- + \frac{dL_\pm}{\sqrt{n}}\right)
  \end{align*}
  The bound \ref{eq:appendix:complexity_bounds:perm_k_n_geq_d_2} is obtained by taking $p = 1$.
  \item
  As in Lemma~\ref{lemma:optimal_parameters_of_methods} we can get, that
  \begin{align*}
    N_{\textnormal{Rand$K$}}(p, k) \geq \frac{\Delta_0}{2\varepsilon}\min\left\{dL_-, L_{-} + \frac{dL_{+}}{\sqrt{n}}\right\},
  \end{align*}
  for all $p \in (0, 1], k \in \{1, \dots, d\}.$
  Moreover, if we take $p = 1 / d$ and $k = 1$, we have
  \begin{align*}
    N_{\textnormal{Rand$K$}}\left(\frac{1}{d}, 1\right) \leq \frac{2\Delta_0}{\varepsilon}\left(L_- + \frac{dL_+}{\sqrt{n}}\right).
  \end{align*}
  The bound \ref{eq:appendix:complexity_bounds:rand_k_n_geq_d_2} is obtained by taking $p = 1$.
  \item
  For Top$K$, the reasoning the same as in Lemma~\ref{lemma:optimal_parameters_of_methods}.
  \end{enumerate}
\end{proof}

In Table~\ref{table:communication_complexity}, we summarize bounds 
(\ref{eq:appendix:complexity_bounds:perm_k_n_geq_d}), (\ref{eq:appendix:complexity_bounds:perm_k_n_geq_d_2}),
(\ref{eq:appendix:complexity_bounds:rand_k_n_geq_d}), (\ref{eq:appendix:complexity_bounds:rand_k_n_geq_d_2}), and 
(\ref{eq:appendix:complexity_bounds:top_k_n_geq_d}).

\subsection{P\L\,assumption}

\subsubsection{Case $n \leq d$}

\label{section:sec:appendix:complexity_bound:pl:n_leq_d}

Using the same reasoning as in Appendix~\ref{sec:appendix:complexity_bound:nonconvex}, 
Theorem~\ref{theorem:AB_PL} and Theorem~\ref{thm:PL-main}, we can show 
that communication complexities predicted by theory are equal to
\begin{align}
  \label{section:sec:appendix:complexity_bound:pl:n_leq_d:perm}
  &N_{\textnormal{Perm$K$}}(p) \eqdef \log\frac{\Delta_0}{\varepsilon}\left(pd + (1 - p)\frac{d}{n}\right)\max\left\{\left(\frac{L_-}{\mu} + \sqrt{\frac{2\left(1-p\right)}{p}}\frac{L_\pm}{\mu}\right), \frac{1}{p}\right\},\\
  \label{section:sec:appendix:complexity_bound:pl:n_leq_d:rand}
  &N_{\textnormal{Rand$K$}}(p, k) \eqdef \log\frac{\Delta_0}{\varepsilon}\left(pd + (1 - p)k\right)\max\left\{\left(\frac{L_-}{\mu} + \sqrt{\frac{2\left(1-p\right)}{p}\frac{\frac{d}{k} - 1}{n}}\frac{L_+}{\mu}\right), \frac{1}{p}\right\},\\
  \label{section:sec:appendix:complexity_bound:pl:n_leq_d:top}
  &N_{\textnormal{Top$K$}}(k) \eqdef \log\frac{\Delta_0}{\varepsilon}k\max\left\{\left(\frac{L_-}{\mu} + \frac{L_+}{\mu}\frac{d - k + \sqrt{d^2 - dk}}{k}\right),\frac{1}{1 - \sqrt{1 - \frac{k}{d}}}\right\}.
\end{align}
up to a constant factor. 

\begin{lemma}
  \leavevmode
  For communication complexity $N_{\textnormal{Perm$K$}}(p)$ of \algname{MARINA} with Perm$K$, 
  communication complexity $N_{\textnormal{Rand$K$}}(p, k)$ of \algname{MARINA} with Rand$K$ and 
  communication complexity $N_{\textnormal{Top$K$}}(k)$ of \algname{EF21} with Top$K$
  defined in (\ref{section:sec:appendix:complexity_bound:pl:n_leq_d:perm}), (\ref{section:sec:appendix:complexity_bound:pl:n_leq_d:rand}) and (\ref{section:sec:appendix:complexity_bound:pl:n_leq_d:top})
  the following inequalities hold\footnote{In the lemma, we use ``Big Theta'' notation, which means, that if $f(x) = \Theta(g(x)),$ then $f$ is bounded both above and below by $g$ asymptotically up to a logarithmic factor.}:
  \label{lemma:optimal_parameters_of_methods_pl}
  \begin{enumerate}
    \item
    \begin{eqnarray*}
      \inf_{p \in (0, 1]} N_{\textnormal{Perm$K$}}(p) = \Theta\left(\max\left\{\frac{1}{\mu}\min\left\{dL_{-},\frac{d}{n}L_- + \frac{d}{\sqrt{n}}L_\pm\right\}, d\right\}\right),
    \end{eqnarray*}
    \item
    \begin{eqnarray*}
      \inf_{p \in (0, 1], k \in \{1, \dots, d\}} N_{\textnormal{Rand$K$}}(p, k) = \Theta\left(\max\left\{\frac{1}{\mu}\min\left\{dL_{-},\frac{d}{\sqrt{n}}L_{+}\right\}, d\right\}\right),
    \end{eqnarray*}
    \item
    \begin{eqnarray*}
      \min_{k \in \{1, \dots, d\}} N_{\textnormal{Top$K$}}(k) = \Theta\left(\frac{dL_-}{\mu}\right).
    \end{eqnarray*}
  \end{enumerate}
\end{lemma}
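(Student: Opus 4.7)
The strategy is to prove matching upper and lower bounds for each of the three complexities. In every case the outer $\max\{\cdot,d\}$ structure will emerge from the inner $\max$ inside the definitions of $N$: one branch of the inner $\max$ contributes the ``$d$'' term and the other contributes the nonconvex-style term (divided by $\mu$). The key elementary observation driving the lower bounds is
\[
\left(pd + (1-p)\tfrac{d}{n}\right)\cdot\tfrac{1}{p} \;=\; d + \tfrac{(1-p)d}{np} \;\geq\; d, \qquad \left(pd + (1-p)k\right)\cdot\tfrac{1}{p} \;\geq\; d,
\]
valid for all $p\in(0,1]$ and all $k\in\{1,\dots,d\}$. Using only the $1/p$ argument of the inner $\max$ therefore yields $N_{\mathrm{Perm}K}(p)\gtrsim d$ and $N_{\mathrm{Rand}K}(p,k)\gtrsim d$ uniformly in the parameters. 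Using only the other argument of the inner $\max$ and dividing through by $\mu$ reduces to the very bounds proved in Lemma~\ref{lemma:optimal_parameters_of_methods} (up to a benign $\sqrt{2}$ coming from the P\L{} stepsize restriction), which deliver the corresponding $\min$-expressions. Since any single lower bound is a lower bound on the infimum, taking the maximum of the two independently derived bounds closes the lower-bound half.

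For the upper bounds I would evaluate $N$ at a short list of concrete test points and take the minimum. For Perm$K$, the two candidates are $p=1/n$ and $p=1$. At $p=1/n$ the prefactor is at most $2d/n$, the $1/p$-branch equals $n$, and the $T_1$-branch is at most $\tfrac{1}{\mu}(L_- + \sqrt{2(n-1)}L_\pm)$; the product is $\lesssim \tfrac{d}{n\mu}L_- + \tfrac{d}{\sqrt n\mu}L_\pm + d$. At $p=1$ the prefactor equals $d$, both branches collapse, and the product is $\lesssim dL_-/\mu$ (using $L_-\geq\mu$). Taking the minimum matches the claim. For Rand$K$ the test points are $(p,k)=(1,d)$ and $(p,k)=(1/\sqrt n,\lceil d/\sqrt n\rceil)$: the former gives $dL_-/\mu$, and for the latter the prefactor is $\leq 2d/\sqrt n$ while both branches of the inner $\max$ are bounded by $\tfrac{1}{\mu}(L_-+L_+)$ and $\sqrt n$ respectively, producing $\lesssim \max\{d/\sqrt n\cdot L_+/\mu,\,d\}$. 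A short case analysis on whether $d/\sqrt n\cdot L_+/\mu$ exceeds $d$ verifies that the minimum of these two upper bounds equals the target $\max\bigl\{\tfrac{1}{\mu}\min\{dL_-,d/\sqrt n\,L_+\},d\bigr\}$ up to constants.

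The Top$K$ case is the cleanest and does not require an outer $\max\{\cdot,d\}$. Lemma~\ref{lemma:relation_between_L_and_mu} gives $L_-\geq\mu$, which automatically absorbs the ``$d$'' term into $dL_-/\mu$. The upper bound is achieved at $k=d$, where $N_{\mathrm{Top}K}(d)=d\cdot\max(L_-/\mu,1)=dL_-/\mu$. For the matching lower bound I would discard $\sqrt{d^2-dk}\geq 0$ in the first argument of the inner $\max$ and use $L_+\geq L_-$ (Lemma~\ref{lem:08hhdf}) to obtain
\[
k\left(\tfrac{L_-}{\mu}+\tfrac{L_+}{\mu}\cdot\tfrac{d-k+\sqrt{d^2-dk}}{k}\right) \;\geq\; \tfrac{L_-}{\mu}\bigl(k+(d-k)\bigr) \;=\; \tfrac{dL_-}{\mu},
\]
uniformly in $k\in\{1,\dots,d\}$.

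The main obstacle is purely bookkeeping rather than analysis: verifying that the two branches of the inner $\max$ truly decouple (which is immediate for lower bounds via $\max(A,B)\geq A,B$ but requires choosing good test values for the upper bounds), and checking that the constants align across regimes via $L_-\leq L_+$, $L_\pm\leq L_+$, and $L_-\geq\mu$ so that the $\min$ and $\max$ in the stated bounds reflect the genuinely achieved rates. No tool heavier than AM--GM (already isolated in Lemma~\ref{lemma:technical:complexity_bounds}) is needed.
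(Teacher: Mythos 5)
Your proposal is correct and follows essentially the same route as the paper: distribute the per-round cost into the inner $\max$, observe that the $1/p$ (resp.\ $1/(1-\sqrt{1-k/d})$) branch forces the product to be at least $d$, and reduce the other branch to the lower/upper bounds of Lemma~\ref{lemma:optimal_parameters_of_methods} (evaluated at the same test points $p=1$, $p=1/n$, $p\approx k/d$ with $k\approx d/\sqrt{n}$, and $k=d$), using $L_-\geq\mu$ to absorb the $d$ term in the Top$K$ case. You are merely more explicit where the paper writes ``the analysis of the first terms is the same as in Lemma~\ref{lemma:optimal_parameters_of_methods}.''
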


\begin{proof}
Rearranging (\ref{section:sec:appendix:complexity_bound:pl:n_leq_d:perm}), (\ref{section:sec:appendix:complexity_bound:pl:n_leq_d:rand}) and (\ref{section:sec:appendix:complexity_bound:pl:n_leq_d:top}), we get
\begin{align*}
  &N_{\textnormal{Perm$K$}}(p) = \log\frac{\Delta_0}{\varepsilon}\max\left\{\left(pd + (1 - p)\frac{d}{n}\right)\left(\frac{L_-}{\mu} + \sqrt{\frac{2\left(1-p\right)}{p}}\frac{L_\pm}{\mu}\right), d + \frac{(1 - p)d}{pn}\right\},\\
  &N_{\textnormal{Rand$K$}}(p, k) = \log\frac{\Delta_0}{\varepsilon}\max\left\{\left(pd + (1 - p)k\right)\left(\frac{L_-}{\mu} + \sqrt{\frac{2\left(1-p\right)}{p}\frac{\frac{d}{k} - 1}{n}}\frac{L_+}{\mu}\right), d + \frac{(1 - p)k}{p}\right\},\\
  &N_{\textnormal{Top$K$}}(k) = \log\frac{\Delta_0}{\varepsilon}\max\left\{k\left(\frac{L_-}{\mu} + \frac{L_+}{\mu}\frac{d - k + \sqrt{d^2 - dk}}{k}\right),\frac{k}{1 - \sqrt{1 - \frac{k}{d}}}\right\}.
\end{align*}

Note, that 
\begin{align*}
  \frac{k}{1 - \sqrt{1 - \frac{k}{d}}} \geq d, \quad \forall k \in \{1, \dots, d\},
\end{align*}

thus in all complexities, the second terms inside the $\max$ brackets are at least $d$.

Analysis of first terms inside the $\max$ brackets is the same as in Lemma~\ref{lemma:optimal_parameters_of_methods}.
\end{proof}

In Table~\ref{table:communication_complexity_pl}, we provide complexity bounds with optimal parameters of algorithms.

\subsubsection{Case $n \geq d$}

\label{section:sec:appendix:complexity_bound:pl:n_ged_d}

The only difference here is that the communication complexity of Perm$K$ predicted by our theory is the following:

\begin{align}
  \label{section:sec:appendix:complexity_bound:pl:n_geq_d:perm}
  &N_{\textnormal{Perm$K$}}(p) \eqdef \log\frac{\Delta_0}{\varepsilon}\left(pd + (1 - p)\right)\max\left\{\left(\frac{L_-}{\mu} + \sqrt{\frac{2\left(1-p\right)}{p}\frac{d - 1}{n - 1}}\frac{L_\pm}{\mu}\right), \frac{1}{p}\right\}.
\end{align}

\begin{lemma}
  \leavevmode
  For communication complexity $N_{\textnormal{Perm$K$}}(p)$ of \algname{MARINA} with Perm$K$, 
  communication complexity $N_{\textnormal{Rand$K$}}(p, k)$ of \algname{MARINA} with Rand$K$ and 
  communication complexity $N_{\textnormal{Top$K$}}(k)$ of \algname{EF21} with Top$K$
  defined in (\ref{section:sec:appendix:complexity_bound:pl:n_geq_d:perm}), (\ref{section:sec:appendix:complexity_bound:pl:n_leq_d:rand}) and (\ref{section:sec:appendix:complexity_bound:pl:n_leq_d:top})
  the following inequalities hold:
  \label{lemma:optimal_parameters_of_methods_pl:n_geq_d}
  \begin{enumerate}
    \item
    \begin{eqnarray*}
      \inf_{p \in (0, 1]} N_{\textnormal{Perm$K$}}(p) = \Theta\left(\max\left\{\frac{1}{\mu}\min\left\{dL_{-},L_- + \frac{d}{\sqrt{n}}L_\pm\right\}, d\right\}\right),
    \end{eqnarray*}
    \item
    \begin{eqnarray*}
      \inf_{p \in (0, 1], k \in \{1, \dots, d\}} N_{\textnormal{Rand$K$}}(p, k) = \Theta\left(\max\left\{\frac{1}{\mu}\min\left\{dL_{-},L_- + \frac{d}{\sqrt{n}}L_{+}\right\}, d\right\}\right),
    \end{eqnarray*}
    \item
    \begin{eqnarray*}
      \min_{k \in \{1, \dots, d\}} N_{\textnormal{Top$K$}}(k) = \Theta\left(\frac{dL_-}{\mu}\right).
    \end{eqnarray*}
  \end{enumerate}
\end{lemma}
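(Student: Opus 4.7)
My approach closely mirrors that of Lemma~\ref{lemma:optimal_parameters_of_methods_pl} (the $n\leq d$ case under P\L), swapping in the $n\geq d$ machinery of Lemma~\ref{lemma:optimal_parameters_of_methods_n_geq_d} in place of the $n\leq d$ machinery of Lemma~\ref{lemma:optimal_parameters_of_methods}. The first step is to push the leading communication factor inside the max. For Perm$K$ this rewrites (\ref{section:sec:appendix:complexity_bound:pl:n_geq_d:perm}) as
\[
N_{\textnormal{Perm$K$}}(p) \;=\; \log\tfrac{\Delta_0}{\varepsilon}\,\max\!\Bigl\{(pd+(1-p))\bigl(\tfrac{L_-}{\mu}+\sqrt{\tfrac{2(1-p)}{p}\tfrac{d-1}{n-1}}\tfrac{L_\pm}{\mu}\bigr),\; d+\tfrac{1-p}{p}\Bigr\},
\]
with analogous rearrangements for $N_{\textnormal{Rand$K$}}(p,k)$ and $N_{\textnormal{Top$K$}}(k)$. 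The second argument of the max in each expression is always at least $d$: obvious for Perm$K$/Rand$K$ since $(1-p)/p\geq 0$, while for Top$K$ it follows from $1-\sqrt{1-k/d}\leq k/d$, giving $k/(1-\sqrt{1-k/d})\geq d$. This provides the universal $\max\{\cdot,d\}$ lower bound.

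For the first argument of the max, I would reuse the algebra from Lemma~\ref{lemma:optimal_parameters_of_methods_n_geq_d} essentially verbatim. Specifically, for Perm$K$ I factor
\[
(pd+(1-p))\bigl(\tfrac{L_-}{\mu}+\sqrt{\tfrac{2(1-p)}{p}\tfrac{d-1}{n-1}}\tfrac{L_\pm}{\mu}\bigr) \;\geq\; (p+(1-p)\tfrac{1}{d})\Bigl(\tfrac{dL_-}{\mu}+\tfrac{dL_\pm}{\mu\sqrt{n}}\sqrt{(\tfrac{1}{p}-1)(d-1)}\Bigr),
\]
and apply Lemma~\ref{lemma:technical:complexity_bounds} with $a=dL_-/\mu$, $b=dL_\pm/(\mu\sqrt{n})$, $y_{\min}=1/d$ to obtain the lower bound $\frac{1}{2\mu}\min\{dL_-,L_-+dL_\pm/\sqrt{n}\}$ valid for all $p\in(0,1]$. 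For the matching upper bound I would verify that $p=1/d$ gives first argument $\leq \frac{4}{\mu}(L_-+dL_\pm/\sqrt{n})$ and $p=1$ gives exactly $dL_-/\mu$, so the infimum is $\Theta(\frac{1}{\mu}\min\{dL_-,L_-+dL_\pm/\sqrt{n}\})$. Combining with the $\geq d$ bound on the second argument yields claim (1). Claim (2) is identical in structure with $(A,B)$-inequality constants replaced by those for independent Rand$K$, so the same Lemma~\ref{lemma:technical:complexity_bounds} reduction applies with $b=dL_+/(\mu\sqrt{n})$.

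For claim (3), the first argument of the max in the Top$K$ expression equals $\frac{kL_-}{\mu}+\frac{L_+}{\mu}(d-k+\sqrt{d^2-dk})$, which is minimized over $k\in\{1,\ldots,d\}$ at $k=d$, giving $dL_-/\mu$ (the $L_+$ tail vanishes exactly there while the $kL_-/\mu$ piece is monotone; any $k<d$ keeps the $L_+(d-k)$ contribution and $L_+\geq L_-$ by Lemma~\ref{lem:08hhdf}). Since $L_-\geq\mu$ by Lemma~\ref{lemma:relation_between_L_and_mu}, $dL_-/\mu\geq d$, so the max coincides with the first argument and the overall minimum is $\Theta(dL_-/\mu)$.

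\textbf{Main obstacle.} None of the steps are conceptually hard; the only delicate point is justifying that the $p\mapsto(\text{first argument})$ minimization really tracks Lemma~\ref{lemma:technical:complexity_bounds} after the reparametrization $x=p$, $y=1/d$, and that the ``min with $dL_-$'' branch of the lemma's conclusion is realized at $p=1$ (where the AB/Rand-$K$-induced $L_\pm$ or $L_+$ term vanishes). I expect the bookkeeping—checking that constants like $\sqrt{(d-1)/(n-1)}$ and $\sqrt{(d/k-1)/n}$ cleanly reduce to the form $b\sqrt{(1/x-1)(1/y-1)}$ with $b=dL_\pm/(\mu\sqrt{n})$ (resp.\ $dL_+/(\mu\sqrt{n})$)—to be the most error-prone part, but it is purely mechanical.
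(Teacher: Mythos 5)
Your proposal is correct and follows essentially the same route as the paper, which proves this lemma by deferring to Lemma~\ref{lemma:optimal_parameters_of_methods_pl} (pull the per-round cost inside the $\max$, observe the second argument is always at least $d$, and treat the first argument exactly as in the non-P\L{} Lemma~\ref{lemma:optimal_parameters_of_methods_n_geq_d} via Lemma~\ref{lemma:technical:complexity_bounds} with $y_{\min}=1/d$ and the evaluations at $p=1/d$ and $p=1$). Your bookkeeping for the reductions and for the Top$K$ case ($L_+\geq L_-$ and $L_-\geq\mu$) matches the paper's argument.
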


The proof of Lemma~\ref{lemma:optimal_parameters_of_methods_pl:n_geq_d} is the same as in Lemma~\ref{lemma:optimal_parameters_of_methods_pl}.

Using the same reasoning as before, we provide complexity bounds in Table~\ref{table:communication_complexity_pl}.

\newpage
\section{Group Hessian Variance}
\label{sec:group_hessian_variance}

We showed the communication complexity improvement of \algname{MARINA} algorithm with Perm$K$ under the assumption that $L_{\pm} \ll L_{-}$. 
In general, $L_{\pm}$ can be large; however, we can still use the notion of $L_{\pm}$ but in a different way, 
by splitting the functions into several groups where $L_{\pm}$ is small. 

We split a set $\{1, \cdots, n\}$ into nonempty sets $\{\mathcal{G}_k\}_{k=1}^{g},$ 
$\bigcup_{k=1}^{g}\mathcal{G}_k = \{1, \cdots, n\},$ $\mathcal{G}_i \bigcap \mathcal{G}_j = \emptyset, $
for all $i \neq j \in \{1, \cdots, g\},$
and $|\mathcal{G}_k| > 0$, for all $k \in \{1, \cdots, g\}.$ Let us fix some set $\mathcal{G}_k$ and 
define functions
\begin{align*}
  &\cL_{-}^{\mathcal{G}_k}(x,y) \eqdef \norm{\frac{1}{|\mathcal{G}_k|} \sum_{i \in |\mathcal{G}_k|} \left(\nabla f_i(x) - \nabla f_i(y)\right)}^2,\\
  & \cL_{+}^{\mathcal{G}_k}(x,y) \eqdef \frac{1}{|\mathcal{G}_k|} \sum_{i \in |\mathcal{G}_k|} \norm{\nabla f_i(x) - \nabla f_i(y)}^2, \\
  &\cL_{\pm}^{\mathcal{G}_k}(x,y) \eqdef \cL_{+}^{\mathcal{G}_k}(x,y) - \cL_{-}^{\mathcal{G}_k}(x,y)
\end{align*}
and the smallest constants $L^{\mathcal{G}_k}_{-}, L^{\mathcal{G}_k}_{+}, L^{\mathcal{G}_k}_{\pm}$ for functions $\cL_{-}^{\mathcal{G}_k}(x,y), \cL_{+}^{\mathcal{G}_k}(x,y),$ and $\cL_{\pm}^{\mathcal{G}_k}(x,y), $ such that
\begin{align*}
  \cL_{-}^{\mathcal{G}_k}(x,y) \leq \left(L^{\mathcal{G}_k}_{-}\right)^2\norm{x - y}^2,
  \cL_{+}^{\mathcal{G}_k}(x,y) \leq \left(L^{\mathcal{G}_k}_{+}\right)^2\norm{x - y}^2,
  \cL_{\pm}^{\mathcal{G}_k}(x,y) \leq \left(L^{\mathcal{G}_k}_{\pm}\right)^2\norm{x - y}^2,
\end{align*}
for all $k \in \{1, \cdots, g\}, x, y \in \R^d.$

In this section, we have the following assumption about groups.

\begin{assumption} \label{ass:group_ab} 
  Compressors between groups are independent, i.e. $\cC_i$ and $\cC_j$ are independent, 
  for all $i \in \mathcal{G}_k, j \in \mathcal{G}_p, k \neq p.$
  And Assumption~\ref{ass:AB} is satisfied with constants 
  $A_{\mathcal{G}_k}$ and $B_{\mathcal{G}_k}$ inside each group $\mathcal{G}_k$, for $k \in \{1, \cdots, g\}.$
\end{assumption}

Now, we prove group AB inequality.

\begin{lemma}[Group AB inequality]
  Let us assume that Assumptions~\ref{eq:unbiased_compressors} and \ref{ass:group_ab} hold, then
  \begin{multline}
    \label{eq:group_ab_inequality}
    \Exp{\norm{\frac{1}{n}\sum\limits_{i=1}^n\cC_i(a_i) - \frac{1}{n}\sum\limits_{i=1}^n a_i}^2} \\
    \leq \sum\limits_{k=1}^g\frac{A_{\mathcal{G}_k}|\mathcal{G}_k|^2}{n^2} \frac{1}{|\mathcal{G}_k|}\sum\limits_{i \in \mathcal{G}_k} \norm{a_i}^2 - \sum\limits_{k=1}^g \frac{B_{\mathcal{G}_k} |\mathcal{G}_k|^2}{n^2} \norm{ \frac{1}{|\mathcal{G}_k|}\sum\limits_{i \in \mathcal{G}_k} a_i }^2.
  \end{multline}
\end{lemma}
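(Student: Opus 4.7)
The plan is to decompose the global compression error into a sum of per-group errors, exploit between-group independence to kill the cross terms, and then apply the assumed within-group AB inequality term by term.

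First I would introduce, for each group $\mathcal{G}_k$, the random vector
\[
Y_k \;\eqdef\; \frac{1}{|\mathcal{G}_k|}\sum_{i\in\mathcal{G}_k}\bigl(\cC_i(a_i)-a_i\bigr),
\]
and observe that the quantity whose expected squared norm we want to bound rewrites as a weighted sum over groups:
\[
\frac{1}{n}\sum_{i=1}^n\cC_i(a_i)-\frac{1}{n}\sum_{i=1}^n a_i
\;=\;\sum_{k=1}^g \frac{|\mathcal{G}_k|}{n}\,Y_k.
\]
By unbiasedness (Assumption~\ref{eq:unbiased_compressors}), $\Exp{Y_k}=0$ for every $k$. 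By Assumption~\ref{ass:group_ab}, for $k\neq l$ the random vectors $Y_k$ and $Y_l$ are independent since they are measurable functions of disjoint families of mutually independent compressors.

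Next I would expand the squared norm of the sum. Using independence and zero mean,
\[
\Exp{\langle Y_k, Y_l\rangle}\;=\;\langle \Exp{Y_k},\Exp{Y_l}\rangle\;=\;0 \qquad \text{for } k\neq l,
\]
so all cross terms drop out and we are left with
\[
\Exp{\norm{\tfrac{1}{n}\sum_{i=1}^n\cC_i(a_i)-\tfrac{1}{n}\sum_{i=1}^n a_i}^2}
\;=\;\sum_{k=1}^g\frac{|\mathcal{G}_k|^2}{n^2}\,\Exp{\norm{Y_k}^2}.
\]

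Finally I would apply the within-group AB inequality (Assumption~\ref{ass:AB} with constants $A_{\mathcal{G}_k},B_{\mathcal{G}_k}$) to each $\Exp{\norm{Y_k}^2}$, which by construction is exactly the left-hand side of the AB inequality for the compressors indexed by $\mathcal{G}_k$ applied to the inputs $\{a_i\}_{i\in\mathcal{G}_k}$:
\[
\Exp{\norm{Y_k}^2}\;\leq\; A_{\mathcal{G}_k}\Bigl(\tfrac{1}{|\mathcal{G}_k|}\sum_{i\in\mathcal{G}_k}\norm{a_i}^2\Bigr) - B_{\mathcal{G}_k}\norm{\tfrac{1}{|\mathcal{G}_k|}\sum_{i\in\mathcal{G}_k}a_i}^2.
\]
Plugging this into the previous display yields exactly~\eqref{eq:group_ab_inequality}.

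I do not expect any real obstacles here: the argument is a clean ``between-group independence kills cross terms, within-group AB bounds the diagonal terms'' computation. The only point one has to be a bit careful about is justifying $\Exp{\langle Y_k,Y_l\rangle}=0$; it uses both unbiasedness (to get mean zero) and the between-group independence stipulated in Assumption~\ref{ass:group_ab} (neither alone would suffice, since unbiasedness gives zero mean but not zero covariance, and independence alone does not force the covariance of non-centered quantities to vanish).
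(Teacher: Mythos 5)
Your proof is correct and follows essentially the same route as the paper's: decompose the error into group-wise sums, use unbiasedness together with between-group independence to eliminate the cross terms, and apply the within-group AB inequality to each remaining diagonal term. Your explicit remark on why $\Exp{\langle Y_k,Y_l\rangle}=0$ requires both hypotheses is a nice touch but does not change the substance.
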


\begin{proof}
  \begin{align*}
    &\Exp{\norm{\frac{1}{n}\sum\limits_{i=1}^n\cC_i(a_i) - \frac{1}{n}\sum\limits_{i=1}^n a_i}^2} \\
    &= \frac{1}{n^2}\sum\limits_{k=1}^g\Exp{\norm{\sum\limits_{i \in \mathcal{G}_k}\cC_i(a_i) - \sum\limits_{i \in \mathcal{G}_k} a_i}^2} \\
    &\quad + \frac{1}{n^2}\sum\limits_{k \neq p}\Exp{\inp{\sum\limits_{i \in \mathcal{G}_k}\cC_i(a_i) - \sum\limits_{i \in \mathcal{G}_k} a_i}{\sum\limits_{i \in \mathcal{G}_p}\cC_i(a_i) - \sum\limits_{i \in \mathcal{G}_p} a_i}}.
  \end{align*}
  Due to independence and unbiasedness, the last term vanishes, and, using AB inequality, we get
  \begin{align*}
    &\Exp{\norm{\frac{1}{n}\sum\limits_{i=1}^n\cC_i(a_i) - \frac{1}{n}\sum\limits_{i=1}^n a_i}^2} \\
    &= \frac{1}{n^2}\sum\limits_{k=1}^g\Exp{\norm{\sum\limits_{i \in \mathcal{G}_k}\cC_i(a_i) - \sum\limits_{i \in \mathcal{G}_k} a_i}^2} \\
    &= \sum\limits_{k=1}^g\frac{|\mathcal{G}_k|^2}{n^2}\Exp{\norm{\frac{1}{|\mathcal{G}_k|}\sum\limits_{i \in \mathcal{G}_k}\cC_i(a_i) - \frac{1}{|\mathcal{G}_k|}\sum\limits_{i \in \mathcal{G}_k} a_i}^2} \\
    &\leq \sum\limits_{k=1}^g\frac{|\mathcal{G}_k|^2}{n^2} \left(A_{\mathcal{G}_k} \left(\frac{1}{|\mathcal{G}_k|}\sum\limits_{i \in \mathcal{G}_k} \norm{a_i}^2\right) - B_{\mathcal{G}_k} \norm{ \frac{1}{|\mathcal{G}_k|}\sum\limits_{i \in \mathcal{G}_k} a_i }^2\right).
  \end{align*}
  From this we can get the result.
\end{proof}

Next, we prove analogous lemma to Lemma~\ref{lemma:upper_bound_variance}.

\newcommand*{\groupconstant}{\left(\sum\limits_{k=1}^g\frac{\left(A_{\mathcal{G}_k} - B_{\mathcal{G}_k}\right)|\mathcal{G}_k|^2}{n^2} 
\left(L_{+}^{\mathcal{G}_k}\right)^2 + \sum\limits_{k=1}^g \frac{B_{\mathcal{G}_k} |\mathcal{G}_k|^2}{n^2} \left(L_{\pm}^{\mathcal{G}_k}\right)^2\right)}

\begin{lemma}
  \label{lemma:group_upper_bound_variance}
  Let us consider $g^{t + 1}$ from Line \ref{alg:gradient_estimate_definition} of Algorithm~\ref{alg:marina} and 
  assume, that Assumptions~\ref{eq:unbiased_compressors} and \ref{ass:group_ab} hold. Moreover, if Assumption~\ref{as:L_+} holds for every group $\mathcal{G}_k,$ for $k \in \{1, \cdots, g\},$ then
  \begin{align}
    &\ExpCond{\norm{g^{t+1}-\nabla f(x^{t+1})}^2}{x^{t+1}} \notag\\ 
    &\leq (1 - p)\groupconstant\norm{x^{t+1} - x^{t}}^2 \notag\\
    &\quad + (1 - p)\norm{g^t - \nabla f(x^{t})}^2.
  \end{align}
\end{lemma}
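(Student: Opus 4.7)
\noindent
The plan is to mirror the proof of Lemma~\ref{lemma:upper_bound_variance} almost verbatim, replacing the single AB inequality with its group counterpart (\ref{eq:group_ab_inequality}) and then invoking the per-group smoothness constants $L_+^{\mathcal{G}_k}$ and the per-group Hessian variances $L_\pm^{\mathcal{G}_k}$.

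First I would condition on $x^{t+1}$ and split according to the Bernoulli variable $\theta_t$. The $\theta_t = 1$ branch contributes zero and the $\theta_t = 0$ branch gives
\begin{align*}
\ExpCond{\norm{g^{t+1} - \nabla f(x^{t+1})}^2}{x^{t+1}}
&= (1-p)\,\ExpCond{\norm{\tfrac{1}{n}\textstyle\sum_i \cC_i(d_i) - \tfrac{1}{n}\sum_i d_i}^2}{x^{t+1}} \\
&\quad + (1-p)\,\norm{g^t - \nabla f(x^t)}^2,
\end{align*}
where $d_i \eqdef \nabla f_i(x^{t+1}) - \nabla f_i(x^t)$, using unbiasedness (Assumption~\ref{eq:unbiased_compressors}) to cancel the cross term exactly as in Lemma~\ref{lemma:upper_bound_variance}.

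Next I would apply the group AB inequality (\ref{eq:group_ab_inequality}) to the first term with $a_i = d_i$. This bounds that expectation by
\[
\sum_{k=1}^g \frac{A_{\mathcal{G}_k}|\mathcal{G}_k|^2}{n^2}\Bigl(\tfrac{1}{|\mathcal{G}_k|}\sum_{i\in\mathcal{G}_k}\norm{d_i}^2\Bigr) - \sum_{k=1}^g \frac{B_{\mathcal{G}_k}|\mathcal{G}_k|^2}{n^2}\norm{\tfrac{1}{|\mathcal{G}_k|}\sum_{i\in\mathcal{G}_k} d_i}^2.
\]
Since $A_{\mathcal{G}_k}\geq B_{\mathcal{G}_k}$, I would rewrite each group's contribution using the identity $A\,S_+ - B\,S_- = (A-B)S_+ + B(S_+ - S_-)$, where $S_+$ is the average squared norm and $S_-$ is the squared norm of the average over group $\mathcal{G}_k$. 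This is the same algebraic trick used in the proof of Theorem~\ref{theorem:AB} and is the step that makes the Hessian variance appear.

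Finally, I would apply smoothness group by group: $S_+ \leq (L_+^{\mathcal{G}_k})^2\norm{x^{t+1}-x^t}^2$ from Assumption~\ref{as:L_+} restricted to $\mathcal{G}_k$, and $S_+ - S_- \leq (L_\pm^{\mathcal{G}_k})^2\norm{x^{t+1}-x^t}^2$ by the group analogue of Definition~\ref{ass:HV} (which follows from the definitions of $\cL_+^{\mathcal{G}_k}$ and $\cL_-^{\mathcal{G}_k}$ given in the text). Summing over $k$ and adding the carried-over $(1-p)\norm{g^t - \nabla f(x^t)}^2$ term yields the claim. No step should be a genuine obstacle; the only thing requiring care is the book-keeping that the across-group independence hypothesis in Assumption~\ref{ass:group_ab} is exactly what allows the cross-group covariance terms to vanish so that the group AB inequality (\ref{eq:group_ab_inequality}) applies with the stated constants.
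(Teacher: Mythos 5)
Your proposal is correct and follows essentially the same route as the paper's own proof: condition on $\theta_t$ and use unbiasedness to isolate the $(1-p)$-weighted compression-variance term, apply the group AB inequality (\ref{eq:group_ab_inequality}) to the gradient differences, rewrite each group's contribution via $A S_+ - B S_- = (A-B)S_+ + B(S_+ - S_-)$, and finish with the per-group smoothness and Hessian-variance bounds. No gaps.
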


\begin{proof}
  In the view of definition of $g^{t + 1}$, we get
  \begin{align*}
    &\ExpCond{\norm{g^{t+1}-\nabla f(x^{t+1})}^2}{x^{t+1}} \\
    &= (1-p) \ExpCond{\norm{g^t + \frac{1}{n}\sum\limits_{i=1}^n \cC_i\left(\nabla f_{i}(x^{t+1}) - \nabla f_{i}(x^t)\right) - \nabla f(x^{t+1})}^2}{x^{t+1}}\\
    &= (1-p)\ExpCond{\norm{\frac{1}{n}\sum\limits_{i=1}^n \cC_i\left(\nabla f_{i}(x^{t+1}) - \nabla f_{i}(x^t)\right) - \nabla f(x^{t+1}) + \nabla f(x^t)}^2}{x^{t+1}}\\
    &\quad + (1-p)\norm{g^t - \nabla f(x^t)}^2.\\
  \end{align*}
  In the last inequality we used unbiasedness of $\cC_i.$ Using (\ref{eq:group_ab_inequality}), we get
  \begin{align*}
    &\ExpCond{\norm{g^{t+1}-\nabla f(x^{t+1})}^2}{x^{t+1}} \\
    &\leq (1-p)\ExpCond{\norm{\frac{1}{n}\sum\limits_{i=1}^n \cC_i\left(\nabla f_{i}(x^{t+1}) - \nabla f_{i}(x^t)\right) - \nabla f(x^{t+1}) + \nabla f(x^t)}^2}{x^{t+1}}\\
    &\quad + (1-p)\norm{g^t - \nabla f(x^t)}^2.\\
    &\leq (1 - p)\Bigg(\sum\limits_{k=1}^g\frac{A_{\mathcal{G}_k}|\mathcal{G}_k|^2}{n^2} \frac{1}{|\mathcal{G}_k|}\sum\limits_{i \in \mathcal{G}_k} \norm{\nabla f_{i}(x^{t+1}) - \nabla f_{i}(x^t)}^2 \\
    &\quad - \sum\limits_{k=1}^g \frac{B_{\mathcal{G}_k} |\mathcal{G}_k|^2}{n^2} \norm{ \frac{1}{|\mathcal{G}_k|}\sum\limits_{i \in \mathcal{G}_k} \nabla f_{i}(x^{t+1}) - \nabla f_{i}(x^t) }^2\Bigg)\\
    &\quad + (1 - p)\norm{g^t - \nabla f(x^{t})}^2\\
    &= (1 - p)\Bigg(\sum\limits_{k=1}^g\frac{\left(A_{\mathcal{G}_k} - B_{\mathcal{G}_k}\right)|\mathcal{G}_k|^2}{n^2} 
    \cL_{+}^{\mathcal{G}_k}(x^{t+1}, x^{t}) + \sum\limits_{k=1}^g \frac{B_{\mathcal{G}_k} |\mathcal{G}_k|^2}{n^2} \cL_{\pm}^{\mathcal{G}_k}(x^{t+1}, x^{t})\Bigg)\\
    &\quad + (1 - p)\norm{g^t - \nabla f(x^{t})}^2\\
    &\leq (1 - p)\Bigg(\sum\limits_{k=1}^g\frac{\left(A_{\mathcal{G}_k} - B_{\mathcal{G}_k}\right)|\mathcal{G}_k|^2}{n^2} 
    \left(L_{+}^{\mathcal{G}_k}\right)^2 + \sum\limits_{k=1}^g \frac{B_{\mathcal{G}_k} |\mathcal{G}_k|^2}{n^2} \left(L_{\pm}^{\mathcal{G}_k}\right)^2\Bigg)\norm{x^{t+1} - x^{t}}^2\\
    &\quad + (1 - p)\norm{g^t - \nabla f(x^{t})}^2.\\
  \end{align*}
\end{proof}

Let us define 
$$\widehat{L}^2_{\mathcal{G}} \eqdef \groupconstant.$$

\begin{theorem}
  \label{theorem:AB_group}
  Let Assumptions~\ref{ass:diff}, \ref{as:L_+}, \ref{eq:unbiased_compressors}  and
\ref{ass:group_ab} be satisfied. Let the stepsize in \algname{MARINA} be chosen as
  \begin{align*}
    \gamma \leq \left(L_- + \sqrt{\frac{1-p}{p}\widehat{L}^2_{\mathcal{G}}}\right)^{-1},
  \end{align*}
  then after $T$ iterations, \algname{MARINA} finds point $\hat{x}^T$ for which
  $
  \Exp{\norm{ \nabla f(\hat x^T)}^2} \leq \frac{2 \Delta^0}{\gamma T}.
  $
\end{theorem}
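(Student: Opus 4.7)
The plan is to mirror the proof of Theorem~\ref{theorem:AB} in Appendix~\ref{sec:proof_theorem_ab}, substituting the group variance bound from Lemma~\ref{lemma:group_upper_bound_variance} in place of Lemma~\ref{lemma:upper_bound_variance}. The structural shape of the argument — a descent lemma combined with a recursion on the gradient-estimator error, glued together into a Lyapunov potential — carries over verbatim; the only change is that the composite smoothness quantity $(A-B)L_+^2 + B L_\pm^2$ is replaced throughout by $\widehat{L}^2_{\mathcal{G}}$.

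First, I would invoke Lemma~\ref{lemma:page_lemma} applied to $x^{t+1} = x^t - \gamma g^t$ under Assumption~\ref{ass:diff} and the $L_-$-smoothness of $f$ (which follows from Assumption~\ref{as:L_+} and Lemma~\ref{lem:08hhdf}), yielding
\[
f(x^{t+1}) \leq f(x^t) - \tfrac{\gamma}{2}\|\nabla f(x^t)\|^2 - \bigl(\tfrac{1}{2\gamma} - \tfrac{L_-}{2}\bigr)\|x^{t+1}-x^t\|^2 + \tfrac{\gamma}{2}\|g^t - \nabla f(x^t)\|^2.
\]
Next, I would apply Lemma~\ref{lemma:group_upper_bound_variance}, which gives
\[
\mathbb{E}\!\left[\|g^{t+1} - \nabla f(x^{t+1})\|^2 \mid x^{t+1}\right] \leq (1-p)\widehat{L}^2_{\mathcal{G}}\,\|x^{t+1}-x^t\|^2 + (1-p)\|g^t - \nabla f(x^t)\|^2.
\]

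Then I would define the Lyapunov function $\Phi^t \eqdef f(x^t) - f^{\inf} + \tfrac{\gamma}{2p}\|g^t - \nabla f(x^t)\|^2$ and combine the two inequalities (after taking expectation and adding $\tfrac{\gamma}{2p}$ times the variance recursion) to obtain
\[
\mathbb{E}[\Phi^{t+1}] \leq \mathbb{E}[\Phi^t] - \tfrac{\gamma}{2}\mathbb{E}[\|\nabla f(x^t)\|^2] + \Bigl(\tfrac{\gamma(1-p)\widehat{L}^2_{\mathcal{G}}}{2p} - \tfrac{1}{2\gamma} + \tfrac{L_-}{2}\Bigr)\mathbb{E}[\|x^{t+1}-x^t\|^2].
\]
The hypothesis $\gamma \leq (L_- + \sqrt{\tfrac{1-p}{p}\widehat{L}^2_{\mathcal{G}}})^{-1}$, together with Lemma~\ref{lemma:stepsize_page} applied with $a = \tfrac{(1-p)\widehat{L}^2_{\mathcal{G}}}{p}$ and $b = L_-$, forces the bracketed coefficient to be nonpositive, so the last term drops out.

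Finally, telescoping the one-step descent $\mathbb{E}[\Phi^{t+1}] \leq \mathbb{E}[\Phi^t] - \tfrac{\gamma}{2}\mathbb{E}[\|\nabla f(x^t)\|^2]$ for $t = 0, 1, \ldots, T-1$, using $g^0 = \nabla f(x^0)$ (so $\Phi^0 = \Delta^0$) and $\Phi^T \geq 0$, gives
\[
\frac{1}{T}\sum_{t=0}^{T-1}\mathbb{E}[\|\nabla f(x^t)\|^2] \leq \frac{2\Delta^0}{\gamma T},
\]
and the tower property together with the uniform sampling of $\hat{x}^T$ from $\{x^t\}_{t=0}^{T-1}$ converts the left-hand side into $\mathbb{E}[\|\nabla f(\hat{x}^T)\|^2]$, as required. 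There is no genuine obstacle here beyond careful bookkeeping of the group constants; the nontrivial work was already done in establishing Lemma~\ref{lemma:group_upper_bound_variance}, after which this theorem is a mechanical re-run of the proof of Theorem~\ref{theorem:AB}.
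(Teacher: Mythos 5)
Your proposal is correct and coincides with the paper's own treatment: the paper explicitly states that the proof of Theorem~\ref{theorem:AB_group} repeats the argument of Appendix~\ref{sec:proof_theorem_ab} with $\widehat{L}^2$ replaced by $\widehat{L}^2_{\mathcal{G}}$, i.e., with Lemma~\ref{lemma:group_upper_bound_variance} substituted for Lemma~\ref{lemma:upper_bound_variance}, which is precisely what you do. The Lyapunov function, the application of Lemma~\ref{lemma:stepsize_page} with $a = \tfrac{(1-p)\widehat{L}^2_{\mathcal{G}}}{p}$ and $b = L_-$, and the telescoping step all match the paper's argument.
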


\begin{theorem}
  Let Assumptions~\ref{ass:diff}, \ref{as:L_+}, \ref{eq:unbiased_compressors}, 
  \ref{ass:pl_condition} and \ref{ass:group_ab} be satisfied and
  \begin{align*}
    \gamma \leq \min\left\{\left(L_- + \sqrt{\frac{2\left(1-p\right)}{p}\widehat{L}^2_{\mathcal{G}}}\right)^{-1}, \frac{p}{2\mu}\right\},
  \end{align*}
  then for ${x}^T$ from \algname{MARINA} algorithm the following inequality holds:
  $$
  \Exp{f(x^T) - f^{\star}} \leq \left(1 - \gamma \mu\right)^T \Delta^0.
  $$
\end{theorem}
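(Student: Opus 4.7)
The plan is to mimic the proof of Theorem~\ref{theorem:AB_PL} (given in Appendix~\ref{sec:proof_theorem_ab_pl}) essentially verbatim, with the only change being that the variance bound on the gradient estimator now comes from Lemma~\ref{lemma:group_upper_bound_variance} instead of Lemma~\ref{lemma:upper_bound_variance}. In other words, wherever the quantity $\widehat{L}^2 = (A-B)L_+^2 + B L_\pm^2$ appeared in the proof of Theorem~\ref{theorem:AB_PL}, it is now replaced by $\widehat{L}^2_{\mathcal{G}}$. Since the two lemmas produce identical-looking recursions (each yielding a term $(1-p)\widehat{L}^2\norm{x^{t+1}-x^t}^2 + (1-p)\norm{g^t-\nabla f(x^t)}^2$), this substitution is mechanical.

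In more detail, I would introduce the Lyapunov function
\[
  \Phi^t \;\eqdef\; f(x^t) - f^{\inf} + \frac{\gamma}{p}\norm{g^t - \nabla f(x^t)}^2,
\]
apply the PAGE descent lemma (Lemma~\ref{lemma:page_lemma}) to bound $\Exp{f(x^{t+1}) - f^{\inf}}$, and then add $\frac{\gamma}{p}$ times the variance recursion from Lemma~\ref{lemma:group_upper_bound_variance}. After collecting terms, one arrives at
\[
  \Exp{\Phi^{t+1}} \;\leq\; \Exp{(1-\gamma\mu)(f(x^t)-f^{\inf})} + \Big(\tfrac{\gamma}{2} + \tfrac{\gamma(1-p)}{p}\Big)\Exp{\norm{g^t-\nabla f(x^t)}^2} + \Big(\tfrac{\gamma(1-p)}{p}\widehat{L}^2_{\mathcal{G}} - \tfrac{1}{2\gamma} + \tfrac{L_-}{2}\Big)\Exp{\norm{x^{t+1}-x^t}^2},
\]
where the $(1-\gamma\mu)$ factor in front of $f(x^t)-f^{\inf}$ comes from applying the PŁ inequality (\ref{ass:pl}) to $-\tfrac{\gamma}{2}\Exp{\norm{\nabla f(x^t)}^2}$.

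The verification that this reduces to $\Exp{\Phi^{t+1}} \leq (1-\gamma\mu)\Exp{\Phi^t}$ requires two stepsize inequalities: (i) the curvature term is nonpositive, i.e. $\tfrac{2\gamma^2(1-p)\widehat{L}^2_{\mathcal{G}}}{p} + \gamma L_- \leq 1$, which follows from the first branch of the stepsize bound via Lemma~\ref{lemma:stepsize_page} with $a = 2(1-p)\widehat{L}^2_{\mathcal{G}}/p$ and $b = L_-$; and (ii) the estimator term satisfies $\tfrac{\gamma}{2} + \tfrac{\gamma(1-p)}{p} \leq (1-\gamma\mu)\tfrac{\gamma}{p}$, which simplifies to $\gamma \leq \tfrac{p}{2\mu}$, i.e. the second branch of the stepsize bound. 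Both hold by assumption.

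Once the one-step contraction $\Exp{\Phi^{t+1}} \leq (1-\gamma\mu)\Exp{\Phi^t}$ is established, unrolling over $t=0,\dots,T-1$ and using $g^0 = \nabla f(x^0)$ (so $\Phi^0 = f(x^0) - f^{\inf} = \Delta^0$) together with $\Exp{f(x^T)-f^{\star}} \leq \Exp{\Phi^T}$ yields the claimed linear rate. The only nontrivial step is invoking Lemma~\ref{lemma:group_upper_bound_variance}, which itself rests on the independence-across-groups part of Assumption~\ref{ass:group_ab} to kill the cross-group covariance terms; everything else is algebraic bookkeeping identical to Theorem~\ref{theorem:AB_PL}. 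I expect no real obstacle beyond carefully matching the stepsize constants.
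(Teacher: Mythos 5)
Your proposal is correct and matches the paper's own treatment exactly: the paper explicitly omits this proof, stating that it repeats the argument of Theorem~\ref{theorem:AB_PL} with $\widehat{L}^2$ replaced by $\widehat{L}^2_{\mathcal{G}}$, which is precisely the substitution you carry out, including the two stepsize verifications via Lemma~\ref{lemma:stepsize_page} and the condition $\gamma \leq \nicefrac{p}{2\mu}$. Nothing is missing.
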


We omit proofs of this theorems as they repeat proofs from Appendix~\ref{sec:proof_theorem_ab} and \ref{sec:proof_theorem_ab_pl};
the only difference is that we have to take $\widehat{L}^2 = \widehat{L}^2_{\mathcal{G}}.$

Let us assume that $n \leq d$, all groups have equal sizes $|\mathcal{G}_k| = G$ and constants $L_{\pm}^{\mathcal{G}_k} = L_{\pm}^{G},$ for all $k \in \{1, \dots, g\},$ and in each group we use Perm$K$ compressor from Definition~\ref{def:PermK-1}, thus communication complexity predicted by our theory is the following:
\begin{align*}
  N_{\textnormal{Perm$K$}}^G(p) \eqdef \frac{\Delta_0}{\varepsilon}\left(pd + (1 - p)\frac{d}{G}\right)\left(L_- + \sqrt{\frac{(1-p)G}{pn}}L_{\pm}^{G}\right).
\end{align*}

Using the same reasoning as in Lemma~\ref{lemma:optimal_parameters_of_methods}, we can take $p = 1$ or $p = 1 / G$ to get that
\begin{align}
  \label{eq:optimal_p_in_groups_complexity}
  \inf_{p \in (0, 1]} N_{\textnormal{Perm$K$}}^G(p) = \cO\left(\frac{2\Delta_0}{\varepsilon}\min\left\{d L_-, \frac{d}{G}L_- + \frac{d}{\sqrt{n}}L_{\pm}^{G}\right\}\right).
\end{align}

For the case when we have one group, we restore the communication complexity
from Lemma~\ref{lemma:optimal_parameters_of_methods}.

Comparing (\ref{eq:appendix:complexity_bounds:perm_k}) 
with (\ref{eq:optimal_p_in_groups_complexity}), we see that $dL_-/n$ from (\ref{eq:appendix:complexity_bounds:perm_k})
is always better than $dL_- / G$ from (\ref{eq:optimal_p_in_groups_complexity}); 
however; if $dL_{\pm} / \sqrt{n}$ is a bottleneck 
and $L_{\pm}^{G}$ is small, 
then communication complexity (\ref{eq:optimal_p_in_groups_complexity}) can be better.

Let us consider an example of a quadratic optimization task with two groups, wherein one group, 
all matrices are equal to $\mA$, 
and in another one, all matrices are equal to $\mB$, $\mA \neq \mB$, $\mA = \mA^\top \succcurlyeq 0$ and $\mB = \mB^\top \succcurlyeq 0,$
then $G = n / 2$, $L_{\pm}^{G} = 0,$ and $L_{\pm} > 0$ (see Example~\ref{ex:quadratic_functions}).
Hence, we get that
\begin{align*}
  \inf_{p \in (0, 1]} N_{\textnormal{Perm$K$}}^G(p) = \cO\left(\frac{\Delta_0d}{\varepsilon n}L_-\right).
\end{align*}
This bound is better than (\ref{eq:appendix:complexity_bounds:perm_k}) by at least the factor $1 + \nicefrac{\sqrt{n}L_{\pm}}{L_-}$.

\end{document}